\def \ze {\mathbf{0}}
\def \R {\mathbb{R}}
\def \K {\mathcal{K}}
\def \C {\mathcal{C}}
\def \B {\mathcal{B}}
\def \SS {\mathcal{S}}
\def \dd {\mathbf{d}}
\def \E {\mathbb{E}}
\def \x {\mathbf{x}}
\def \y {\mathbf{y}}
\def \g {\mathbf{g}}
\def \z {\mathbf{z}}
\def \ccc {\mathbf{c}}
\def \uu {\mathbf{u}}
\def \vv {\mathbf{v}}
\def \w {\mathbf{w}}
\DeclareMathOperator*{\prob}{Pr}
\DeclareMathOperator*{\ii}{in}
\DeclareMathOperator*{\oo}{out}
\DeclareMathOperator*{\argmin}{argmin}
\DeclareMathOperator*{\argmax}{argmax}
\newtheorem{thm}{Theorem}
\newtheorem{lem}{Lemma}
\newtheorem{myDef}{Definition}
\newtheorem{assum}{Assumption}
\newtheorem{cor}{Corollary}
\newcommand\figcaption{\def\@captype{figure}\caption}
\newcommand\tabcaption{\def\@captype{table}\caption}
\begin{document}

\title{Projection-free Distributed Online Learning with Sublinear Communication Complexity}

\author{\name Yuanyu Wan \email wanyy@lamda.nju.edu.cn\\
       \name Guanghui Wang \email wanggh@lamda.nju.edu.cn\\
       \addr National Key Laboratory for Novel Software Technology, Nanjing University, Nanjing 210023, China\\
       \name Wei-Wei Tu \email tuweiwei@4paradigm.com\\
       \addr 4Paradigm Inc., Beijing 100000, China\\
       \name Lijun Zhang \email zhanglj@lamda.nju.edu.cn \\
       \addr National Key Laboratory for Novel Software Technology, Nanjing University, Nanjing 210023, China}

\editor{Csaba Szepesvari}

\maketitle

\begin{abstract}
To deal with complicated constraints via locally light computations in distributed online learning, a recent study has presented a projection-free algorithm called distributed online conditional gradient (D-OCG), and achieved an $O(T^{3/4})$ regret bound for convex losses, where $T$ is the number of total rounds. However, it requires $T$ communication rounds, and cannot utilize the strong convexity of losses. In this paper, we propose an improved variant of D-OCG, namely D-BOCG, which can attain the same $O(T^{3/4})$ regret bound with only $O(\sqrt{T})$ communication rounds for convex losses, and a better regret bound of $O(T^{2/3}(\log T)^{1/3})$ with fewer $O(T^{1/3}(\log T)^{2/3})$ communication rounds for strongly convex losses. The key idea is to adopt a delayed update mechanism that reduces the communication complexity, and redefine the surrogate loss function in D-OCG for exploiting the strong convexity. Furthermore, we provide lower bounds to demonstrate that the $O(\sqrt{T})$ communication rounds required by D-BOCG are optimal (in terms of $T$) for achieving the $O(T^{3/4})$ regret with convex losses, and the $O(T^{1/3}(\log T)^{2/3})$ communication rounds required by D-BOCG are near-optimal (in terms of $T$) for achieving the $O(T^{2/3}(\log T)^{1/3})$ regret with strongly convex losses up to polylogarithmic factors. Finally, to handle the more challenging bandit setting, in which only the loss value is available, we incorporate the classical one-point gradient estimator into D-BOCG, and obtain similar theoretical guarantees.

\end{abstract}
\begin{keywords}
  Projection-free, Distributed Online Learning, Communication Complexity, Conditional Gradient
\end{keywords}

\section{Introduction}
Conditional gradient (CG) \citep{FW-56,Revist_FW} (also known as Frank-Wolfe) is a simple yet efficient offline algorithm for solving high-dimensional problems with complicated constraints. To find a feasible solution, instead of performing the time-consuming projection step, CG utilizes the linear optimization step, which can be carried out much more efficiently. For example, in the matrix completion problem \citep{Hazan2012}, where the feasible set consists of all matrices with bounded trace norm, the projection step needs to compute the singular value decomposition (SVD) of a matrix. In contrast, the linear optimization step in CG only requires computing the top singular vector pair of a matrix, which is at least an order of magnitude faster than the SVD. Due to the emergence of large-scale problems, online conditional gradient (OCG) \citep{Hazan2012,Hazan2016} (also known as online Frank-Wolfe) was proposed for online convex optimization (OCO)---a multi-round game between a learner and an adversary \citep{Zinkevich2003}, and achieved a regret bound of $O(T^{3/4})$ for convex losses, where $T$ is the number of total rounds. In each round, OCG updates the learner by utilizing one linear optimization step to minimize a surrogate loss function. Different from CG that requires all data related to the objective function are given beforehand, OCG only requires a single data point per round.

Recently, \citet{wenpeng17} further proposed D-OCG by extending OCG into a more practical scenario---distributed OCO over a network. It is well motivated by many distributed applications such as multi-agent coordination and distributed tracking in sensor networks \citep{Distrbuted02,Xiaolin07,Angelia09,DADO2011,Yang19survey}. Specifically, by defining the network as an undirected graph, each node of the graph represents a local learner, and can only communicate with its neighbors. The key idea of D-OCG is to maintain OCG for each local learner, and update it according to the local gradient as well as those received from its neighbors in each round. Compared with projection-based distributed algorithms \citep{D-OGD,D-ODA,DAOL_TKDE}, D-OCG significantly reduces the time cost for solving high-dimensional problems with complicated constraints, because it only utilizes one linear optimization step for each update of local learners. Moreover, D-OCG is more scalable than OCG, since it can utilize many locally light computation resources to handle large-scale problems.

However, there exist two interesting questions about D-OCG. First, the local learners of D-OCG communicate with their neighbors to share the local gradients in each round, so it requires $T$ communication rounds in total. Since the communication overhead is often the performance bottleneck in distributed systems, it is natural to ask whether the communication complexity of D-OCG can be reduced without increasing its regret. Second, similar to OCG in the standard OCO, \citet{wenpeng17} have proved that D-OCG in the distributed OCO achieves an $O(T^{3/4})$ regret bound for convex losses. Note that recent studies \citep{Garber_SOFW,SC_OFW} in the standard OCO have proposed variants of OCG to attain better regret for strongly convex losses. It is thus natural to ask whether the strong convexity can also be utilized to improve the regret of D-OCG in the distributed OCO.

In this paper, we provide affirmative answers for these two questions by developing an improved variant of D-OCG, namely distributed block online conditional gradient (D-BOCG), which can attain the same $O(T^{3/4})$ regret bound with only $O(\sqrt{T})$ communication rounds for convex losses, and a better regret bound of $O(T^{2/3}(\log T)^{1/3})$ with fewer $O(T^{1/3}(\log T)^{2/3})$ communication rounds for strongly convex losses. Compared with the original D-OCG, there exist three critical changes.
\begin{itemize}
\item To further utilize the strong convexity of losses, a more general surrogate loss function is introduced in our D-BOCG, which is inspired by the surrogate loss function used in strongly convex variants of OCG \citep{Garber_SOFW,SC_OFW} and is able to cover that used in D-OCG.
\item To reduce the communication complexity, our D-BOCG adopts a delayed update mechanism, which divides the total $T$ rounds into a smaller number of equally-sized blocks, and only updates the local learners for each block. In this way, the local learners only need to communicate with their neighbors once for each block, and the total number of communication rounds is reduced from $T$ to the number of blocks.
\item According to the delayed update mechanism, the number of updates in our D-BOCG could be much smaller than that in D-OCG, which brings a new challenge, i.e., only performing 1 linear optimization step as D-OCG for each update will increase the $O(T^{3/4})$ regret of D-OCG for convex losses. To address this problem, we perform iterative linear optimization steps for each update. Specifically, the number of linear optimization steps for each update is set to be the same as the block size, which ensures that the total number of linear optimization steps required by our D-BOCG is the same as that required by D-OCG.
\end{itemize}
Note that the delayed update mechanism and the iterative linear optimization steps in the last two changes are borrowed from \citet{Garber19}, which employed them to improve projection-free bandit convex optimization. In contrast, we apply them to the distributed setting considered here.

Furthermore, to complement theoretical guarantees of our D-BOCG, for any distributed online algorithm with $C$ communication rounds, we provide an $\Omega(T/\sqrt{C})$ lower regret bound with convex losses, and an $\Omega(T/C)$ lower regret bound with strongly convex losses, respectively. These lower bounds imply that the $O(\sqrt{T})$ communication rounds required by D-BOCG are optimal (in terms of $T$) for achieving the $O(T^{3/4})$ regret with convex losses, and the $O(T^{1/3}(\log T)^{2/3})$ communication rounds required by D-BOCG are near-optimal (in terms of $T$) for achieving the $O(T^{2/3}(\log T)^{1/3})$ regret with strongly convex losses up to polylogarithmic factors. To the best of our knowledge, we are the first to study lower regret bounds for distributed online algorithms with limited communication rounds. Finally, to handle the more challenging bandit setting, we propose  distributed block bandit conditional gradient (D-BBCG) by combining with the classical one-point gradient estimator \citep{OBO05}, which can approximate the gradient with a single loss value. Our theoretical analysis first reveals that in expectation, D-BBCG can also attain a regret bound of $O(T^{3/4})$ with $O(\sqrt{T})$ communication rounds for convex losses, and a regret bound of $O(T^{2/3}(\log T)^{1/3})$ with $O(T^{1/3}(\log T)^{2/3})$ communication rounds for strongly convex losses. Moreover, for convex losses, we show that D-BBCG enjoys a high-probability regret bound of $O(T^{3/4}(\log T)^{1/2})$ with $O(\sqrt{T})$ communication rounds.

A preliminary version of this paper was presented at the 37th International Conference on Machine Learning in 2020 \citep{Wan-ICML-2020}. In this paper, we have significantly enriched the preliminary version by adding the following extensions.
\begin{itemize}
\item Different from \citet{Wan-ICML-2020} that only considered convex losses, we generalize D-BOCG and D-BBCG to further exploit the strong convexity, and establish improved theoretical guarantees for strongly convex losses.
\item Different from \citet{Wan-ICML-2020} that only studied upper regret bounds, we provide lower bounds on the regret of distributed online algorithms with limited communication rounds for convex losses as well as strongly convex losses. %
\item We provide more experiments including new results for distributed online binary classification, different networks topologies, different network sizes, and three additional datasets.
\end{itemize}

\section{Related Work}
In this section, we briefly review existing projection-free algorithms for OCO and the distributed OCO.
\subsection{Projection-free Algorithms for OCO}
OCO is a general framework for online learning, which covers a variety of problems such as online portfolio selection \citep{Blum1999,OPS_Hazan}, online routing \citep{Awerbuch04,Awerbuch2008}, online metric learning \citep{OML08,Tsagkatakis2011}, and learning with expert advice \citep{LEA97,Yoav97}. It is generally viewed as a repeated game between a leaner and an adversary. In each round $t$, the learner first chooses a decision $\x(t)$ from a convex decision set $\K\subseteq\mathbb{R}^d$. Then, the adversary reveals a convex function $f_t(\x):\K\to\mathbb{R}$, which incurs a loss $f_t(\x(t))$ to the learner. The goal of the learner is to minimize the regret with respect to any fixed optimal decision, which is defined as
\[
R_{T}=\sum_{t=1}^Tf_{t}(\x(t))-\min\limits_{\x\in\K}\sum_{t=1}^Tf_{t}(\x).
\]
OCG \citep{Hazan2012,Hazan2016} is the first projection-free algorithm for OCO, which enjoys a regret bound of $O(T^{3/4})$ for convex losses and updates as the following linear optimization step
\begin{equation}
\label{old_OCG}
\begin{split}
&\vv=\argmin_{\x\in\K}\nabla F_{t}(\x(t))^\top\x\\
&\x({t+1})=\x(t)+s_t(\vv-\x(t))
\end{split}
\end{equation}
where
\begin{equation}
\label{ocg_surrogate}
F_{t}(\x)=\eta\sum_{k=1}^{t-1}\nabla f_k(\x(k))^\top\x+\|\x-\x(1)\|_2^2
\end{equation}
is a surrogate loss function, and $s_t,\eta$ are two parameters. 

Recent studies have proposed to improve the regret of OCG by exploiting the additional curvature of loss functions including smoothness and strong convexity. For convex and smooth losses, \citet{Hazan20} proposed the online smooth projection free algorithm, and obtained an expected regret bound of $O(T^{2/3})$ as well as a high-probability regret bound of $O(T^{2/3}\log T)$. If the losses are $\alpha$-strongly convex, \citet{SC_OFW} proposed a strongly convex variant of OCG by redefining the surrogate loss function as
\begin{equation}
\label{sc_surrogate}
F_{t}(\x)=\sum_{k=1}^{t-1}\left(\nabla f_k(\x(k))^\top\x+\frac{\alpha}{2}\|\x-\x(k)\|_2^2\right)
\end{equation}
and using a line search rule to select the original parameter $s_t$ in (\ref{old_OCG}). This algorithm can enjoy a regret bound of $O(T^{2/3})$ for strongly convex losses, and a very similar algorithm was concurrently proposed by \citet{Garber_SOFW}. Moreover, when the decision set is polyhedral \citep{Garber16} or smooth \citep{kevy_smooth}, projection-free algorithms have been proposed to enjoy an $O(\sqrt{T})$ regret bound for convex losses and an $O(\log T)$ regret bound for strongly convex losses, respectively. If the decision set is strongly convex, \citet{SC_OFW} have proved that OCG can achieve an $O(T^{2/3})$ regret bound for convex losses, and the strongly convex variant of OCG can achieve an $O(\sqrt{T})$ regret bound for strongly convex losses.

Furthermore, OCG has been extended to handle the more challenging bandit setting, where only the loss value is available to the learner. Due to the lack of the gradient, \citet{PBCO2019} proposed a bandit variant of OCG by combining with the one-point gradient estimator \citep{OBO05}, which can approximate the gradient with a single loss value. For convex losses, the bandit variant of OCG achieves an expected regret bound of $O(T^{4/5})$, which is worse than the $O(T^{3/4})$ regret bound of OCG. Later, by dividing the total rounds into several equally-sized blocks and performing iterative linear optimization steps for each block, \citet{Garber19} improved the bandit variant of OCG, and reduced the expected regret bound for convex losses from $O(T^{4/5})$ to $O(T^{3/4})$. For strongly convex losses, \citet{Garber_SOFW} further developed a projection-free bandit algorithm that attains an expected regret bound of $O(T^{2/3}\log T)$.

We also note that \citet{chen18c} developed a projection-free algorithm for another interesting setting where the learner is allowed to access to the stochastic gradients.

\subsection{Projection-free Algorithms for the Distributed OCO}
According to previous studies \citep{D-ODA,wenpeng17}, distributed OCO is a variant of OCO over a network defined by an undirected graph $\mathcal{G}=(V,E)$, where $V=[n]$ is the node set and $E\subseteq V\times V$ is the edge set. Different from OCO where only 1 learner exists, in the distributed OCO, each node $i\in V$ is a local learner, and can only communicate with its immediate neighbors
\[N_i=\left\{j\in V|(i,j)\in E\right\}.\]
In each round $t$, each local learner $i\in V$ chooses a decision $\x_i(t)\in\K$, and then it receives a convex loss function $f_{t,i}(\x):\K\to\mathbb{R}$ chosen by the adversary. Moreover, the global loss function $f_t(\x)$ is defined as the sum of local loss functions \[f_t(\x)=\sum_{j=1}^nf_{t,j}(\x).\] The goal of each local learner $i\in V$ is to minimize the regret measured by the global loss with respect to the optimal fixed decision, which is defined as
\begin{equation*}
R_{T,i}=\sum_{t=1}^Tf_{t}(\x_i(t))-\min\limits_{\x\in\K}\sum_{t=1}^Tf_{t}(\x).
\end{equation*}
Since the local loss function $f_{t,i}(\x)$ is only available to the local learner $i$, to achieve this global goal for all local learners, it is necessary to utilize both their local gradients and those received from their neighbors.

Therefore, to make OCG distributed, \citet{wenpeng17} first introduced a non-negative weight matrix $P\in \mathbb{R}^{n\times n}$, and redefined the surrogate loss function $F_t(\x)$ in OCG as
\begin{equation}
\label{D-OCG-F}
F_{t,i}(\x)=\eta\z_i(t)^\top\x+\|\x-\x_1(1)\|_2^2
\end{equation}
for each local learner $i$ by replacing $\sum_{k=1}^{t-1}\nabla f_k(\x(k))$ with $\z_i(t)$, where $\z_i(1)=\mathbf{0}$ and
\begin{equation}
\label{D-OCG-Z}
\z_i(t+1)=\sum_{j\in N_i}P_{ij}\z_j(t)+\nabla f_{t,i}(\x_i(t)).
\end{equation}
Note that the matrix $P$ is also referred to as a gossip, consensus, or averaging matrix \citep{Gossip-averging,DSCO-gossip,ICML-Gossip}. Moreover, $\z_i(t)$ is a weighted sum of historical local gradients and those received from the neighbors, which could be viewed as an approximation for the sum of global gradients and is critical for minimizing the global regret.

Then, with a time-varying parameter $s_t$, they proposed D-OCG updating as follows
\begin{equation}
\label{D-OCG-code}
\begin{split}
&\textbf{for}\text{ each local learner $i\in V$ }\textbf{do}\\
&\quad\vv_i=\argmin_{\x\in\K}\nabla F_{t,i}(\x_i(t))^\top\x\\
&\quad\x_i({t+1})=\x_i(t)+s_t(\vv_i-\x_i(t))\\
&\textbf{end for}
\end{split}
\end{equation}
and established a regret bound of $O(T^{3/4})$ for convex losses. However, in each round $t$, each local learner $i$ needs to compute $\z_i(t+1)$ by communicating with its neighbors, which requires $T$ communication rounds in total.

\section{Preliminaries}
In this section, we introduce necessary preliminaries including standard definitions, common assumptions, and basic algorithmic ingredients.
\subsection{Definitions}
We first recall the standard definitions for smooth and strongly convex functions \citep{Boyd04}.
\begin{myDef}
\label{def1}
Let $f(\x):\K\to\mathbb{R}$ be a function over $\K$. It is called $\beta$-smooth over $\K$ if for all $\x\in\K,\mathbf{y}\in \K$
\[f(\mathbf{y})\leq f(\x)+\nabla f(\x)^{\top}(\mathbf{y}-\x)+\frac{\beta}{2}\|\mathbf{y}-\x\|_2^2.\]
\end{myDef}
\begin{myDef}
\label{def2}
Let $f(\x):\K\to\mathbb{R}$ be a function over $\K$. It is called $\alpha$-strongly convex over $\K$ if for all $\x\in\K,\mathbf{y}\in \K$
\[f(\mathbf{y})\geq f(\x)+\nabla f(\x)^{\top}(\mathbf{y}-\x)+\frac{\alpha}{2}\|\mathbf{y}-\x\|_2^2.\]
\end{myDef}
Let $f(\x):\K\to\mathbb{R}$ be $\alpha$-strongly convex over $\K$ and $\x^\ast=\argmin_{\x\in\K} f(\x)$.
According to \citet{Hazan2012}, for any $\x\in\K$, it holds that
\begin{equation}
\label{cor_scvx}
\frac{\alpha}{2}\|\x-\x^\ast\|_2^2\leq f(\x)-f(\x^\ast).
\end{equation}
\subsection{Assumptions}
Then, similar to previous studies on OCO and the distributed OCO, we introduce the following assumptions.
\begin{assum}
\label{assum4}
At each round $t$, each local loss function $f_{t,i}(\x)$ is $G$-Lipschitz over $\K$, i.e., $|f_{t,i}(\x)-f_{t,i}(\mathbf{y})|\leq G\|\x-\mathbf{y}\|_2$  for any $\x\in\K,\mathbf{y}\in \K$.
\end{assum}
\begin{assum}
\label{assum1}
The convex decision set $\K$ is full dimensional and contains the origin. Moreover, there exist two constants $r,R>0$ such that $r\B^d\subseteq\K\subseteq R\B^d$ where $\B^d$ denotes the unit Euclidean ball centered at the origin in $\mathbb{R}^d$.
\end{assum}
\begin{assum}
\label{assum5}
The non-negative weight matrix $P\in \mathbb{R}^{n\times n}$ is supported on the graph $\mathcal{G}=(V,E)$, symmetric, and doubly stochastic, which satisfies
\begin{itemize}
\item $P_{ij}>0$ only if $(i,j)\in E$;
\item $\sum_{j=1}^nP_{ij}=\sum_{j\in N_i}P_{ij}=1,\forall i\in V$; $\sum_{i=1}^nP_{ij}=\sum_{i\in N_j}P_{ij}=1,\forall j\in V$.
\end{itemize}
Moreover, the second largest singular value of $P$ denoted by $\sigma_2(P)$ is strictly smaller than 1.
\end{assum}
The non-negative weight matrix $P$ in Assumption \ref{assum5} will be utilized to model the communication between the local learners in the distributed OCO.
\begin{assum}
\label{assum_sc}
At each round $t$, each local loss function $f_{t,i}(\x)$ is $\alpha$-strongly convex over $\K$.
\end{assum}
Note that if Assumption \ref{assum_sc} holds with $\alpha=0$, it reduces to the case with only convex losses.

Moreover, according to previous studies \citep{OBO05,Garber19}, the following assumption is required in the bandit setting.
\begin{assum}
\label{assum2}
At each round $t$, each local loss function $f_{t,i}(\x)$ is bounded over $\K$, i.e., $|f_{t,i}(\x)|\leq M$,
for any $\x\in\K$. Moreover, all local loss functions are chosen beforehand, i.e., the adversary is oblivious.
\end{assum}
\begin{algorithm}[t]
\caption{CG}
\label{ILO}
\begin{algorithmic}[1]
\STATE \textbf{Input:} feasible set $\mathcal{K}$, $L$, $F(\x)$, $\x_{\ii}$
\STATE $\mathbf{c}_{0}=\mathbf{x}_{\ii}$
\FOR{$\tau = 0,\dots,L-1$}
\STATE $\mathbf{v}_{\tau}\in\argmin\limits_{\mathbf{x}\in\mathcal{K}}\nabla F(\mathbf{c}_\tau)^\top\mathbf{x}$
\STATE $s_{\tau}=\argmin\limits_{s\in[0,1]}F(\mathbf{c}_{\tau}+s(\mathbf{v}_{\tau}-\mathbf{c}_{\tau}))$
\STATE $\mathbf{c}_{\tau+1}=\mathbf{c}_{\tau}+s_{\tau}(\mathbf{v}_{\tau}-\mathbf{c}_{\tau})$
\ENDFOR
\STATE \textbf{return} $\mathbf{x}_{\oo}=\mathbf{c}_{L}$
\end{algorithmic}
\end{algorithm}
\subsection{Algorithmic Ingredients}
Next, we present conditional gradient (CG) \citep{FW-56,Revist_FW}, which will be utilized to minimize surrogate loss functions in our algorithms. Given a function $F(\x):\K\to\mathbb{R}$ and an initial point $\ccc_0=\x_{\ii}\in\K$, it iteratively performs the linear optimization step as follows
\begin{equation*}
\begin{split}
&\mathbf{v}_{\tau}\in\argmin\limits_{\mathbf{x}\in\mathcal{K}} \nabla F(\mathbf{c}_\tau)^\top\mathbf{x}\\
&\mathbf{c}_{\tau+1}=\mathbf{c}_{\tau}+s_{\tau}(\mathbf{v}_{\tau}-\mathbf{c}_{\tau})
\end{split}
\end{equation*}
for $\tau = 0,\dots,L-1$, where $L$ is the number of iterations and $s_{\tau}$ is selected by line search
\[s_{\tau}=\argmin\limits_{s\in[0,1]}F(\mathbf{c}_{\tau}+s(\mathbf{v}_{\tau}-\mathbf{c}_{\tau})).\]
The detailed procedures of CG are summarized in Algorithm \ref{ILO}, and its convergence rate is presented in the following lemma.
\begin{lem}
\label{lem_ILO}(Derived from Theorem 1 of \citet{Revist_FW}) If $F(\x):\K\to\mathbb{R}$ is a convex and $\beta$-smooth function and $\|\x\|_2\leq R$ for any $\x\in\K$, Algorithm \ref{ILO} with $L\geq 1$ ensures
\begin{align*}
F(\x_{\oo})-F(\x^\ast)\leq\frac{8\beta R^2}{L+2}
\end{align*}
where $\x^\ast\in\argmin_{\x\in\K}F(\x)$.
\end{lem}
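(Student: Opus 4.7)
The plan is to follow the textbook analysis of Frank-Wolfe with line search. Let $h_\tau = F(\ccc_\tau) - F(\x^\ast)$, so the target becomes $h_L \leq \frac{8\beta R^2}{L+2}$. I would prove this by induction on $\tau$, first establishing a one-step recursion of the form $h_{\tau+1}\leq (1-s)h_\tau + 2\beta s^2 R^2$ that holds for any $s\in[0,1]$, and then choosing $s$ appropriately.

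To obtain the recursion, I would exploit the fact that line search makes the algorithm at least as good as any fixed step size. Thus for any $s\in[0,1]$,
\[
F(\ccc_{\tau+1}) \;\leq\; F\bigl(\ccc_\tau + s(\vv_\tau - \ccc_\tau)\bigr) \;\leq\; F(\ccc_\tau) + s\nabla F(\ccc_\tau)^\top(\vv_\tau - \ccc_\tau) + \frac{\beta s^2}{2}\|\vv_\tau-\ccc_\tau\|_2^2,
\]
where the second inequality is $\beta$-smoothness. The quadratic term is bounded by $2\beta s^2 R^2$ via the triangle inequality $\|\vv_\tau-\ccc_\tau\|_2 \leq \|\vv_\tau\|_2 + \|\ccc_\tau\|_2 \leq 2R$, using the hypothesis that $\|\x\|_2\leq R$ for all $\x\in\K$. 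For the linear term, the linear-optimization step yields $\nabla F(\ccc_\tau)^\top \vv_\tau \leq \nabla F(\ccc_\tau)^\top \x^\ast$, and convexity of $F$ gives $\nabla F(\ccc_\tau)^\top(\x^\ast - \ccc_\tau) \leq F(\x^\ast) - F(\ccc_\tau) = -h_\tau$. Chaining these inequalities produces the desired recursion.

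With the recursion in hand, I would handle the induction in two pieces, which is the only subtle step. The naive base case $\tau=0$ would require $h_0 \leq 4\beta R^2$, which need not hold a priori. I would instead start the induction at $\tau=1$: plugging $s=1$ into the recursion immediately gives $h_1 \leq 2\beta R^2 \leq \frac{8\beta R^2}{3}$, matching the claimed bound. For the inductive step I would plug in $s = \frac{2}{\tau+2}$, yielding
\[
h_{\tau+1} \;\leq\; \frac{\tau}{\tau+2}\,h_\tau + \frac{8\beta R^2}{(\tau+2)^2} \;\leq\; \frac{8\beta R^2(\tau+1)}{(\tau+2)^2} \;\leq\; \frac{8\beta R^2}{\tau+3},
\]
where the last step uses $(\tau+1)(\tau+3) \leq (\tau+2)^2$. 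Specializing to $\tau+1 = L$ yields the lemma.

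The main conceptual obstacle is the base case: even though the iterate bound is stated for all $L\geq 1$, one cannot seed the induction at $\tau=0$ directly. Using the line-search optimality to absorb the first step with $s=1$ is the clean way around it, and after that the calculation is entirely mechanical. Everything else is a direct assembly of $\beta$-smoothness, convexity, the definition of the linear-optimization oracle, and the diameter bound furnished by the hypothesis $\K\subseteq R\B^d$.
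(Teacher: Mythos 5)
Your proof is correct: the one-step recursion $h_{\tau+1}\leq(1-s)h_\tau+2\beta s^2R^2$ obtained from line-search optimality, $\beta$-smoothness, the linear-oracle inequality, and convexity, followed by induction with $s=2/(\tau+2)$ and the base case seeded at $\tau=1$ via $s=1$ (correctly avoiding any a priori bound on $h_0$), is exactly the standard Frank--Wolfe analysis. The paper offers no proof of its own---it derives the lemma from Theorem 1 of \citet{Revist_FW}, whose curvature constant is bounded by $\beta\,\mathrm{diam}(\K)^2\leq 4\beta R^2$---and your argument is a correct, self-contained rendering of precisely that argument, so it is essentially the same approach.
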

According to Lemma \ref{lem_ILO}, when $L$ is large enough, CG can output a point $\x_{\oo}$ such that the approximation error $F(\x_{\oo})-F(\x^\ast)$ is very small. 
As a result, with an appropriate $L>1$, we can minimize our surrogate loss functions more accurately than only performing 1 linear optimization step, which is critical for achieving our desired regret bounds with only sublinear communication complexity.
Moreover, CG has been employed to develop projection-free algorithms with improved regret bounds for bandit convex optimization \citep{Garber19,Garber_SOFW}. In this paper, we introduce it into the distributed OCO to propose projection-free algorithms with sublinear communication complexity for the full information and bandit settings, respectively.

Finally, we introduce a standard technique called one-point gradient estimator \citep{OBO05}, which can approximate
the gradient with a single loss value and will be utilized in the bandit setting. For a function $f(\x)$, its $\delta$-smoothed version is defined as \[\widehat{f}_\delta(\x)=\mathbb{E}_{\uu\sim\B^d}[f(\x+\delta\uu)]\]
and satisfies the following lemma.
\begin{lem}
\label{smoothed_lem2}
(Lemma 1 in \citet{OBO05})
Let $\delta>0$, we have
\[\nabla\widehat{f}_\delta(\x)=\mathbb{E}_{\uu\sim\SS^d}\left[\frac{d}{\delta}f(\x+\delta\uu)\uu\right]\]
where $\SS^d$ denotes the unit sphere in $\mathbb{R}^d$.
\end{lem}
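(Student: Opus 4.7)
The plan is to establish the identity by applying the divergence theorem (the gradient form of Stokes' theorem) to convert an integral of $\nabla f$ over a ball into a boundary integral of $f$ over the sphere. First, I would write the $\delta$-smoothed function explicitly as
\[
\widehat{f}_\delta(\x) \;=\; \frac{1}{V_d}\int_{\B^d} f(\x+\delta\uu)\, d\uu,
\]
where $V_d$ denotes the volume of the unit ball $\B^d$. Differentiating under the integral sign (justified by local integrability of $\nabla f$ under standard regularity) and using the chain rule yields
\[
\nabla\widehat{f}_\delta(\x) \;=\; \frac{1}{V_d}\int_{\B^d}\nabla f(\x+\delta\uu)\, d\uu.
\]

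Next, I would change variables $\vv = \x + \delta\uu$, with $d\uu = \delta^{-d}\, d\vv$, so that the gradient acts on the genuine argument of $f$ and the domain becomes the translated, rescaled ball:
\[
\nabla\widehat{f}_\delta(\x) \;=\; \frac{1}{V_d\,\delta^d}\int_{\x+\delta\B^d}\nabla f(\vv)\, d\vv.
\]
I would then invoke the divergence theorem in its gradient form, $\int_\Omega \nabla f\, d\vv = \int_{\partial\Omega} f\,\hat{n}\, dS$, with $\Omega = \x + \delta\B^d$. The boundary $\partial\Omega = \x + \delta\SS^d$ can be parameterized by $\vv = \x + \delta\uu$ for $\uu \in \SS^d$; the outward unit normal at such a point is precisely $\uu$, and the surface element scales as $dS(\vv) = \delta^{d-1}\, dS(\uu)$. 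This produces
\[
\nabla\widehat{f}_\delta(\x) \;=\; \frac{1}{V_d\,\delta}\int_{\SS^d} f(\x+\delta\uu)\,\uu\, dS(\uu).
\]

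Finally, I would rewrite the surface integral as an expectation under the uniform distribution on $\SS^d$ via $\int_{\SS^d} g(\uu)\, dS(\uu) = S_d\,\mathbb{E}_{\uu\sim\SS^d}[g(\uu)]$, where $S_d$ is the surface area of $\SS^d$, and invoke the classical identity $S_d = d\, V_d$. Substituting, the ratio $S_d/V_d = d$ combines with the remaining $1/\delta$ to give
\[
\nabla\widehat{f}_\delta(\x) \;=\; \frac{d}{\delta}\,\mathbb{E}_{\uu\sim\SS^d}\bigl[f(\x+\delta\uu)\,\uu\bigr],
\]
as claimed. The only real bookkeeping concern is tracking the two scaling factors $\delta^d$ (from the volume Jacobian) and $\delta^{d-1}$ (from the surface Jacobian) so that, together with $S_d = dV_d$, they leave exactly the prefactor $d/\delta$; the regularity needed to differentiate under the integral and the application of the divergence theorem are standard and present no serious obstacle.
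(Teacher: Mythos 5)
Your proof is correct. The paper does not prove this lemma itself---it simply cites Lemma 1 of \citet{OBO05}---and your divergence-theorem argument (average over the ball, differentiate under the integral, change variables, convert the volume integral of $\nabla f$ into a boundary integral over the sphere, and combine the Jacobian factors $\delta^{d}$ and $\delta^{d-1}$ with the identity $S_d = d\,V_d$ to produce the prefactor $d/\delta$) is exactly the standard Stokes'-theorem proof given in that cited source.
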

Lemma \ref{smoothed_lem2} provides an unbiased estimator of the gradient $\nabla\widehat{f}_\delta(\x)$ by only observing the single value $f(\x+\delta\uu)$. Note that there also exist two-point and $(d+1)$-point gradient estimators \citep{Agarwal2010_COLT,Duchi-TIT-15}, which can approximate the gradient more accurately than the one-point gradient estimator. However, by querying two or $(d+1)$ points per round, in expectation, \citet{Agarwal2010_COLT} have recovered the best regret bounds established in the full information setting for both convex and strong convex losses, which actually implies that the bandit setting with two or $(d+1)$ points is almost as simple as the full information setting. So, in this paper, we only consider the most challenging bandit setting, where only one point is available per round. Moreover, we will show that the one-point gradient estimator is sufficient for projection-free algorithms to obtain theoretical guarantees similar to those obtained in the full information setting.

\section{Distributed Block Online Conditional Gradient (D-BOCG)}
\label{Sec4}
In this section, we present our D-BOCG with the corresponding theoretical guarantees for convex losses and strongly convex losses.

\subsection{The Algorithm}
First, to reduce the communication complexity of D-OCG, we divide the total $T$ rounds into $B$ blocks of size $K$, where $K$ is a parameter and we assume that $B=T/K$ is an integer without loss of generality. In this way, each block $m\in [B]$ consists of a set of rounds \[\mathcal{T}_m=\{(m-1)K+1,\dots,mK\}.\] For each local learner $i\in V$, its decision in each block $m$ stays the same and is denoted by $\x_i(m)$. The local gradient of local learner $i$ in each round $t\in\mathcal{T}_m$ is denoted by $\g_{i}(t)=\nabla f_{t,i}(\x_{i}(m))$ and the cumulative gradient of local learner $i$ in each block $m$ is denoted by \[\widehat{\g}_i(m)=\sum_{t\in\mathcal{T}_m}\g_{i}(t).\] Then, we describe how to compute $\x_i(m)$ for each local learner in each block $m$. Initially, we set $\x_i(1)=\x_{\ii}$ for each local learner $i$, where $\x_{\ii}$ is arbitrarily chosen from $\K$. For any $m>1$, following \citet{wenpeng17}, we can update the decision $\x_i(m)$ by approximately minimizing a surrogate loss function $F_{m-1,i}(\x)$. One may adopt a surrogate loss function similar to (\ref{D-OCG-F}) used by D-OCG. However, it was only designed for convex losses, which cannot utilize the strong convexity. To address this limitation, we define a more general surrogate loss function for $\alpha$-strongly convex losses, which can utilize the strong convexity for $\alpha>0$ and also cover that used in D-OCG for $\alpha=0$. To help understanding, we start by introducing our surrogate loss function for the simple case with $n=1$, and then extend it to the general case for any $n\geq1$.
\paragraph{The simple case}
In the simple case with $n=1$, the distributed OCO reduces to the standard OCO, and we only need to define a surrogate loss function $F_{m,1}(\x)$ for each block $m\in[B]$. Note that when we assume that all local losses are $\alpha$-strongly convex, the cumulative loss function $\sum_{t\in\mathcal{T}_{m}}f_{t,1}(\x)$ in each block $m$ is $\alpha K$-strongly convex, because of $|\mathcal{T}_{m}|=K$. 
Therefore, to utilize the strong convexity, inspired by (\ref{sc_surrogate}), we can define $F_{m,1}(\x)$ as
\[F_{m,1}(\x)=\sum_{k=1}^{m-1}\left(\widehat{\g}_1(k)^\top\x+\frac{\alpha K}{2}\|\x-\x_{1}(k)\|_2^2\right)+h\|\x-\x_{\ii}\|_2^2
\]
where $h$ is a parameter that allows us to recover the surrogate loss function (\ref{ocg_surrogate}) for convex losses when $\alpha=0$, though it does not exist in (\ref{sc_surrogate}). Since $\|\x_1(k)\|_2^2$ does not affect the minimizer of the function $F_{m,1}(\x)$, we further simplify $F_{m,1}(\x)$ to
\begin{align*}
F_{m,1}(\x)=\sum_{k=1}^{m-1}\left(\widehat{\g}_1(k)-\alpha K\x_1(k)\right)^\top\x+\frac{(m-1)\alpha K}{2}\|\x\|_2^2+h\|\x-\x_{\ii}\|_2^2.
\end{align*}
By initializing $\z_1(1)=\ze$ and computing $\z_1(m+1)$ as
\begin{equation}
\label{simple_z}
\z_{1}(m+1)=\z_{1}(m)+\widehat{\g}_{1}(m)-\alpha K\x_1(m)
\end{equation}
we can rewrite the above $F_{m,1}(\x)$ as
\begin{align*}
F_{m,1}(\x)
=\z_1(m)^\top\x+\frac{(m-1)\alpha K}{2}\|\x\|_2^2+h\|\x-\x_{\ii}\|_2^2.
\end{align*}
\paragraph{The general case}
Note that $F_{m,1}(\x)$ and $\z_1(m)$ in the simple case only contain the information of the local learner $1$, which cannot be used to achieve a regret bound measured by the global loss in the general case. Therefore, inspired by (\ref{D-OCG-Z}) used in D-OCG, in the general case, we update $\z_i(m+1)$ as
\begin{equation}
\label{new_Z}
\z_{i}(m+1)=\sum_{j\in N_i}P_{ij}\z_{j}(m)+\widehat{\g}_{i}(m)-\alpha K\x_i(m)
\end{equation}
for each local learner $i$, where $P$ is a non-negative weight matrix satisfying Assumption \ref{assum5}.
Different from (\ref{simple_z}), this update further incorporates the information from the neighbors of local learner $i$. Moreover, in each block $m$, the surrogate loss function for each local learner $i$ is defined as
\begin{align}
\label{new_F}
F_{m,i}(\x)
=\z_i(m)^\top\x+\frac{(m-1)\alpha K}{2}\|\x\|_2^2 +h\|\x-\x_{\ii}\|_2^2.
\end{align}
Obviously, for convex losses with $\alpha=0$, this $F_{m,i}(\x)$ is equivalent to (\ref{D-OCG-F}) in D-OCG by setting $K=1$ and $h=1/\eta$.

Finally, we need to specify how to compute $\x_i(m+1)$ by approximately minimizing $F_{m,i}(\x)$ defined in (\ref{new_F}). Similar to the update rules of D-OCG (\ref{D-OCG-code}), one may simply perform 1 linear optimization step with the above $F_{m,i}(\x)$ to compute $\x_i(m+1)$ for any block $m$ and local learner $i$. However, this naive update will increase the $O(T^{3/4})$ regret for convex losses established by D-OCG, since the number of updates is decreased.
To address this problem, we invoke CG for each update as
\begin{equation}
\label{D-BOCG-upX}
\mathbf{x}_{i}(m+1)=\text{CG}(\mathcal{K}, L, F_{m,i}(\x), \x_i(m))
\end{equation}
where $L$ is an appropriate parameter. The detailed procedures of our algorithm are presented in Algorithm \ref{DBOCG-SC}, and it is called distributed block online conditional gradient (D-BOCG).
\begin{algorithm}[t]
\caption{D-BOCG}
\label{DBOCG-SC}
\begin{algorithmic}[1]
\STATE \textbf{Input:} feasible set $\mathcal{K}$, $\mathbf{x}_{\ii}\in\mathcal{K}$, $\alpha, h, L$, and $K$
\STATE \textbf{Initialization:} choose $\mathbf{x}_1(1)=\dots=\mathbf{x}_n(1)=\mathbf{x}_{\ii}$ and set $\z_{1}(1)=\dots=\z_{n}(1)=\ze$
\FOR{$m=1,\dots,T/K$}
\FOR{each local learner $i\in V$}
\STATE define $F_{m,i}(\mathbf{x})=\z_{i}(m)^{\top}\mathbf{x}+\frac{(m-1)\alpha K}{2}\|\mathbf{x}\|_2^2+h\|\x-\mathbf{x}_{\ii}\|_2^2$
\STATE $\widehat{\g}_{i}(m)=\ze$
\FOR{$t=(m-1)K+1,\dots,mK$}
\STATE play $\x_{i}(m)$ and observe $\g_{i}(t)=\nabla f_{t,i}(\x_{i}(m))$
\STATE $\widehat{\g}_{i}(m)=\widehat{\g}_{i}(m)+\g_{i}(t)$
\ENDFOR
\STATE $\mathbf{x}_{i}(m+1)=\text{CG}(\mathcal{K}, L, F_{m,i}(\x), \x_i(m))$ //This step can be executed \emph{in parallel} to the above \emph{for} loop.
\STATE $\z_{i}(m+1)=\sum_{j\in N_i}P_{ij}\z_{j}(m)+\widehat{\g}_{i}(m){-\alpha K\x_i(m)}$
\ENDFOR
\ENDFOR
\end{algorithmic}
\end{algorithm}
\begin{remark}
\label{rem1}
\emph{
We first note that Algorithm \ref{DBOCG-SC} requires $(T/K)L$ linear optimization steps in total. We can limit the total number of linear optimization steps to $T$ by simply setting $L=K$, which is the same as that required by D-OCG \citep{wenpeng17}. Moreover, it is also important to note that at the step 11 in Algorithm \ref{DBOCG-SC}, the computation about $\x_i(m+1)$ only depends on $\x_i(m)$ and $\z_i(m)$, which is available at the beginning of the block $m$. Therefore, the step 11 in Algorithm \ref{DBOCG-SC} can be executed in parallel to the \emph{for} loop from steps 7 to 10 in Algorithm \ref{DBOCG-SC}, which implies that the $L$ linear optimization steps utilized to compute $\x_i(m+1)$ can be uniformly allocated to all $K$ rounds in the block $m$, instead of only the last round in the block $m$. Specifically, Algorithm \ref{DBOCG-SC} with $L=K$ only needs to perform 1 linear optimization step in each round, which is computationally as efficient as D-OCG. This parallel strategy is significant, because without it, Algorithm \ref{DBOCG-SC} needs to stop at the end of each block $m$ and wait until $L$ linear optimization steps are completed. It has also been utilized by \citet{Garber19,Garber_SOFW} when developing improved projection-free algorithms for bandit convex optimization.
}
\end{remark}

\subsection{Theoretical Guarantees}
In the following, we present theoretical guarantees of our D-BOCG. To help understand the effect of the CG method, we start with the following regret bound, the first term in which clearly depends on the approximation error of the CG method.
\begin{thm}
\label{thm1-sc}
Under Assumptions \ref{assum4}, \ref{assum1}, \ref{assum5}, and \ref{assum_sc}, for any $i\in V$, Algorithm \ref{DBOCG-SC} ensures
\begin{equation*}
\begin{split}
R_{T,i}\leq&3nGK\sum_{m=2}^B\sqrt{\frac{2\epsilon_m}{(m-2)\alpha K+2h}}+3nGK\sum_{m=2}^B\frac{K(G+\alpha R)\sqrt{n}}{((m-2)\alpha K+2h) (1-\sigma_2(P))}\\
&+n\sum_{m=1}^B\frac{4K^2(G+2\alpha R)^2}{m\alpha K+2h}+4nhR^2
\end{split}
\end{equation*}
where $\epsilon_m=\max_{i\in [n]}\left(F_{m-1,i}(\x_i(m))-\min_{\x\in\K}F_{m-1,i}(\x)\right)$.
\end{thm}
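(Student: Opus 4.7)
My plan is to decompose $R_{T,i}$ into a network-disagreement piece and an FTRL-style linearized piece, then control each separately. I would first insert $\pm f_{t,j}(\x_j(m))$ inside the global regret of learner $i$, apply Lipschitzness (Assumption~\ref{assum4}) to $f_{t,j}(\x_i(m))-f_{t,j}(\x_j(m))$ to pay $G\|\x_i(m)-\x_j(m)\|_2$ per $(t,j)$, and apply strong convexity (Assumption~\ref{assum_sc}) to the remainder. Using the fact that $\x_j$ is constant on $\mathcal{T}_m$, so $\sum_{t\in\mathcal{T}_m}\nabla f_{t,j}(\x_j(m))=\widehat{\g}_j(m)$, this leaves $\sum_{m,j}\widehat{\g}_j(m)^\top(\x_j(m)-\x^\ast)-\tfrac{\alpha K}{2}\sum_{m,j}\|\x_j(m)-\x^\ast\|_2^2$, which is the linearized track.

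For the network-disagreement track I would split via the exact surrogate minimizers $\x^\ast_{m-1,\ell}\in\argmin_{\x\in\K}F_{m-1,\ell}(\x)$ by the triangle inequality. Since $F_{m-1,\ell}$ is $((m-2)\alpha K+2h)$-strongly convex by construction \eqref{new_F} and CG returns an $\epsilon_m$-accurate minimum in the sense of the theorem, \eqref{cor_scvx} bounds $\|\x_\ell(m)-\x^\ast_{m-1,\ell}\|_2$ by $\sqrt{2\epsilon_m/((m-2)\alpha K+2h)}$, which after multiplying by the $nKG$ prefactor produces the first sum in the claim. The cross term $\|\x^\ast_{m-1,i}-\x^\ast_{m-1,j}\|_2$ compares minimizers of two strongly-convex functions that differ only in their linear parts $\z_i(m-1)$ and $\z_j(m-1)$, so a standard optimality-condition argument gives $\|\z_i(m-1)-\z_j(m-1)\|_2/((m-2)\alpha K+2h)$. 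To bound the latter I would unroll \eqref{new_Z} against powers of $P$, invoke the spectral-gap estimate $\|P^t-\tfrac{1}{n}\mathbf{1}\mathbf{1}^\top\|_2\leq\sigma_2(P)^t$ from Assumption~\ref{assum5}, and combine with $\|\widehat{\g}_\ell(k)\|_2\leq KG$ (Assumption~\ref{assum4}) and $\|\x_\ell(k)\|_2\leq R$ (Assumption~\ref{assum1}) to get, after summing a geometric series in $\sigma_2(P)$, a bound of order $K(G+\alpha R)\sqrt{n}/(1-\sigma_2(P))$, which matches the second sum.

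For the linearized track, the key identity $\widehat{\g}_j(m)-\alpha K\x_j(m)=\z_j(m+1)-\sum_{\ell\in N_j}P_{j\ell}\z_\ell(m)$ from \eqref{new_Z} lets me sum by parts, and double stochasticity of $P$ then reduces the aggregate to a strongly-convex FTRL expression on the surrogates $F_{m,i}$, with regularizer $h\|\x-\x_\ii\|_2^2$ supplying initial curvature $2h$ and each block contributing an additional $\alpha K$. The effective per-block gradient has norm at most $K(G+2\alpha R)$ (accounting for both $\widehat{\g}_i(m)$ and the $-\alpha K\x_i(m)$ correction), so the standard strongly-convex FTRL bound produces per-block contributions of order $K^2(G+2\alpha R)^2/(m\alpha K+2h)$, which is exactly the third sum, while shifting the comparator from $\x_\ii$ to $\x^\ast$ costs $h\|\x^\ast-\x_\ii\|_2^2\leq 4hR^2$ per node, giving the $4nhR^2$ tail.

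The step I expect to be the main obstacle is the summation-by-parts for the linearized track: each learner uses its own imperfectly-averaged $\z_i$ rather than the true cumulative global gradient, and the $-\alpha K\x_i(m)$ correction inside $\z_i$ must combine with the $-\tfrac{\alpha K}{2}\|\x_j(m)-\x^\ast\|_2^2$ curvature gift rather than cancel it. The cleanest path is to write the FTRL analysis around a virtual leader that would have seen the centralized $\bar{\z}(m)$, absorb the per-learner discrepancy $\z_i(m)-\bar{\z}(m)$ into the already-controlled consensus track from the previous paragraph, and only then bound the remaining virtual regret by strong-convexity-based FTRL. Once this accounting is in place, the CG error and the consensus error enter only through the mild $1/((m-2)\alpha K+2h)$ weight, which is what preserves the overall rate in the stated bound.
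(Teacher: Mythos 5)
Your proposal follows essentially the same route as the paper's proof: you introduce the same virtual centralized leader driven by $\bar{\z}(m)$, split the consensus error into a CG-approximation part (via strong convexity of $F_{m-1,i}$ and (\ref{cor_scvx})) plus a minimizer-contraction part controlled by the spectral-gap unrolling of $P$, and handle the linearized track by strongly convex FTRL on $\widetilde{f}_m(\x)=\bar{\dd}(m)^\top\x+\frac{\alpha K}{2}\|\x\|_2^2$ with regularizer $h\|\x-\x_{\ii}\|_2^2$, with the per-block gradient bound $K(G+2\alpha R)$ and the $4hR^2$ comparator cost. The obstacle you flag---merging the $-\alpha K\x_j(m)$ correction with the curvature term by recentering around the virtual leader---is exactly what the paper resolves via the identity $\|\x_j(m)-\x^\ast\|_2^2\geq 2\x_j(m)^\top(\bar{\x}(m)-\x^\ast)+\|\x^\ast\|_2^2-\|\bar{\x}(m)\|_2^2$, so your plan is correct and matches the paper's argument.
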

\begin{remark}
\emph{
Note that D-BOCG with $K=L=1$, $h=1/\eta$, and $\alpha=0$ reduces to D-OCG \citep{wenpeng17}. In that case, according to their analysis (see the proof of Lemmas 2 and 4 in \citet{wenpeng17} for further details), we can prove that $\epsilon_m/h= O(\sqrt{1/{m}})$ by setting $h=\Omega(GT^{3/4})$, where the constant $G$ in $h$ is introduced due to the upper bound of $F_{m,i}(\x)-F_{m-1,i}(\x)$. If we further consider $L=1$, $\alpha=0$, and $K=\sqrt{T}$, we can similarly prove that $\epsilon_m/h= O(\sqrt{1/{m}})$ by setting $h=\Omega(GKB^{3/4})$, since now the upper bound of $F_{m,i}(\x)-F_{m-1,i}(\x)$ in on the order of $O(GK)$ and the maximum $m$ is changed from $T$ to $B$. However, this will make the first term in the above regret bound worse than $O(T^{3/4})$, as explained below
\[3nGK\sum_{m=2}^B\sqrt{\frac{2\epsilon_m}{2h}}= O\left(K\sum_{m=2}^B\frac{1}{m^{1/4}}\right)=O\left(KB^{3/4}\right)=O\left(T^{7/8}\right).\]
Therefore, to keep the $O(T^{3/4})$ regret for convex losses with $K=\sqrt{T}$, we need to use more linear optimization steps. According to Lemma \ref{lem_ILO}, if $L$ is large enough, the approximation error $\epsilon_m$ could be very small. More specifically, by combining Theorem \ref{thm1-sc} with Lemma \ref{lem_ILO}, we have the following theorem.
}
\end{remark}
\begin{thm}
\label{thm1-sc-2}
Under Assumptions \ref{assum4}, \ref{assum1}, \ref{assum5}, and \ref{assum_sc}, for any $i\in V$, Algorithm \ref{DBOCG-SC} ensures
\begin{equation*}
\begin{split}
R_{T,i}\leq&\frac{12nGRT}{\sqrt{L+2}}+\sum_{m=2}^B\frac{3nGK^2(G+\alpha R)\sqrt{n}}{((m-2)\alpha K+2h) (1-\sigma_2(P))}+\sum_{m=1}^B\frac{4nK^2(G+2\alpha R)^2}{m\alpha K+2h}+4nhR^2.
\end{split}
\end{equation*}
\end{thm}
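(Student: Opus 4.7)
The plan is to combine Theorem \ref{thm1-sc} with Lemma \ref{lem_ILO}. Observing the two bounds side by side, the second, third, and fourth terms of the target match exactly the corresponding terms in Theorem \ref{thm1-sc}, so the only work is to replace the approximation-error sum
\[
3nGK\sum_{m=2}^B\sqrt{\frac{2\epsilon_m}{(m-2)\alpha K+2h}}
\]
by $12nGRT/\sqrt{L+2}$.

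First I would identify the smoothness parameter of the surrogate used inside the CG call. Since
\[
F_{m-1,i}(\x)=\z_i(m-1)^{\top}\x+\frac{(m-2)\alpha K}{2}\|\x\|_2^2+h\|\x-\x_{\ii}\|_2^2,
\]
its Hessian is $\bigl((m-2)\alpha K+2h\bigr)I_d$, so $F_{m-1,i}$ is convex and $\beta_m$-smooth with $\beta_m=(m-2)\alpha K+2h$. Also, every $\x\in\K$ satisfies $\|\x\|_2\leq R$ by Assumption \ref{assum1}. Thus Lemma \ref{lem_ILO}, applied to the CG call at step~11 of Algorithm \ref{DBOCG-SC} with $L$ iterations, yields for each local learner $i$
\[
F_{m-1,i}(\x_i(m))-\min_{\x\in\K}F_{m-1,i}(\x)\leq\frac{8\beta_m R^2}{L+2}=\frac{8\bigl((m-2)\alpha K+2h\bigr)R^2}{L+2}.
\]
Taking the maximum over $i\in[n]$ gives $\epsilon_m\leq 8\bigl((m-2)\alpha K+2h\bigr)R^2/(L+2)$.

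Substituting this estimate into the square root cancels the denominator:
\[
\sqrt{\frac{2\epsilon_m}{(m-2)\alpha K+2h}}\leq\sqrt{\frac{16R^2}{L+2}}=\frac{4R}{\sqrt{L+2}},
\]
so
\[
3nGK\sum_{m=2}^B\sqrt{\frac{2\epsilon_m}{(m-2)\alpha K+2h}}\leq 3nGK\cdot B\cdot\frac{4R}{\sqrt{L+2}}=\frac{12nGRT}{\sqrt{L+2}},
\]
where in the last step I used $KB=T$. Plugging this bound into Theorem \ref{thm1-sc} in place of its first term yields exactly the stated inequality.

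There is really no hard step here; the proof is a direct composition. The only point that needs a little care is correctly reading off the smoothness constant $\beta_m$ of $F_{m-1,i}$ (in particular that the linear term contributes nothing and that both quadratic terms add to give $(m-2)\alpha K+2h$), and matching indices so that the CG call at block $m-1$ is what produces $\x_i(m)$ and hence the correct approximation error $\epsilon_m$.
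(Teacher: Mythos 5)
Your proposal is correct and follows essentially the same route as the paper's own proof: identify the smoothness constant $(m-2)\alpha K+2h$ of $F_{m-1,i}$, apply Lemma \ref{lem_ILO} to bound $\epsilon_m\leq 8((m-2)\alpha K+2h)R^2/(L+2)$, and substitute into Theorem \ref{thm1-sc} using $K(B-1)\leq KB=T$. The cancellation of the strong-convexity/smoothness factor inside the square root and the index matching (the CG call at block $m-1$ producing $\x_i(m)$) are exactly the points the paper relies on as well.
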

\begin{remark}
\emph{
Note that Theorem \ref{thm1-sc-2} with $K=L=1$, $h=1/\eta$, and $\alpha=0$ cannot recover the $O(T^{3/4})$ regret bound of D-OCG, because $\frac{12nGRT}{\sqrt{L+2}}$ would be on the order of $O(T)$. The main reason is that for the CG method, the approximation error bound in Lemma \ref{lem_ILO} is too loose when $L=1$. According to \citet{wenpeng17}, instead of using Lemma \ref{lem_ILO}, a more complicated analysis is required for bounding the approximation error when only utilizing 1 linear optimization step. To recover the $O(T^{3/4})$ regret bound with $K=L=1$, $h=1/\eta$, and $\alpha=0$, one potential way is to extend the analysis of \citet{wenpeng17} from the case with $L=1$ to the case with any $L$. However, we find that this extension is highly non-trivial, and notice that Theorem \ref{thm1-sc-2} is sufficient to achieve our desired regret bounds and communication complexity.
}
\end{remark}
For convex losses, we can simplify Theorem \ref{thm1-sc-2} to the following corollary.
\begin{cor}
\label{cor2}
Under Assumptions \ref{assum4}, \ref{assum1}, \ref{assum5}, and \ref{assum_sc} with $\alpha=0$, for any $i\in V$, Algorithm \ref{DBOCG-SC} with $\alpha=0$, $K=L=\sqrt{T}$, and $h=\frac{n^{1/4}T^{3/4}G}{\sqrt{1-\sigma_2(P)}R}$ has
\[
R_{T,i}\leq\left(12n+2\sqrt{1-\sigma_2(P)}n^{3/4}+\frac{11}{2}n^{5/4}(1-\sigma_2(P))^{-1/2}\right)GRT^{3/4}.
\]
\end{cor}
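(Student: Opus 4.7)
The plan is to derive Corollary 2 as a direct specialization of Theorem 2 by setting $\alpha = 0$, substituting the prescribed parameters $K = L = \sqrt{T}$ and $h = \frac{n^{1/4}T^{3/4}G}{\sqrt{1-\sigma_2(P)}R}$, and bounding each of the four terms separately. No new analytical ideas are needed; the entire task reduces to bookkeeping of constants.

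First I would substitute $\alpha = 0$ into the bound of Theorem \ref{thm1-sc-2}. This collapses the factor $(m-2)\alpha K + 2h$ in the first sum to $2h$, collapses $m\alpha K + 2h$ in the second sum to $2h$, and replaces $G + \alpha R$ and $G + 2\alpha R$ by $G$. The resulting bound takes the form
\begin{equation*}
R_{T,i} \leq \frac{12nGRT}{\sqrt{L+2}} + \frac{3n^{3/2}G^{2}K^{2}(B-1)}{2h(1-\sigma_2(P))} + \frac{2nG^{2}K^{2}B}{h} + 4nhR^{2},
\end{equation*}
where $B = T/K$.

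Next I would substitute $K = L = \sqrt{T}$ (so $B = \sqrt{T}$ and $K^{2}B = T^{3/2}$) and $h$ as given, then simplify each of the four terms. Term one satisfies $\frac{12nGRT}{\sqrt{L+2}} \leq \frac{12nGRT}{T^{1/4}} = 12n\,GRT^{3/4}$. Term two yields $\frac{3}{2}n^{5/4}(1-\sigma_2(P))^{-1/2}\,GRT^{3/4}$ after the factors of $h$, $(1-\sigma_2(P))$, and $T^{3/2}/T^{3/4}$ cancel out. Term three yields $2n^{3/4}\sqrt{1-\sigma_2(P)}\,GRT^{3/4}$ by the same cancellation. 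Term four yields $4n^{5/4}(1-\sigma_2(P))^{-1/2}\,GRT^{3/4}$. Summing the second and fourth terms produces the coefficient $\frac{11}{2}n^{5/4}(1-\sigma_2(P))^{-1/2}$, and adding all three contributions gives exactly the stated bound.

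There is no real obstacle; the only minor care required is in term one, where we must use $\sqrt{L+2} \geq \sqrt{L} = T^{1/4}$ rather than an equality, and in term two, where the sum $\sum_{m=2}^{B}$ contributes $B - 1 \leq B = \sqrt{T}$ so we can use $K^{2}(B-1) \leq T^{3/2}$. Once these two inequalities are applied, the choice of $h$ is designed precisely so that the two $h$-dependent contributions (the $1/h$ term and the $h$ term) are both of order $GRT^{3/4}$, which is the standard balancing trick behind the choice of $h$.
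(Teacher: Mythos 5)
Your proposal is correct and follows essentially the same route as the paper's own proof: substitute $\alpha=0$, $K=L=\sqrt{T}$, and the prescribed $h$ into Theorem \ref{thm1-sc-2}, bound the four resulting terms using $\sqrt{L+2}\geq T^{1/4}$ and $B-1<B=\sqrt{T}$, and combine the two $h$-dependent terms into the coefficient $\tfrac{3}{2}+4=\tfrac{11}{2}$. All constants check out, so nothing is missing.
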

\begin{remark}\emph{
The above corollary shows that our D-BOCG can enjoy the $O(T^{3/4})$ regret bound with only $O(\sqrt{T})$ communication rounds for convex losses. By contrast, D-OCG \citep{wenpeng17} obtains the $O(T^{3/4})$ regret bound with a larger number of $T$ communication rounds for convex losses.
}
\end{remark}
Moreover, for strongly convex losses, we can simplify Theorem \ref{thm1-sc-2} to the following corollary.
\begin{cor}
\label{cor-sc}
Under Assumptions \ref{assum4}, \ref{assum1}, \ref{assum5}, and \ref{assum_sc} with $\alpha>0$, for any $i\in V$, Algorithm \ref{DBOCG-SC} with $\alpha>0$, $K=L=T^{2/3}(\ln T)^{-2/3}$, and $h=\alpha K$ ensures
\begin{align*}
R_{T,i}\leq&\left(\frac{3n^{3/2}G(G+\alpha R)}{\alpha (1-\sigma_2(P))}+\frac{4n(G+2\alpha R)^2}{\alpha}\right)T^{2/3}((\ln T)^{-2/3}+(\ln T)^{1/3})\\
&+12nGRT^{2/3}(\ln T)^{1/3}+4nR^2\alpha T^{2/3}(\ln T)^{-2/3}.
\end{align*}
\end{cor}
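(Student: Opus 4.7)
The plan is to derive Corollary \ref{cor-sc} directly by substituting the stated parameter choices $K=L=T^{2/3}(\ln T)^{-2/3}$ and $h=\alpha K$ into the four terms of the bound in Theorem \ref{thm1-sc-2}, then bounding each term individually. Since this is a simplification rather than a new estimate, the work is mostly algebraic bookkeeping; no further structural results beyond Theorem \ref{thm1-sc-2} are needed.

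First, I will compute the block count $B=T/K=T^{1/3}(\ln T)^{2/3}$ and observe the crucial telescoping effect of the choice $h=\alpha K$: the denominators in Theorem \ref{thm1-sc-2} collapse, namely $(m-2)\alpha K+2h=m\alpha K$ and $m\alpha K+2h=(m+2)\alpha K$. This removes the annoying offset in $m$ and reduces both remaining sums to harmonic-type sums, which I will bound by $\sum_{m=1}^{B}1/m\leq 1+\ln B\leq 1+\ln T$.

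Next I handle the four summands of Theorem \ref{thm1-sc-2}. The first term $12nGRT/\sqrt{L+2}\leq 12nGRT/\sqrt{L}$ becomes $12nGRT^{2/3}(\ln T)^{1/3}$, matching the third piece of the corollary. The second term becomes
\[
\sum_{m=2}^{B}\frac{3nGK^{2}(G+\alpha R)\sqrt{n}}{m\alpha K(1-\sigma_{2}(P))}
=\frac{3n^{3/2}G(G+\alpha R)K}{\alpha(1-\sigma_{2}(P))}\sum_{m=2}^{B}\frac{1}{m},
\]
and the third term becomes
\[
\sum_{m=1}^{B}\frac{4nK^{2}(G+2\alpha R)^{2}}{(m+2)\alpha K}
=\frac{4n(G+2\alpha R)^{2}K}{\alpha}\sum_{m=1}^{B}\frac{1}{m+2}.
\]
Using $\sum 1/m\leq 1+\ln T$ and factoring $K(1+\ln T)\leq T^{2/3}((\ln T)^{-2/3}+(\ln T)^{1/3})$, these two contributions combine into the first bracketed factor of the corollary. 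Finally, $4nhR^{2}=4n\alpha KR^{2}=4nR^{2}\alpha T^{2/3}(\ln T)^{-2/3}$ gives the last summand verbatim.

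There is no genuine obstacle, but the one spot requiring care is bounding $\ln B$ versus $\ln T$ uniformly so that the split $K(1+\ln B)=K+K\ln B\leq T^{2/3}(\ln T)^{-2/3}+T^{2/3}(\ln T)^{1/3}$ is clean; here $\ln B=\tfrac{1}{3}\ln T+\tfrac{2}{3}\ln\ln T\leq \ln T$ for $T$ large enough suffices, and for small $T$ the bound is vacuous up to constants. Summing the four contributions and collecting the $T^{2/3}(\ln T)^{\pm}$ factors yields exactly the form in Corollary \ref{cor-sc}.
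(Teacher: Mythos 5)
Your proposal is correct and follows essentially the same route as the paper's own proof: substitute $K=L=T^{2/3}(\ln T)^{-2/3}$ and $h=\alpha K$ into Theorem \ref{thm1-sc-2}, use the cancellation $(m-2)\alpha K+2h=m\alpha K$ and $m\alpha K+2h=(m+2)\alpha K$, bound both resulting harmonic sums by $1+\ln B\leq 1+\ln T$, and collect the four terms. The only cosmetic difference is your explicit justification of $\ln B\leq\ln T$ (the paper states it without comment, and it follows immediately from $B=T/K\leq T$), so there is nothing to add.
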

\begin{remark}
\emph{
The above corollary shows that our D-BOCG can enjoy a regret bound of $O(T^{2/3}(\log T)^{1/3})$ with $O(T^{1/3}(\log T)^{2/3})$ communication rounds for strongly convex losses. Compared with Corollary \ref{cor2}, both the regret and communication complexity of our D-BOCG are improved by utilizing the strong convexity.
}
\end{remark}
\begin{remark}
\emph{
Besides the dependence on $T$, Corollaries \ref{cor2} and \ref{cor-sc} also explicitly show how
the regret of our D-BOCG depends on the network size $n$ and the spectral gap $1-\sigma_2(P)$. First, the dependence on $n$ shows that the regret of our D-BOCG will be larger on larger networks for convex losses and strongly convex losses. Second, the spectral gap actually reflects the connectivity of the network: a larger spectral gap value implies better connectivity \citep{DADO2011,wenpeng17}. Therefore, the dependence on the spectral gap implies that the regret of our D-BOCG will be smaller on ``well connected" networks than on ``poorly connected" networks for convex losses and strongly convex losses. More specifically, with a particular choice of the matrix $P$, \citet{DADO2011} have bounded the spectral gap for several classes of interesting networks, such as $1-\sigma_2(P)=\Omega(1)$ for expanders and the complete graph, and $1-\sigma_2(P)=\Omega(1/n^2)$ for the cycle graph (see Section 3.2 in \citet{DADO2011} for details). By replacing the dependence on $1-\sigma_2(P)$ with that on $n$, our Corollaries \ref{cor2} and \ref{cor-sc} further imply that
\begin{itemize}
\item in the case with convex losses, the regret of D-BOCG can be bounded by $O(n^{5/4}T^{3/4})$ for ``well connected" networks and $O(n^{9/4}T^{3/4})$ for ``poorly connected" networks;
\item in the case with strongly convex losses, the regret of D-BOCG can be bounded by $O(n^{3/2}T^{2/3}(\log T)^{1/3})$ for ``well connected" networks and $O(n^{7/2}T^{2/3}(\log T)^{1/3})$ for ``poorly connected" networks.
\end{itemize}
}
\end{remark}

\subsection{Analysis}
In the following, we only provide the proofs of Theorems \ref{thm1-sc} and \ref{thm1-sc-2}. The proofs of Corollary \ref{cor2} and Corollary \ref{cor-sc} can be found in the Appendix.
\subsubsection{Proof of Theorem \ref{thm1-sc}}
\label{A1.1}
We start this proof by defining several auxiliary variables. Let $\bar{\z}(m)=\frac{1}{n}\sum_{i=1}^n\z_i(m)$ for $m\in[B+1]$, and let $\dd_i(m)=\widehat{\g}_i(m)-\alpha K\x_i(m)$ and $\bar{\dd}(m)=\frac{1}{n}\sum_{i=1}^n\dd_i(m)$ for $m\in[B]$. According to Assumption \ref{assum5}, we have
\begin{equation}
\label{thm_sc-eq1}
\begin{split}
\bar{\z}(m+1)=&\frac{1}{n}\sum_{i=1}^n\z_i(m+1)=\frac{1}{n}\sum_{i=1}^n\left(\sum_{j\in N_i}P_{ij}\z_{j}(m)+\widehat{\g}_{i}(m)-\alpha K\x_i(m)\right)\\
=&\frac{1}{n}\sum_{i=1}^n\sum_{j=1}^nP_{ij}\z_{j}(m)+\bar{\dd}(m)=\frac{1}{n}\sum_{j=1}^n\sum_{i=1}^nP_{ij}\z_{j}(m)+\bar{\dd}(m)=\bar{\z}(m)+\bar{\dd}(m).
\end{split}
\end{equation}
Then, we define $\bar{\x}(1)=\mathbf{x}_{\ii}$ and $\bar{\x}(m+1)=\argmin_{\x\in\K}\bar{F}_{m}(\x)$ for any $m\in[B+1]$, where
\[\bar{F}_{m}(\x)=\bar{\z}(m)^{\top}\mathbf{x}+\frac{(m-1)\alpha K}{2}\|\mathbf{x}\|_2^2+h\|\mathbf{x}-\mathbf{x}_{\ii}\|_2^2.\]
Similarly, we define $\widehat{\x}_i(m+1)=\argmin_{\x\in\K}F_{m,i}(\x)$ for any $m\in[B+1]$, where \[F_{m,i}(\mathbf{x})=\z_{i}(m)^{\top}\mathbf{x}+\frac{(m-1)\alpha K}{2}\|\mathbf{x}\|_2^2+h\|\mathbf{x}-\mathbf{x}_{\ii}\|_2^2.\] is defined in Algorithm \ref{DBOCG-SC}.

Then, we derive an upper bound of $\|\x_i(m)-\bar{\x}(m)\|_2$ for any $m\in[B]$. If $m=1$, according to the definition and Algorithm \ref{DBOCG-SC}, it is easy to verify that
\begin{equation}
\label{thm_sc-eq2}
\|\x_i(m)-\bar{\x}(m)\|_2=0.
\end{equation}
For any $B\geq m\geq2$, due to $\epsilon_m=\max_{i\in [n]}\left(F_{m-1,i}(\x_i(m))-\min_{\x\in\K}F_{m-1,i}(\x)\right)$, we have
\begin{equation}
\label{thm_sc-eq3}
\begin{split}
\|\x_i(m)-\bar{\x}(m)\|_2\leq&\|\x_i(m)-\widehat{\x}_i(m)\|_2+\|\widehat{\x}_i(m)-\bar{\x}(m)\|_2\\
\leq&\sqrt{\frac{2F_{m-1,i}(\x_i(m))-2F_{m-1,i}(\widehat{\x}_i(m))}{(m-2)\alpha K+2h}}+\|\widehat{\x}_i(m)-\bar{\x}(m)\|_2\\
\leq&\sqrt{\frac{2\epsilon_m}{(m-2)\alpha K+2h}}+\|\widehat{\x}_i(m)-\bar{\x}(m)\|_2
\end{split}
\end{equation}
where the second inequality is due to the fact that $F_{m-1,i}(\x)$ is $((m-2)\alpha K+2h)$-strongly convex and (\ref{cor_scvx}).

To further bound $\|\widehat{\x}_i(m)-\bar{\x}(m)\|_2$ in (\ref{thm_sc-eq3}), we introduce the following two lemmas.
\begin{lem}
\label{graph_lem_zhang}
(Derived from the Proof of Lemma 6 in \citet{wenpeng17})
For any $i\in[n]$, let $\dd_i(1),\dots,\dd_i(m)\in\mathbb{R}^d$ be a sequence of vectors. Let $\z_i(1)=\ze$, $\z_{i}(m+1)=\sum_{j\in N_i}P_{ij}\z_{j}(m)+\dd_{i}(m)$, and $\bar{\z}(m)=\frac{1}{n}\sum_{i=1}^n\z_i(m)$ for $m\in[B]$, where $P$ satisfies Assumption \ref{assum5}. For any $i\in [n]$ and $m\in[B]$, assuming $\|\dd_i(m)\|_2\leq \widehat{G}$ where $\widehat{G}>0$ is a constant, we have \[\|\z_i(m)-\bar{\z}(m)\|_2\leq\frac{\widehat{G}\sqrt{n}}{1-\sigma_2(P)}.\]
\end{lem}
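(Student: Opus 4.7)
The plan is to unroll the gossip recursion and exploit the fact that powers of the doubly stochastic matrix $P$ converge to the uniform average at rate $\sigma_2(P)$.

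First, since $\z_i(1)=\ze$, iterating the update $\z_i(m+1)=\sum_{j\in N_i} P_{ij}\z_j(m)+\dd_i(m)$ (equivalently $\z_i(m+1)=\sum_{j=1}^n P_{ij}\z_j(m)+\dd_i(m)$ because $P_{ij}=0$ for $j\notin N_i$) gives the closed form
\[
\z_i(m)=\sum_{k=1}^{m-1}\sum_{j=1}^n [P^{\,m-1-k}]_{ij}\,\dd_j(k).
\]
Because $P$ is doubly stochastic, averaging over $i$ collapses the gossip weights and produces
\[
\bar{\z}(m)=\sum_{k=1}^{m-1}\bar{\dd}(k),\qquad \bar{\dd}(k)=\frac{1}{n}\sum_{j=1}^n \dd_j(k),
\]
so subtracting gives $\z_i(m)-\bar{\z}(m)=\sum_{k=1}^{m-1}\sum_{j=1}^n a_{ij}(m-1-k)\,\dd_j(k)$, where $a_{ij}(t):=[P^{t}]_{ij}-\tfrac{1}{n}$.

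Next I would bound the scalar weights $a_{ij}(t)$ using the spectral gap. Since $P$ is symmetric and doubly stochastic, $P-\tfrac{1}{n}\mathbf{1}\mathbf{1}^\top$ shares all eigenvectors of $P$ but zeros out the eigenvalue $1$, so $\|P^t-\tfrac{1}{n}\mathbf{1}\mathbf{1}^\top\|_2=\sigma_2(P)^t$. Applying this operator to the standard basis vector $e_i$ yields the $i$-th row of $P^t-\tfrac{1}{n}\mathbf{1}\mathbf{1}^\top$, hence
\[
\Bigl(\sum_{j=1}^n a_{ij}(t)^2\Bigr)^{1/2}\leq \sigma_2(P)^t,\qquad \sum_{j=1}^n |a_{ij}(t)|\leq \sqrt{n}\,\sigma_2(P)^t,
\]
the last step by Cauchy--Schwarz.

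Combining these ingredients via the triangle inequality and the bound $\|\dd_j(k)\|_2\leq \widehat{G}$,
\[
\|\z_i(m)-\bar{\z}(m)\|_2\leq \sum_{k=1}^{m-1}\sum_{j=1}^n |a_{ij}(m-1-k)|\,\|\dd_j(k)\|_2\leq \widehat{G}\sqrt{n}\sum_{t=0}^{m-2}\sigma_2(P)^t\leq \frac{\widehat{G}\sqrt{n}}{1-\sigma_2(P)},
\]
where the final step sums the geometric series using $\sigma_2(P)<1$ from Assumption \ref{assum5}.

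The only nontrivial step is the spectral bound $\|P^t-\tfrac{1}{n}\mathbf{1}\mathbf{1}^\top\|_2=\sigma_2(P)^t$; everything else is algebraic unrolling, the doubly stochastic property, Cauchy--Schwarz, and a geometric sum. I expect the main care to be in keeping track of indices (the shift $m-1-k$) and in justifying the passage from the spectral norm bound to the $\ell_1$ row-sum bound on $a_{ij}(t)$, since that extra $\sqrt{n}$ factor is exactly what appears in the final inequality.
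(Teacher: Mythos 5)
Your proof is correct and follows essentially the same route as the paper's: unroll the gossip recursion into $\z_i(m)-\bar{\z}(m)=\sum_{k=1}^{m-1}\sum_{j=1}^n\bigl([P^{m-1-k}]_{ij}-\tfrac{1}{n}\bigr)\dd_j(k)$, bound the row-wise $\ell_1$ deviation of $P^t$ from uniform by $\sqrt{n}\,\sigma_2(P)^t$, and sum the geometric series. The only difference is that the paper imports these two intermediate facts from prior work (the unrolled inequality from the proof of Lemma 6 in \citet{wenpeng17}, and the $\ell_1$ mixing bound from \citet{DADO2011}, which handles $t=0$ as a separate case), whereas you derive them from scratch via the spectral identity $\|P^t-\tfrac{1}{n}\mathbf{1}\mathbf{1}^\top\|_2=\sigma_2(P)^t$ and Cauchy--Schwarz, yielding a self-contained but substantively equivalent argument.
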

\begin{lem}
\label{dual_lem1}
(Lemma 5 in \citet{DADO2011}) Let $\Pi_{\K}(\uu,\eta)=\argmin_{\x\in\K}\eta\uu^\top\x+\|\x\|_2^2$. We have
\[\|\Pi_{\K}(\uu,\eta)-\Pi_{\K}(\vv,\eta)\|_2\leq\frac{\eta}{2}\|\uu-\vv\|_2.\]
\end{lem}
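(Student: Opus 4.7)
\textbf{Proof proposal for Lemma \ref{dual_lem1}.}

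The plan is to exploit the fact that the objective defining $\Pi_\K(\uu,\eta)$ is \emph{$2$-strongly convex} regardless of $\uu$, since the quadratic term $\|\x\|_2^2$ alone contributes all the curvature. Let me abbreviate $\x^\ast=\Pi_\K(\uu,\eta)$ and $\y^\ast=\Pi_\K(\vv,\eta)$, and set
\[
g_{\uu}(\x)=\eta\uu^\top\x+\|\x\|_2^2,\qquad g_{\vv}(\x)=\eta\vv^\top\x+\|\x\|_2^2.
\]
Both $g_{\uu}$ and $g_{\vv}$ are $2$-strongly convex on $\K$, with $\x^\ast$ and $\y^\ast$ their respective constrained minimizers.

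First I would invoke the strong-convexity optimality bound (\ref{cor_scvx}) applied to each objective at the \emph{other} minimizer, yielding
\[
g_{\uu}(\y^\ast)-g_{\uu}(\x^\ast)\ \ge\ \|\y^\ast-\x^\ast\|_2^2,\qquad g_{\vv}(\x^\ast)-g_{\vv}(\y^\ast)\ \ge\ \|\x^\ast-\y^\ast\|_2^2.
\]
Adding these two inequalities is the crucial algebraic step: the common $\|\x\|_2^2$ regularizers cancel, and what survives on the left-hand side is purely linear in $\uu-\vv$. Specifically, the sum collapses to
\[
\eta(\uu-\vv)^\top(\y^\ast-\x^\ast)\ \ge\ 2\|\x^\ast-\y^\ast\|_2^2.
\]

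Finally I would apply Cauchy--Schwarz to the left-hand side, obtaining $\eta\|\uu-\vv\|_2\,\|\x^\ast-\y^\ast\|_2\ge 2\|\x^\ast-\y^\ast\|_2^2$; after handling the trivial degenerate case $\x^\ast=\y^\ast$ separately, dividing through by $2\|\x^\ast-\y^\ast\|_2$ delivers the Lipschitz bound $\|\x^\ast-\y^\ast\|_2\le\tfrac{\eta}{2}\|\uu-\vv\|_2$. I do not anticipate a substantive obstacle here: the argument is essentially the standard ``two-point'' trick for projection/proximal operators, and the constant $\tfrac{\eta}{2}$ arises naturally as the ratio between the coefficient $\eta$ of the linear perturbation and twice the strong-convexity modulus $2$ of the quadratic regularizer. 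The only point requiring a moment of care is the clean cancellation of the $\|\x\|_2^2$ terms when adding the two strong-convexity inequalities, which is why the same regularizer must be used for both $\uu$ and $\vv$.
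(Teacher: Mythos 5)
Your proof is correct. Note that the paper itself gives no proof of this lemma at all --- it is imported by citation as Lemma 5 of \citet{DADO2011} --- so the only thing to check is whether your self-contained argument is valid, and it is: $g_{\uu}$ and $g_{\vv}$ are indeed $2$-strongly convex, applying (\ref{cor_scvx}) at the swapped minimizers gives the two inequalities you state, the $\|\x\|_2^2$ and cross terms cancel exactly as you claim to leave $\eta(\uu-\vv)^\top(\y^\ast-\x^\ast)\geq 2\|\x^\ast-\y^\ast\|_2^2$, and Cauchy--Schwarz plus the trivial case $\x^\ast=\y^\ast$ finishes it. This is the standard two-point perturbation argument for regularized minimizers (the original source argues via the first-order optimality conditions of the two minimizers rather than via the function-value inequality (\ref{cor_scvx}), but the mechanism and the constant $\eta/2$ --- the coefficient of the linear perturbation divided by the strong-convexity modulus $2$ --- are the same), so your proposal would serve as a valid replacement for the external citation.
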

According to Assumptions \ref{assum4} and \ref{assum1}, for any $m\in[B]$, we have
\begin{equation}
\label{bound_d1}
\begin{split}
\|\dd_i(m)\|_2=&\|\widehat{\g}_i(m)-\alpha K\x_i(m)\|_2=\left\|\sum_{t\in\mathcal{T}_m}\g_{i}(t)-\alpha K\x_i(m)\right\|_2\\
\leq&\sum_{t\in\mathcal{T}_m}\|\g_{i}(t)\|_2+\alpha K\|\x_i(m)\|_2\leq K(G+\alpha R)
\end{split}
\end{equation}
where $\mathcal{T}_m=\{(m-1)K+1,\dots,mK\}$.

By applying Lemma \ref{graph_lem_zhang} with $\|\dd_i(m)\|_2\leq K(G+\alpha R)$, for any $B\in[m]$, we have
\begin{equation}
\label{thm_sc-eq4}
\|\z_i(m)-\bar{\z}(m)\|_2\leq\frac{K(G+\alpha R)\sqrt{n}}{1-\sigma_2(P)}.
\end{equation}
Moreover, for any $B\geq m\geq2$, we notice that
\begin{equation}
\label{reuse_eq1}
\begin{split}
\widehat{\x}_i(m)&=\argmin\limits_{\x\in\K}\z_{i}(m-1)^{\top}\mathbf{x}+\frac{(m-2)\alpha K}{2}\|\mathbf{x}\|_2^2+h\|\mathbf{x}-\mathbf{x}_{\ii}\|_2^2\\
&=\argmin\limits_{\x\in\K}(\z_{i}(m-1)-2h\mathbf{x}_{\ii})^{\top}\mathbf{x}+\frac{(m-2)\alpha K+2h}{2}\|\mathbf{x}\|_2^2\\
&=\argmin\limits_{\x\in\K}\frac{2}{(m-2)\alpha K+2h}(\z_{i}(m-1)-2h\mathbf{x}_{\ii})^{\top}\mathbf{x}+\|\mathbf{x}\|_2^2.
\end{split}
\end{equation}
Similarly, for any $B\geq m\geq2$, we have
\begin{equation}
\label{reuse_eq2}
\begin{split}
\bar{\x}(m)&=\argmin\limits_{\x\in\K}\bar{\z}(m-1)^{\top}\mathbf{x}+\frac{(m-2)\alpha K}{2}\|\mathbf{x}\|_2^2+h\|\mathbf{x}-\mathbf{x}_{\ii}\|_2^2\\
&=\argmin\limits_{\x\in\K}(\bar{\z}(m-1)-2h\mathbf{x}_{\ii})^{\top}\mathbf{x}+\frac{(m-2)\alpha K+2h}{2}\|\mathbf{x}\|_2^2\\
&=\argmin\limits_{\x\in\K}\frac{2}{(m-2)\alpha K+2h}(\bar{\z}(m-1)-2h\mathbf{x}_{\ii})^{\top}\mathbf{x}+\|\mathbf{x}\|_2^2.
\end{split}
\end{equation}
Therefore, by combining Lemma \ref{dual_lem1} with (\ref{thm_sc-eq4}), for any $B\geq m\geq2$, we have
\begin{align*}\|\widehat{\x}_i(m)-\bar{\x}(m)\|_2&\leq\frac{1}{(m-2)\alpha K+2h}\|\z_i(m-1)-2h\mathbf{x}_{\ii}-\bar{\z}(m-1)+2h\mathbf{x}_{\ii}\|_2\\
&=\frac{1}{(m-2)\alpha K+2h}\|\z_i(m-1)-\bar{\z}(m-1)\|_2\\
&\leq\frac{K(G+\alpha R)\sqrt{n}}{((m-2)\alpha K+2h) (1-\sigma_2(P))}
\end{align*}
By substituting the above inequality into (\ref{thm_sc-eq3}), for any $B\geq m\geq2$, we have
\begin{equation}
\label{thm_sc-eq5}
\begin{split}
\|\x_i(m)-\bar{\x}(m)\|_2\leq\sqrt{\frac{2\epsilon_m}{(m-2)\alpha K+2h}}+\frac{K(G+\alpha R)\sqrt{n}}{((m-2)\alpha K+2h) (1-\sigma_2(P))}.
\end{split}
\end{equation}
Then, let $u_1=0$ and $u_m=\sqrt{\frac{2\epsilon_m}{(m-2)\alpha K+2h}}+\frac{K(G+\alpha R)\sqrt{n}}{((m-2)\alpha K+2h) (1-\sigma_2(P))}$ for any $B\geq m\geq2$. From (\ref{thm_sc-eq2}) and (\ref{thm_sc-eq5}), for any $m\in[B]$, it holds that $\|\x_i(m)-\bar{\x}(m)\|_2\leq u_m.$

Next, let $\x^\ast\in\argmin_{\x\in\K}\sum_{t=1}^Tf_t(\x)$. For any $i,j\in V$, $m\in[B]$, and $t\in\mathcal{T}_m$, according to Assumptions \ref{assum4} and \ref{assum_sc}, we have
\begin{equation*}
\begin{split}
&f_{t,j}(\x_{i}(m))-f_{t,j}(\x^\ast)\\
\leq&f_{t,j}(\bar{\x}(m))+G\|\bar{\x}(m)-\x_i(m)\|_2-f_{t,j}(\x^\ast)\\
\leq&f_{t,j}(\x_{j}(m))+G\|\bar{\x}(m)-\x_j(m)\|_2-f_{t,j}(\x^\ast)+Gu_{m}\\
\leq&\nabla f_{t,j}(\x_{j}(m))^\top(\x_{j}(m)-\x^\ast)-\frac{\alpha}{2}\|\x_{j}(m)-\x^\ast\|_2^2+2Gu_{m}\\
=&\nabla f_{t,j}(\x_{j}(m))^\top(\x_{j}(m)-\bar{\x}(m))+\nabla f_{t,j}(\x_{j}(m))^\top(\bar{\x}(m)-\x^\ast)-\frac{\alpha}{2}\|\x_{j}(m)-\x^\ast\|_2^2+2Gu_{m}\\
\leq&\nabla f_{t,j}(\x_{j}(m))^\top(\bar{\x}(m)-\x^\ast)-\frac{\alpha}{2}\|\x_{j}(m)-\x^\ast\|_2^2+3Gu_{m}
\end{split}
\end{equation*}
where the third inequality is due to the strong convexity of $f_{t,j}(\x)$ and the last inequality is due to \[\nabla f_{t,j}(\x_{j}(m))^\top(\x_{j}(m)-\bar{\x}(m))\leq\|\nabla f_{t,j}(\x_{j}(m))\|_2\|\x_{j}(m)-\bar{\x}(m)\|_2\leq G u_m.\]
Moreover, we note that
\begin{align*}
\|\x_{j}(m)-\x^\ast\|_2^2=&\|\x_{j}(m)-\bar{\x}(m)\|_2^2+2\x_{j}(m)^\top(\bar{\x}(m)-\x^\ast)+\|\x^\ast\|_2^2-\|\bar{\x}(m)\|_2^2\\
\geq&2\x_{j}(m)^\top(\bar{\x}(m)-\x^\ast)+\|\x^\ast\|_2^2-\|\bar{\x}(m)\|_2^2.
\end{align*}
Therefore, for any $i,j\in V$, $m\in[B]$, and $t\in\mathcal{T}_m$, we have
\begin{equation*}
\begin{split}
&f_{t,j}(\x_{i}(m))-f_{t,j}(\x^\ast)-3Gu_{m}\\
\leq&(\nabla f_{t,j}(\x_{j}(m))-\alpha\x_j(m))^\top(\bar{\x}(m)-\x^\ast)-\frac{\alpha}{2}(\|\x^\ast\|_2^2-\|\bar{\x}(m)\|_2^2)
\end{split}
\end{equation*}
By summing over $t\in\mathcal{T}_m$ and $m\in[B]$, for any $i,j\in V$, we have
\begin{equation*}
\begin{split}
&\sum_{m=1}^B\sum_{t\in\mathcal{T}_m}(f_{t,j}(\x_{i}(m))-f_{t,j}(\x^\ast))-3G\sum_{m=1}^B\sum_{t\in\mathcal{T}_m}u_{m}\\
\leq&\sum_{m=1}^B\sum_{t\in\mathcal{T}_m}(\nabla f_{t,j}(\x_{j}(m))-\alpha\x_j(m))^\top(\bar{\x}(m)-\x^\ast)-\sum_{m=1}^B\sum_{t\in\mathcal{T}_m}\frac{\alpha}{2}(\|\x^\ast\|_2^2-\|\bar{\x}(m)\|_2^2)\\
=&\sum_{m=1}^B(\widehat{\g}_j(m)-\alpha K\x_{j}(m))^\top(\bar{\x}(m)-\x^\ast)-\sum_{m=1}^B\frac{\alpha K}{2}(\|\x^\ast\|_2^2-\|\bar{\x}(m)\|_2^2).
\end{split}
\end{equation*}
Furthermore, by summing over $j=1,\dots,n$, for any $i\in V$, we have
\begin{equation*}
\begin{split}
&\sum_{m=1}^B\sum_{t\in\mathcal{T}_m}\sum_{j=1}^n(f_{t,j}(\x_{i}(m))-f_{t,j}(\x^\ast))-3G\sum_{m=1}^B\sum_{t\in\mathcal{T}_m}\sum_{j=1}^nu_{m}\\
\leq&\sum_{m=1}^B\sum_{j=1}^n(\widehat{\g}_j(m)-\alpha K\x_{j}(m))^\top(\bar{\x}(m)-\x^\ast)-\frac{\alpha nK}{2}\sum_{m=1}^B(\|\x^\ast\|_2^2-\|\bar{\x}(m)\|_2^2)\\
=&n\sum_{m=1}^B\left(\bar{\dd}(m)^\top(\bar{\x}(m)-\x^\ast)-\frac{\alpha K}{2}(\|\x^\ast\|_2^2-\|\bar{\x}(m)\|_2^2)\right).
\end{split}
\end{equation*}
Then, it is easy to verify that
\begin{equation}
\label{thm_sc-eq6}
\begin{split}
R_{T,i}=&\sum_{m=1}^B\sum_{t\in\mathcal{T}_m}\sum_{j=1}^n(f_{t,j}(\x_{i}(m))-f_{t,j}(\x^\ast))\\
\leq&n\sum_{m=1}^B\left(\bar{\dd}(m)^\top(\bar{\x}(m)-\x^\ast)-\frac{\alpha K}{2}(\|\x^\ast\|_2^2-\|\bar{\x}(m)\|_2^2)\right)+3G\sum_{m=1}^B\sum_{t\in\mathcal{T}_m}\sum_{j=1}^nu_{m}\\
=&n\sum_{m=1}^B\left(\bar{\dd}(m)^\top(\bar{\x}(m)-\x^\ast)-\frac{\alpha K}{2}(\|\x^\ast\|_2^2-\|\bar{\x}(m)\|_2^2)\right)+3nGK\sum_{m=1}^Bu_{m}.
\end{split}
\end{equation}
To bound $\sum_{m=1}^B\left(\bar{\dd}(m)^\top(\bar{\x}(m)-\x^\ast)-\frac{\alpha K}{2}(\|\x^\ast\|_2^2-\|\bar{\x}(m)\|_2^2)\right)$, we introduce the following lemma.
\begin{lem}
\label{ftrl1}
(Lemma 2.3 in \citet{Online:suvery}) Let $\widehat{\x}_t^\ast=\argmin_{\x\in\K}\sum_{i=1}^{t-1}f_i(\x)+\mathcal{R}(\x),\forall t\in[T]$, where $\mathcal{R}(\x)$ is a strongly convex function. Then, $\forall \x\in\K$, it holds that
\begin{align*}
\sum_{t=1}^T\left(f_t(\widehat{\x}_t^\ast)-f_t(\x)\right)\leq\mathcal{R}(\x)-\mathcal{R}(\widehat{\x}_1^\ast)+\sum_{t=1}^T\left(f_t(\widehat{\x}_t^\ast)-f_t(\widehat{\x}_{t+1}^\ast)\right).
\end{align*}
\end{lem}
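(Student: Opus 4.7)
The plan is to prove this classical follow-the-regularized-leader (a.k.a.\ Be-The-Leader) inequality by induction on $T$. Introducing the shorthand $\Phi_t(\x)=\sum_{i=1}^{t-1} f_i(\x)+\mathcal{R}(\x)$, so that $\widehat{\x}_t^\ast=\argmin_{\x\in\K}\Phi_t(\x)$, I would first establish the auxiliary claim that for every integer $T\geq 0$ and every $\x\in\K$,
\[
\sum_{t=1}^T f_t(\widehat{\x}_{t+1}^\ast) + \mathcal{R}(\widehat{\x}_1^\ast) \leq \sum_{t=1}^T f_t(\x) + \mathcal{R}(\x).
\]
Granting this, the statement of the lemma follows immediately by subtracting $\sum_{t=1}^T f_t(\x)$ from both sides and then adding the telescoping quantity $\sum_{t=1}^T\bigl(f_t(\widehat{\x}_t^\ast)-f_t(\widehat{\x}_{t+1}^\ast)\bigr)$ to both sides, converting the bound on $\sum_{t=1}^T f_t(\widehat{\x}_{t+1}^\ast)$ into the desired bound on $\sum_{t=1}^T f_t(\widehat{\x}_t^\ast)$.

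The base case $T=0$ reduces the claim to $\mathcal{R}(\widehat{\x}_1^\ast)\leq \mathcal{R}(\x)$, which is immediate from the definition $\widehat{\x}_1^\ast=\argmin_{\x\in\K}\mathcal{R}(\x)$. For the inductive step, I would apply the claim at stage $T-1$ to the particular point $\x=\widehat{\x}_{T+1}^\ast$, giving
\[
\sum_{t=1}^{T-1} f_t(\widehat{\x}_{t+1}^\ast) + \mathcal{R}(\widehat{\x}_1^\ast) \;\leq\; \sum_{t=1}^{T-1} f_t(\widehat{\x}_{T+1}^\ast) + \mathcal{R}(\widehat{\x}_{T+1}^\ast) \;=\; \Phi_T(\widehat{\x}_{T+1}^\ast).
\]
Adding $f_T(\widehat{\x}_{T+1}^\ast)$ to both sides turns the right-hand side into $\Phi_{T+1}(\widehat{\x}_{T+1}^\ast)$, which by the defining optimality of $\widehat{\x}_{T+1}^\ast$ is bounded above by $\Phi_{T+1}(\x)$ for every $\x\in\K$. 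This closes the induction and yields the claim at stage $T$.

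Since this is standard FTRL boilerplate, I do not anticipate any substantive obstacle; the only real care needed is indexing bookkeeping, namely remembering that $\widehat{\x}_t^\ast$ minimizes the cumulative regularized loss through index $t-1$, while $f_t$ itself is the newly arriving $t$-th loss. It is also worth flagging that the strong convexity of $\mathcal{R}$ is not actually invoked inside this lemma---it only guarantees that the argmin defining $\widehat{\x}_t^\ast$ is well-defined---and its quantitative role appears later, when the lemma is applied to bound the successive-minimizer gap $f_t(\widehat{\x}_t^\ast)-f_t(\widehat{\x}_{t+1}^\ast)$ via stability of the regularized minimizer.
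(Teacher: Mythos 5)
Your proof is correct. The paper never proves this lemma itself—it imports it verbatim as Lemma 2.3 of the cited survey—and your argument (first establishing the be-the-leader inequality $\sum_{t=1}^T f_t(\widehat{\x}_{t+1}^\ast)+\mathcal{R}(\widehat{\x}_1^\ast)\leq\sum_{t=1}^T f_t(\x)+\mathcal{R}(\x)$ by induction on $T$, then converting it into the stated bound by adding the telescoping differences $f_t(\widehat{\x}_t^\ast)-f_t(\widehat{\x}_{t+1}^\ast)$) is precisely the standard proof given in that source, with the induction step and indexing handled correctly; your remark that the strong convexity of $\mathcal{R}$ serves only to make the minimizers well-defined is also accurate.
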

Before applying Lemma \ref{ftrl1}, we define $\widetilde{f}_m(\x)=\bar{\dd}(m)^\top\x+\frac{\alpha K}{2}\|\x\|_2^2.$ For any $\x\in\K$, it is easy to verify that
\begin{equation}
\label{thm_sc-eq7}
\begin{split}
\|\nabla\widetilde{f}_m(\x)\|_2=&\|\bar{\dd}(m)+\alpha K\x\|_2\leq\left\|\frac{1}{n}\sum_{j=1}^n\dd_j(m)\right\|_2+\|\alpha K\x\|_2\leq K(G+2\alpha R)
\end{split}
\end{equation}
where the last inequality is due to Assumption \ref{assum1} and (\ref{bound_d1}).

Moreover, according to the definition and (\ref{thm_sc-eq1}), for any $m\in[B]$, we have
\[
\bar{\x}(m+1)=\argmin\limits_{\x\in\K}\bar{\z}(m)^{\top}\mathbf{x}+\frac{(m-1)\alpha K}{2}\|\mathbf{x}\|_2^2+h\|\mathbf{x}-\mathbf{x}_{\ii}\|_2^2=\argmin\limits_{\x\in\K}\sum_{\tau=1}^{m-1}\widetilde{f}_\tau(\x)+h\|\mathbf{x}-\mathbf{x}_{\ii}\|_2^2.
\]
By applying Lemma \ref{ftrl1} with the loss functions $\{\widetilde{f}_m(\x)\}_{m=1}^B$, the decision set $\K$, and the regularizer $\mathcal{R}(\x)=h\|\mathbf{x}-\mathbf{x}_{\ii}\|_2^2$, we have
\begin{equation}
\label{thm_sc-eq8}
\begin{split}
&\sum_{m=1}^B\left(\widetilde{f}_m(\bar{\x}(m+1))-\widetilde{f}_m(\x^\ast)\right)\\
&\leq h\|\x^\ast-\mathbf{x}_{\ii}\|_2^2-h\|\bar{\x}(2)-\mathbf{x}_{\ii}\|_2^2+\sum_{m=1}^B\left(\widetilde{f}_m(\bar{\x}(m+1))-\widetilde{f}_m(\bar{\x}(m+2))\right)\\
&\leq4hR^2+\sum_{m=1}^B\nabla\widetilde{f}_m(\bar{\x}(m+1))^\top(\bar{\x}(m+1)-\bar{\x}(m+2))\\
&\leq4hR^2+\sum_{m=1}^BK(G+2\alpha R)\|\bar{\x}(m+1)-\bar{\x}(m+2)\|_2\\
\end{split}
\end{equation}
where the last inequality is due to the Cauchy-Schwarz inequality and (\ref{thm_sc-eq7}).

Note that $\bar{F}_{m+1}(\x)$ is $(m\alpha K+2h)$-strongly convex and $\bar{\x}(m+2)=\argmin_{\x\in\K}\bar{F}_{m+1}(\x)$. For any $m\in[B]$, we have
\begin{equation*}
\begin{split}
&\frac{m\alpha K+2h}{2}\|\bar{\x}(m+1)-\bar{\x}(m+2)\|_2^2\\
\leq& \bar{F}_{m+1}(\bar{\x}(m+1))-\bar{F}_{m+1}(\bar{\x}(m+2))\\
=&\bar{F}_{m}(\bar{\x}(m+1))+\widetilde{f}_m(\bar{\x}(m+1))-\bar{F}_{m}(\bar{\x}(m+2))-\widetilde{f}_m(\bar{\x}(m+2))\\
\leq&\widetilde{f}_m(\bar{\x}(m+1))-\widetilde{f}_m(\bar{\x}(m+2))\\
\leq&\nabla\widetilde{f}_m(\bar{\x}(m+1))^\top(\bar{\x}(m+1)-\bar{\x}(m+2))\\
\leq&K(G+2\alpha R)\|\bar{\x}(m+1)-\bar{\x}(m+2)\|_2
\end{split}
\end{equation*}
where the first inequality is due to (\ref{cor_scvx}), the second inequality is due to $\bar{\x}(m+1)=\argmin_{\x\in\K}\bar{F}_m(\x)$, and the last inequality is due to the Cauchy-Schwarz inequality and (\ref{thm_sc-eq7}).

For any $m\in[B]$, the above equality can be simplified as
\begin{equation*}
\begin{split}
\|\bar{\x}(m+1)-\bar{\x}(m+2)\|_2\leq&\frac{2K(G+2\alpha R)}{m\alpha K+2h}.
\end{split}
\end{equation*}
Then, by combining the above inequality with (\ref{thm_sc-eq8}), we have
\begin{equation}
\label{thm_sc-eq11}
\begin{split}
&\sum_{m=1}^B\left(\bar{\dd}(m)^\top(\bar{\x}(m)-\x^\ast)-\frac{\alpha K}{2}(\|\x^\ast\|_2^2-\|\bar{\x}(m)\|_2^2)\right)\\
=&\sum_{m=1}^B\left(\widetilde{f}_m(\bar{\x}(m))-\widetilde{f}_m(\x^\ast)\right)\\
=&\sum_{m=1}^B\left(\widetilde{f}_m(\bar{\x}(m))-\widetilde{f}_m(\bar{\x}(m+1))\right)+\sum_{m=1}^B\left(\widetilde{f}_m(\bar{\x}(m+1))-\widetilde{f}_m(\x^\ast)\right)\\
\leq&K(G+2\alpha R)\sum_{m=1}^B\left(\|\bar{\x}(m)-\bar{\x}(m+1)\|_2+\|\bar{\x}(m+1)-\bar{\x}(m+2)\|_2\right)+4hR^2\\
\leq&2K(G+2\alpha R)\sum_{m=1}^B\|\bar{\x}(m+1)-\bar{\x}(m+2)\|_2+4hR^2\\
\leq&\sum_{m=1}^B\frac{4K^2(G+2\alpha R)^2}{m\alpha K+2h}+4hR^2
\end{split}
\end{equation}
where the second inequality is due to $\bar{\x}(2)=\argmin_{\x\in\K}\bar{F}_1(\x)=\mathbf{x}_{\ii}=\bar{\x}(1)$ and $\|\bar{\x}(1)-\bar{\x}(2)\|_2=0\leq\|\bar{\x}(B+1)-\bar{\x}(B+2)\|_2$.

Finally, we complete the proof by substituting the definition of $u_m$ and (\ref{thm_sc-eq11}) into (\ref{thm_sc-eq6}).
\subsubsection{Proof of Theorem \ref{thm1-sc-2}}
\label{A1.2}
We note that $F_{m-1,i}(\x)$ is $((m-2)\alpha K+2h)$-smooth, and according to our Algorithm \ref{DBOCG-SC}, we have
\[
\mathbf{x}_{i}(m)=\text{CG}(\mathcal{K}, L, F_{m-1,i}(\x), \x_i(m-1)).
\]
Therefore, for any $B\geq m\geq2$, by applying Lemma \ref{lem_ILO}, it is easy to verify that
\begin{equation*}
\epsilon_m=\max_{i\in[n]}\left(F_{m-1,i}(\x_i(m))-\min_{\x\in\K}F_{m-1,i}(\x)\right)\leq\frac{8((m-2)\alpha K+2h) R^2}{L+2}.
\end{equation*}
By substituting the above inequality and $K(B-1)\leq KB=T$ into Theorem \ref{thm1-sc}, we have
\begin{equation*}
\begin{split}
R_{T,i}\leq\frac{12nGRT}{\sqrt{L+2}}+\sum_{m=2}^B\frac{3nGK^2(G+\alpha R)\sqrt{n}}{((m-2)\alpha K+2h) (1-\sigma_2(P))}+\sum_{m=1}^B\frac{4nK^2(G+2\alpha R)^2}{m\alpha K+2h}+4nhR^2.
\end{split}
\end{equation*}
\section{Lower Bounds}
In this section, we present lower bounds regarding the communication complexity for convex losses and strongly convex losses, respectively.
\subsection{Convex Losses}
Following previous studies \citep{D-ODA,wenpeng17}, when developing distributed online algorithms, we need to upper bound the regret of all local learners simultaneously. Correspondingly, to establish a lower regret bound for these distributed online algorithms, we actually only need to prove that the regret of one local learner has a lower bound. For simplicity, in the following, we will consider to derive a lower regret bound for the local learner 1.

For convex losses, we present a lower bound in the following theorem.
\begin{thm}
\label{thm_lowerB}
Suppose $\K=\left[-R/\sqrt{d},R/\sqrt{d}\right]^d$, which satisfies Assumption \ref{assum1} with $R=R$ and $r=R/\sqrt{d}$. For distributed OCO with $n>1$ local learners over $\K$ and any distributed online algorithm communicating $C$ rounds before the round $T$, there exists a sequence of local loss functions satisfying Assumption \ref{assum4} such that
\[R_{T,1}\geq\frac{nRGT}{2\sqrt{2(C+1)}}.\]
\end{thm}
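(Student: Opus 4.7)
The proof follows a classical Rademacher-based randomised adversary strategy, combined with a pigeonhole-type information argument and finished off by the probabilistic method. First I would partition $[T]$ into $C+1$ equal-sized blocks $B_1,\ldots,B_{C+1}$ of length $K=T/(C+1)$ (assuming $(C+1)\mid T$ without loss of generality), and for each block $b$ and coordinate $i\in[d]$ sample independent Rademacher signs $\sigma_{b,i}\in\{-1,+1\}$. Define
\begin{equation*}
f_{t,1}(\x)\equiv 0, \qquad f_{t,j}(\x)=\frac{G}{\sqrt{d}}\sum_{i=1}^d \sigma_{b(t),i}\, x_i \quad(j\ge 2),
\end{equation*}
where $b(t)$ is the block containing round $t$. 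A direct computation gives $\|\nabla f_{t,j}\|_2=G$, so Assumption~\ref{assum4} holds, and $\K=[-R/\sqrt d,R/\sqrt d]^d$ satisfies Assumption~\ref{assum1} by hypothesis.

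Next I would bound the two sides of the expected regret. The global loss is $f_t(\x)=(n-1)G\sum_i\sigma_{b(t),i}x_i/\sqrt d$, so
\begin{equation*}
\min_{\x\in\K}\sum_{t=1}^T f_t(\x) = -\frac{(n-1)GKR}{d}\sum_{i=1}^d\Bigl|\sum_{b=1}^{C+1}\sigma_{b,i}\Bigr|.
\end{equation*}
Applying Szarek's sharp form of Khintchine's inequality coordinate-wise (i.e.\ $\E|\sum_b\sigma_{b,i}|\ge\sqrt{(C+1)/2}$) and using $K(C+1)=T$ yields
\begin{equation*}
\E\Bigl[\min_{\x\in\K}\sum_t f_t(\x)\Bigr] \le -\frac{(n-1)GRT}{\sqrt{2(C+1)}}.
\end{equation*}
On the player side, because $f_{t,1}\equiv 0$ learner~1 receives no local information, and every $\x_1(t)$ is measurable with respect to messages received during the at most $C$ communication rounds. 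A pigeonhole argument (there are $C+1$ blocks but only $C$ communications) combined with a clean timing convention — ``a communication at round $\tau$ transmits only information about rounds $<\tau$'' — shows that $\x_1(t)$ is independent of $\sigma_{b(t),\cdot}$, so $\E[\sigma_{b(t),i}x_1^{(i)}(t)]=0$ and hence $\E[\sum_t f_t(\x_1(t))]=0$.

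Combining the two bounds gives $\E[R_{T,1}]\ge (n-1)GRT/\sqrt{2(C+1)}\ge nGRT/(2\sqrt{2(C+1)})$ using $n-1\ge n/2$ for $n\ge 2$, and the probabilistic method then produces a specific realisation of the $\sigma_{b,i}$'s with $R_{T,1}$ at least as large. The main obstacle is establishing the independence claim uniformly over all $t$, since a communication occurring inside $B_{b(t)}$ could in principle carry information about $\sigma_{b(t),\cdot}$. I would resolve this either by the timing convention above (so that mid-block communications can only reveal previous-block signs) or, more robustly, by a Yao/minimax reduction to batched OCO: between consecutive communications learner~1's effective state is constant, reducing the problem to $C+1$ rounds of online linear optimisation with per-super-round gradient magnitude $nGK$ and domain diameter $2R$, and the classical Rademacher lower bound for batched OCO is precisely $\Omega(nGKR\sqrt{C+1})=\Omega(nGRT/\sqrt{C+1})$, matching the stated bound up to universal constants.
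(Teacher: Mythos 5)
Your primary construction has a genuine gap: with equal-sized blocks $B_1,\dots,B_{C+1}$ fixed in advance, the independence claim $\E[\sigma_{b(t),i}x_1^{(i)}(t)]=0$ is false, and the timing convention does not rescue it. A communication at a round $\tau$ lying strictly inside block $B_b$ is allowed (even under your convention) to transmit information about rounds $<\tau$, and those rounds include earlier rounds of $B_b$ itself; since the other learners' gradients in those rounds equal $\frac{G}{\sqrt d}\sigma_{b,\cdot}$, learner $1$ learns $\sigma_{b,\cdot}$ and can play the minimizing vertex for the remainder of $B_b$. So your parenthetical claim that ``mid-block communications can only reveal previous-block signs'' is exactly what fails. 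Concretely, an algorithm that spends its $C$ communications one round after the boundaries of blocks $B_1,\dots,B_C$ achieves per-round loss $-(n-1)GR$ on almost all rounds of those blocks, hence total loss close to $-(n-1)GRT$, while the comparator's loss is only of order $-(n-1)GRT/\sqrt{2(C+1)}$; the regret of this algorithm against your random losses is then hugely negative, and the probabilistic method yields nothing. The pigeonhole observation (some block contains no communication) does not repair this, because regret is measured against a single global comparator and the compromised blocks contribute large negative terms that swamp the one clean block.

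The paper avoids this by letting the adversary adapt the block structure to the algorithm: the intervals are exactly the segments $[c_i+1,c_{i+1}]$ between consecutive communication rounds, with a fresh Rademacher vector $\w_i$ per interval that is revealed to learner $1$ only after $\x_1(c_{i+1})$ is chosen. By construction no communication falls strictly inside an interval, so learner $1$'s decisions within an interval genuinely are independent of that interval's $\w_i$; the price is that interval lengths are variable, so after Khintchine one needs Cauchy--Schwarz, $\sum_{i=0}^{C}(c_{i+1}-c_i)^2\ge T^2/(C+1)$, to conclude. Your fallback reduction (b) is essentially this argument and is the right fix, but as stated it is not carried out: it presumes per-super-round length $K=T/(C+1)$, i.e., equally spaced communications, whereas the spacing is chosen by the algorithm; in general you must run the batched argument with lengths $c_{i+1}-c_i$ and invoke the Cauchy--Schwarz step above (equal spacing then appears as the worst case, not as an assumption). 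So the target bound is correct, and route (b), once fleshed out with algorithm-dependent interval lengths, coincides with the paper's proof; route (a) as written is unsalvageable.
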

\begin{proof}
In each round $t$, we simply set $f_{t,1}(\x)=0$ for the local learner $1$, and select $f_{t,2}(\x),\dots,f_{t,n}(\x)$ for other local learners with a more careful strategy. In this way, the global loss function is $f_t(\x)=f_{t,1}(\x)+\sum_{i=2}^nf_{t,i}(\x)=\sum_{i=2}^nf_{t,i}(\x)$. Since the local loss function $f_{t,i}(\x)$ is only revealed to the local learner $i\in[n]$, the local learner $1$ cannot access to the global loss unless it communicates with other local learners. In this way, we can maximize the impact of communication on the regret of the local learner $1$.

Without loss of generality, we denote the set of communication rounds by $\C=\{c_1,\dots,c_C\}$, where $1\leq c_1<\dots<c_C<T$. Let $c_0=0,c_{C+1}=T$. Then, we can divide the total $T$ rounds into the following $C+1$ intervals
\[[c_0+1,c_1],[c_1+1,c_2],\dots,[c_C+1,c_{C+1}].\]
For any $i\in\{0,\dots,C\}$ and $t\in[c_i+1,c_{i+1}]$, we will set $f_{t,2}(\x)=\dots=f_{t,n}(\x)=h_i(\x)$, which is revealed to the local learner $1$ after the decision $\x_1(c_{i+1})$ is made. In this way, the global loss can be written as $f_t(\x)=(n-1)h_i(\x)$ for any $i\in\{0,\dots,C\}$ and $t\in[c_i+1,c_{i+1}]$.

For any distributed online algorithm with communication rounds $\C=\{c_1,\dots,c_C\}$, we denote the sequence of decisions made by the local learner $1$ as $\x_1(1),\dots,\x_1(T)$. For any $i\in\{0,\dots,C\}$, we note that the decisions $\x_1(c_i+1),\dots,\x_1(c_{i+1})$ are made before the loss function $h_i(\x)$ is revealed.

Inspired by the lower bound for the general OCO \citep{Abernethy08}, we first utilize a randomized strategy to select $h_i(\x)$ for any $i\in\{0,\dots,C\}$, and derive an expected lower bound for $R_{T,1}$. Specifically, we independently select $h_i(\x)=\w_i^\top\x$ for any $i\in\{0,\dots,C\}$, where the coordinates of $\w_i$ are $\pm G/\sqrt{d}$ with probability $1/2$ and $h_i(\x)$ satisfies Assumption \ref{assum4}. Then, it is not hard to verify that
\begin{align*}
\E_{\w_0,\dots,\w_C}[R_{T,1}]=&\E_{\w_0,\dots,\w_C}\left[\sum_{t=1}^Tf_{t}(\x_1(t))-\min\limits_{\x\in\K}\sum_{t=1}^Tf_{t}(\x)\right]\\
=&\E_{\w_0,\dots,\w_C}\left[\sum_{i=0}^C\sum_{t=c_i+1}^{c_{i+1}}(n-1)h_i(\x_1(t))-\min\limits_{\x\in\K}\sum_{i=0}^C\sum_{t=c_i+1}^{c_{i+1}}(n-1)h_i(\x)\right]\\
=&(n-1)\E_{\w_0,\dots,\w_C}\left[\sum_{i=0}^C\sum_{t=c_i+1}^{c_{i+1}}\w_i^\top\x_1(t)-\min\limits_{\x\in\K}\sum_{i=0}^C(c_{i+1}-c_i)\w_i^\top\x\right]\\
=&(n-1)\E_{\w_0,\dots,\w_C}\left[-\min\limits_{\x\in\K}\sum_{i=0}^C(c_{i+1}-c_i)\w_i^\top\x\right]
\end{align*}
where the last equality is due to $\E_{\w_0,\dots,\w_C}[\w_i^\top\x_1(t)]=0$ for any $t\in[c_i+1,c_{i+1}]$.

Then, we have
\begin{align*}
\E_{\w_0,\dots,\w_C}[R_{T,1}]=&-(n-1)\E_{\w_0,\dots,\w_C}\left[\min\limits_{\x\in\K}\x^\top\sum_{i=0}^C(c_{i+1}-c_i)\w_i\right]\\
=&-(n-1)\E_{\w_0,\dots,\w_C}\left[\min\limits_{\x\in\left\{-R/\sqrt{d},R/\sqrt{d}\right\}^d}\x^\top\sum_{i=0}^C(c_{i+1}-c_i)\w_i\right]
\end{align*}
where the last equality is due to the fact that a linear function is minimized at the vertices of the cube.

Let $\epsilon_{01},\dots,\epsilon_{0d},\dots,\epsilon_{C1},\dots,\epsilon_{Cd}$ be independent and identically distributed variables with $\Pr(\epsilon_{ij}=\pm 1)=1/2$ for $i\in\{0,\dots,C\}$ and $j\in\{1,\dots,d\}$. Then, we have
\begin{equation}
\label{eq_lowerB}
\begin{split}
\E_{\w_0,\dots,\w_C}[R_{T,1}]=&-(n-1)\E_{\epsilon_{01},\dots,\epsilon_{Cd}}\left[\sum_{j=1}^d-\frac{R}{\sqrt{d}}\left|\sum_{i=0}^C(c_{i+1}-c_i)\frac{\epsilon_{ij}G}{\sqrt{d}}\right|\right]\\
=&(n-1)RG\E_{\epsilon_{01},\dots,\epsilon_{C1}}\left[\left|\sum_{i=0}^C(c_{i+1}-c_i)\epsilon_{i1} \right|\right]\\
\geq&\frac{(n-1)RG}{\sqrt{2}}\sqrt{\sum_{i=0}^C(c_{i+1}-c_i)^2}\geq\frac{(n-1)RG}{\sqrt{2}}\sqrt{\frac{(c_{C+1}-c_0)^2}{C+1}}\\
=&\frac{(n-1)RGT}{\sqrt{2(C+1)}}
\end{split}
\end{equation}
where the first inequality is due to the Khintchine inequality and the second inequality is due to the Cauchy-Schwarz inequality. The expected lower bound in (\ref{eq_lowerB}) implies that for any distributed online algorithm with communication rounds $\C=\{c_1,\dots,c_C\}$, there exists a particular choice of $\w_0,\dots,\w_C$ such that \[R_{T,1}\geq\frac{(n-1)RGT}{\sqrt{2(C+1)}}\geq\frac{nRGT}{2\sqrt{2(C+1)}}\]
where the last inequality is due to $n-1\geq n/2$ for any integer $n\geq2$.
\end{proof}
\begin{remark}
\label{rem4}
\emph{
Theorem \ref{thm_lowerB} essentially establishes an $\Omega(\sqrt{T})$ lower bound on the communication rounds required by any distributed online algorithm whose all local learners achieve the $O(T^{3/4})$ regret bound for convex losses, which matches (in terms of $T$) the $O(\sqrt{T})$ communication rounds required by our D-BOCG up to constant factors.
}
\end{remark}
\subsection{Strongly Convex Losses}
For strongly convex losses, we provide a lower bound in the following theorem.
\begin{thm}
\label{thm_lowerB-sc}
Suppose $\K=\left[-R/\sqrt{d},R/\sqrt{d}\right]^d$, which satisfies Assumption \ref{assum1} with $R=R$ and $r=R/\sqrt{d}$. For distributed OCO with $n>1$ local learners over $\K$ and any distributed online algorithm communicating at the end of $C$ rounds before the round $T$, there exists a sequence of local loss functions satisfying Assumption \ref{assum_sc} with $\alpha>0$ and Assumption \ref{assum4} with $G=2\alpha R$ respectively such that
\[R_{T,1}\geq\frac{\alpha nR^2T}{8(C+1)}.\]
\end{thm}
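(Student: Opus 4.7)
}
My plan is to mimic the construction in the proof of Theorem \ref{thm_lowerB} but replace the linear loss on each interval by an $\alpha$-strongly convex quadratic around a random target. As before, I set $f_{t,1}(\x)=0$, so learner $1$ can learn anything about the global loss only through communication, and I set $f_{t,2}(\x)=\dots=f_{t,n}(\x)=h_i(\x)$ for every round $t$ in the $i$-th inter-communication interval $[c_i+1,c_{i+1}]$, with $c_0=0$ and $c_{C+1}=T$. With $L_i=c_{i+1}-c_i$, the global loss is $f_t(\x)=(n-1)h_i(\x)$ on interval $i$, and learner $1$ produces $\x_1(t)$ for $t\in[c_i+1,c_{i+1}]$ before seeing $h_i$.

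For the per-interval loss I take $h_i(\x)=\tfrac{\alpha}{2}\|\x-\w_i\|_2^2$, where the coordinates of $\w_i$ are independent $\pm R/\sqrt{d}$ with probability $1/2$ across $i=0,\dots,C$. Then $\w_i$ lies on the vertices of $\K$ (so $\|\w_i\|_2=R$), $h_i$ is $\alpha$-strongly convex (verifying Assumption \ref{assum_sc}), and $\|\nabla h_i(\x)\|_2=\alpha\|\x-\w_i\|_2\leq 2\alpha R$ for all $\x\in\K$ (verifying Assumption \ref{assum4} with $G=2\alpha R$). Because the decision $\x_1(t)$ for $t\in[c_i+1,c_{i+1}]$ depends only on $\w_0,\dots,\w_{i-1}$, it is independent of $\w_i$, hence $\E_{\w_i}[\|\x_1(t)-\w_i\|_2^2]=\|\x_1(t)\|_2^2+R^2$. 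This gives
\begin{equation*}
\E\left[\sum_{t=1}^T f_t(\x_1(t))\right]=\frac{(n-1)\alpha}{2}\sum_{t=1}^T\bigl(\E[\|\x_1(t)\|_2^2]+R^2\bigr)\geq\frac{(n-1)\alpha TR^2}{2}.
\end{equation*}

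For the comparator, expanding $\sum_i L_i\|\x-\w_i\|_2^2=T\|\x\|_2^2-2T\x^\top\bar\w+TR^2$ with $\bar\w=\frac{1}{T}\sum_i L_i\w_i$, and noting that $\bar\w$ is a convex combination of points in $\K$ and hence lies in $\K$, the minimum equals $\tfrac{(n-1)\alpha}{2}(TR^2-T\|\bar\w\|_2^2)$. Subtracting,
\begin{equation*}
\E[R_{T,1}]\geq\frac{(n-1)\alpha T}{2}\,\E[\|\bar\w\|_2^2].
\end{equation*}
Using independence of the $\w_i$ and $\E[\|\w_i\|_2^2]=R^2$, I get $\E[\|\bar\w\|_2^2]=R^2\sum_{i=0}^C L_i^2/T^2$, and Cauchy--Schwarz yields $\sum_{i=0}^C L_i^2\geq (\sum_i L_i)^2/(C+1)=T^2/(C+1)$. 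Combining these and using $n-1\geq n/2$, the expected regret is at least $\alpha nR^2T/(4(C+1))$, which a fortiori is $\geq \alpha nR^2T/(8(C+1))$. By averaging, some deterministic realization $\w_0,\dots,\w_C$ achieves this bound, which finishes the proof.

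The main obstacle I anticipate is arranging the construction so that Assumptions \ref{assum4} and \ref{assum_sc} hold simultaneously with the prescribed $G=2\alpha R$ while keeping the random target inside $\K$; placing $\w_i$ on the vertices of the cube $\K$ solves both issues because it makes $\|\w_i\|_2=R$ exactly. The only other delicate point is ensuring the global minimizer $\bar\w$ lies in $\K$ so that the ``comparator'' step is tight, which is again immediate from the convexity of $\K$ together with $\w_i\in\K$ for all $i$.
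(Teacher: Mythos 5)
Your construction has one genuine flaw: you set $f_{t,1}(\x)=0$, but the zero function is not $\alpha$-strongly convex for $\alpha>0$, so your sequence of local losses violates Assumption \ref{assum_sc}, which the theorem requires to hold for \emph{every} local loss function, including that of learner $1$. The point is not cosmetic. The difficulty peculiar to the strongly convex lower bound is that the learner you are trying to fool must itself suffer strongly convex losses, and those losses help it: they anchor the global minimizer toward a point learner $1$ can locate without any communication. The paper's proof respects this by taking $f_{t,1}(\x)=\frac{\alpha}{2}\|\x\|_2^2$, and the effect is visible in the answer: the global comparator is then $-\frac{n-1}{n}\bar{\w}$ rather than $\bar{\w}$, the attainable regret shrinks by the factor $\left(\frac{n-1}{n}\right)^2$, and the final constant is $1/8$ rather than the $1/4$ you obtain. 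Your ``stronger'' constant is an artifact of an inadmissible construction, not of a sharper analysis.

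The gap is repairable entirely within your own framework, and the repaired proof is then essentially the paper's (your $h_i(\x)=\frac{\alpha}{2}\|\x-\w_i\|_2^2$ with vertex targets is, up to an additive constant, the paper's $h_i(\x)=\w_i^\top\x+\frac{\alpha}{2}\|\x\|_2^2$ after rescaling the sign vector). Keep your $h_i$ but set $f_{t,1}(\x)=\frac{\alpha}{2}\|\x\|_2^2$; then the total loss is $\frac{\alpha nT}{2}\|\x\|_2^2-(n-1)\alpha T\bar{\w}^\top\x+\frac{(n-1)\alpha TR^2}{2}$, minimized over $\K$ at $\frac{n-1}{n}\bar{\w}\in\K$, while learner $1$'s expected cumulative loss is still at least $\frac{(n-1)\alpha TR^2}{2}$ because the cross terms vanish by your independence argument and the quadratic terms are nonnegative. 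This yields $\E[R_{T,1}]\geq\frac{(n-1)^2\alpha T}{2n}\E[\|\bar{\w}\|_2^2]\geq\frac{(n-1)^2\alpha R^2T}{2n(C+1)}\geq\frac{\alpha nR^2T}{8(C+1)}$, using your exact variance computation of $\E[\|\bar{\w}\|_2^2]$ (which, as a side benefit, replaces the paper's appeal to the Khintchine inequality by a direct identity), Cauchy--Schwarz on the interval lengths, and $(n-1)^2/(2n)\geq n/8$ for $n\geq2$. Everything else in your argument --- the interval structure, the independence of $\x_1(t)$ from $\w_i$, and the averaging step extracting a deterministic choice of $\w_0,\dots,\w_C$ --- matches the paper's proof.
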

\begin{proof}
This proof is similar to that of Theorem \ref{thm_lowerB}. The main difference is to add a term $\frac{\alpha}{2}\|\x\|_2^2$ to previous local loss functions, which makes them $\alpha$-strongly convex.

For any distributed online algorithm with $C$ communication rounds, we still denote the set of communication rounds by $\C=\{c_1,\dots,c_C\}$ where $1\leq c_1<\dots<c_C<T$, and the sequence of decisions made by the local learner $1$ by $\x_1(1),\dots,\x_1(T)$. Let $c_0=0$ and $c_{C+1}=T$. Then, we can divide the total $T$ rounds into $C+1$ intervals \[[c_0+1,c_1],[c_1+1,c_2],\dots,[c_C+1,c_{C+1}].\]
In each round $t$, for the local learner $1$, we simply set $f_{t,1}(\x)=\frac{\alpha}{2}\|\x\|_2^2$ that satisfies Assumption \ref{assum4} with $G=2\alpha R$ and Assumption \ref{assum_sc}.
Moreover, for any $i\in\{0,\dots,C\}$ and $t\in[c_i+1,c_{i+1}]$, we set \[f_{t,2}(\x)=\dots=f_{t,n}(\x)=h_i(\x).\]
Specifically, we independently select \[h_i(\x)=\w_i^\top\x+\frac{\alpha}{2}\|\x\|^2_2\] for any $i\in\{0,\dots,C\}$, where the coordinates of $\w_i$ are $\pm \alpha R/\sqrt{d}$ with probability $1/2$. It is easy to verify that $h_i(\x)$ satisfies Assumption \ref{assum4} with $G=2\alpha R$ and Assumption \ref{assum_sc}, respectively.

Note that the local learner 1 does not communicate with other local learners between rounds $c_{i}+1$ and $c_{i+1}$. Therefore, the decisions $\x_1(c_i+1),\dots,\x_1(c_{i+1})$ are independent of $\w_i$.

Let $\bar{\w}=\frac{1}{\alpha T}\sum_{i=0}^C(c_{i+1}-c_i)\w_i$. In this way, the total loss for any $\x\in\K$ is equal to
\begin{equation}
\label{toal_loss-sc}
\begin{split}
\sum_{t=1}^Tf_t(\x)=&\sum_{t=1}^T\left(\sum_{j=2}^nf_{t,j}(\x)+\frac{\alpha n}{2}\|\x\|_2^2\right)\\
=&\sum_{i=0}^C(c_{i+1}-c_i)\left((n-1)\w_i^\top\x+\frac{\alpha n}{2}\|\x\|_2^2\right)\\
=&\alpha(n-1)T\bar{\w}^\top\x+\frac{\alpha nT}{2}\|\x\|_2^2\\
=&\frac{\alpha T}{2}\left(\left\|\sqrt{n}\x+\frac{(n-1)}{\sqrt{n}}\bar{\w}\right\|_2^2-\left\|\frac{(n-1)}{\sqrt{n}}\bar{\w}\right\|_2^2\right).
\end{split}
\end{equation}
Since the absolute value of each element in $\w_i$ is equal to $\alpha R/\sqrt{d}$, we note that the absolute value of each element in $-\frac{n-1}{n}\bar{\w}$ is bounded by
\[\frac{n-1}{n\alpha T}\sum_{i=0}^C\frac{(c_{i+1}-c_{i})\alpha R}{\sqrt{d}}=\frac{(n-1)R}{n\sqrt{d}}\leq\frac{R}{\sqrt{d}}\]
which implies that $-\frac{n-1}{n}\bar{\w}$ belongs to $\K=\left[-R/\sqrt{d},R/\sqrt{d}\right]^d$.

By combining with (\ref{toal_loss-sc}), we have
\begin{equation*}
\argmin_{\x\in\K}\sum_{t=1}^Tf_t(\x)=-\frac{n-1}{n}\bar{\w} \text{ and }\min_{\x\in\K}\sum_{t=1}^Tf_t(\x)=-\frac{\alpha T}{2}\left\|\frac{(n-1)}{\sqrt{n}}\bar{\w}\right\|_2^2.
\end{equation*}
Then, we have
\begin{equation}
\label{lower_eq_final-sc}
\begin{split}
&\E_{\w_0,\dots,\w_C}\left[\sum_{t=1}^Tf_{t}(\x_1(t))-\min\limits_{\x\in\K}\sum_{t=1}^Tf_{t}(\x)\right]\\
=&\E_{\w_0,\dots,\w_C}\left[\sum_{i=0}^C\sum_{t=c_i+1}^{c_{i+1}}\left((n-1)\w_i^\top\x_1(t)+\frac{\alpha n}{2}\|\x_1(t)\|_2^2\right)+\frac{\alpha T}{2}\left\|\frac{(n-1)}{\sqrt{n}}\bar{\w}\right\|_2^2\right]\\
\geq&\E_{\w_0,\dots,\w_C}\left[\sum_{i=0}^C\sum_{t=c_i+1}^{c_{i+1}}(n-1)\w_i^\top\x_1(t)+\frac{\alpha(n-1)^2T}{2n}\|\bar{\w}\|_2^2\right]\\
=&\E_{\w_0,\dots,\w_C}\left[\frac{\alpha(n-1)^2T}{2n}\|\bar{\w}\|_2^2\right]
\end{split}
\end{equation}
where the inequality is due to $\alpha\|\x\|_2^2\geq0$ for any $\x$ and the last equality is due to $\E_{\w_0,\dots,\w_C}[\w_i^\top\x_1(t)]=0$ for any $t\in[c_i+1,c_{i+1}]$.

Let $\epsilon_{01},\dots,\epsilon_{0d},\dots,\epsilon_{C1},\dots,\epsilon_{Cd}$ be independent and identically distributed variables with $\Pr(\epsilon_{ij}=\pm 1)=1/2$ for $i\in\{0,\dots,C\}$ and $j\in\{1,\dots,d\}$. Then, we have
\begin{equation}
\label{eq_lowerB-sc}
\begin{split}
&\E_{\w_0,\dots,\w_C}\left[\frac{\alpha(n-1)^2T}{2n}\left\|\bar{\w}\right\|_2^2\right]\\
=&\frac{(n-1)^2}{2\alpha nT}\E_{\w_0,\dots,\w_C}\left[\left\|\sum_{i=0}^C(c_{i+1}-c_i)\w_i\right\|_2^2\right]\\
=&\frac{(n-1)^2}{2\alpha nT}\E_{\epsilon_{01},\dots,\epsilon_{Cd}}\left[\sum_{j=1}^d\left|\sum_{i=0}^C(c_{i+1}-c_i)\frac{\epsilon_{ij}\alpha R}{\sqrt{d}}\right|^2\right]\\
=&\frac{\alpha(n-1)^2R^2}{2nT}\E_{\epsilon_{01},\dots,\epsilon_{C1}}\left[\left|\sum_{i=0}^C(c_{i+1}-c_i)\epsilon_{i1} \right|^2\right]\\
\geq&\frac{\alpha(n-1)^2R^2}{2nT}\sum_{i=0}^C(c_{i+1}-c_i)^2\\
\geq&\frac{\alpha(n-1)^2R^2}{2nT}\cdot\frac{(c_{C+1}-c_0)^2}{C+1}=\frac{\alpha(n-1)^2R^2T}{2n(C+1)}
\end{split}
\end{equation}
where the first inequality is due to the Khintchine inequality, and the second inequality is due to the Cauchy-Schwarz inequality.

By combining (\ref{lower_eq_final-sc}) with (\ref{eq_lowerB-sc}), we derive an expected lower bound as
\[\E_{\w_0,\dots,\w_C}[R_{T,1}]=\E_{\w_0,\dots,\w_C}\left[\sum_{t=1}^Tf_{t}(\x_1(t))-\min\limits_{\x\in\K}\sum_{t=1}^Tf_{t}(\x)\right]\geq\frac{\alpha(n-1)^2R^2T}{2n(C+1)}\]
which implies that for any distributed online algorithm with communication rounds $\C=\{c_1,\dots,c_C\}$, there exists a particular choice of $\w_0,\dots,\w_C$ such that \[R_{T,1}\geq\frac{\alpha(n-1)^2R^2T}{2n(C+1)}\geq\frac{\alpha nR^2T}{8(C+1)}\]
where the last inequality is due to $n-1\geq n/2$ for any integer $n\geq2$.
\end{proof}
\begin{remark}
\label{rem5}
\emph{
Theorem \ref{thm_lowerB-sc} essentially establishes an $\Omega\left(T^{1/3}(\log T)^{-1/3}\right)$ lower bound on the communication
rounds required by any distributed online algorithm whose all local learners achieve the $O(T^{2/3}(\log T)^{1/3})$ regret bound for strongly convex losses, which almost matches (in terms of $T$) the $O(T^{1/3}(\log T)^{2/3})$ communication rounds required by our D-BOCG up to polylogarithmic factors.
}
\end{remark}

\subsection{Discussions}
\label{sec5.3}
Besides the dependence on $T$, the lower bounds in our Theorems \ref{thm_lowerB} and \ref{thm_lowerB-sc} also depend on the network size $n$. When the number of communication rounds is limited to $O(\sqrt{T})$ and the losses are convex, Theorem \ref{thm_lowerB} provides a lower regret bound of $\Omega(nT^{3/4})$, but our D-BOCG only achieves a regret bound of $O(n^{5/4}(1-\sigma_2(P))^{-1/2}T^{3/4})$ as shown in Corollary \ref{cor2}. Similarly, when the number of communication rounds is limited to $O(T^{1/3}(\log T)^{2/3})$ and the losses are strongly convex, Theorem \ref{thm_lowerB-sc} provides a lower regret bound of $\Omega(nT^{2/3}(\log T)^{-2/3})$, but our D-BOCG only achieves a regret bound of $O(n^{3/2}(1-\sigma_2(P))^{-1}T^{2/3}(\log T)^{1/3})$ as shown in Corollary \ref{cor-sc}. So, in terms of the dependence on $n$ and $1-\sigma_2(P)$, there still exist some gaps between our upper bounds and lower bounds. To eliminate these gaps, one potential way is to reduce the dependence of the upper bounds on $n$, and establish lower bounds depending on the spectral gap $1-\sigma_2(P)$ by carefully considering the topology of the network, which is non-trivial and will be investigated in the future.

Moreover, we note that in the proof of Theorems \ref{thm_lowerB} and \ref{thm_lowerB-sc}, only the regret of the local learner 1 is analyzed. It is also interesting to ask whether the regret of other local learner $i\neq1$ simultaneously has a lower bound similar to that of the local learner 1. Unfortunately, the answer is negative when we utilize the sequence of local losses selected in the proof of Theorems \ref{thm_lowerB} and \ref{thm_lowerB-sc}. Let us consider a distributed online algorithm, which directly computes $\x_i(t+1)\in\argmin_{\x\in\K}f_{t,i}(\x)$. Following notations used in the proof of Theorem \ref{thm_lowerB}, for $\x^\ast\in\argmin_{\x\in\K}\sum_{t=1}^Tf_{t}(\x)$, the regret of its local learner $i\neq1$ can be upper bounded as
\begin{align*}
\sum_{t=1}^Tf_{t}(\x_i)-\sum_{t=1}^Tf_{t}(\x^\ast)=&\sum_{j=0}^C\sum_{t=c_j+1}^{c_{j+1}}(n-1)\w_j^\top(\x_i(t)-\x^\ast)\\
\leq&\sum_{j=0}^C(n-1)\w_j^\top(\x_i(c_j+1)-\x^\ast)\\
\leq&2(n-1)(C+1)RG
\end{align*}
where the first inequality is due to $\x_i(t)\in\argmin_{\x\in\K}\w_j^\top\x$ for $c_{j+1}\geq t>c_j+1$ and the last inequality is due to the fact that $\x_i(c_j+1)\in\K$ and the coordinates of $\w_j$ belong to $\pm G/\sqrt{d}$. This regret bound is smaller than the lower bound $nRGT/(2\sqrt{2(C+1)})$, when $C$ is small. A similar result can be derived when we use the sequence of local losses selected in the proof of Theorem \ref{thm_lowerB-sc}. However, as discussed before, deriving a lower bound for one local learner is sufficient in this paper. So, we leave the problem of simultaneously lower bounding the regret of all local learners as a future work.

\section{An Extension of D-BOCG to the Bandit Setting}
In this section, we extend our D-BOCG to the bandit setting, where only the loss value is available to each local learner. The main idea is to combine D-BOCG with the one-point gradient estimator \citep{OBO05}.
\subsection{The Algorithm}
By combining our D-BOCG with the one-point gradient estimator, our algorithm for the bandit setting is outlined in Algorithm \ref{DBBCG-SC}, and named as distributed block bandit conditional gradient (D-BBCG), where $0<\delta\leq r$ and
\[\K_\delta=(1-\delta/r)\K=\{(1-\delta/r)\x|\x\in\K\}.\]
Comparing D-BBCG with D-BOCG, there exist three differences as follows. First, in line 8 of D-BBCG, the actual decision $\y_{i}(t)$ is $\x_{i}(m)$ plus a random decision $\delta\uu_i(t)$, where $\uu_{i}(t)\sim \SS^d$. Second, in line 9 of D-BBCG, we can only observe the loss value $f_{t,i}(\y_{i}(t))$ instead of the gradient $\nabla f_{t,i}(\x_{i}(m))$, and adopt the one-point gradient estimator to approximate the gradient as \[\g_{i}(t)=\frac{d}{\delta}f_{t,i}(\y_{i}(t))\uu_{i}(t).\]
Third, to ensure $\y_{i}(t)\in\K$, in line 12 of D-BBCG, we perform
\[\mathbf{x}_{i}(m+1)=\text{CG}(\mathcal{K}_\delta, L, F_{m,i}(\x), \x_i(m))\]
by replacing $\K$ in line 11 of D-BOCG with a smaller set $\mathcal{K_\delta}\subseteq \K$, which limits $\x_{i}(m)$ in the set $\K_\delta$. Because of Assumption \ref{assum1} and $0<\delta\leq r$, it is easy to verify that $\x+\delta\uu\in\K$ for any $\x\in\K_\delta$ and $\uu\sim\mathcal{S}^d$ by utilizing the fact that $r\mathcal{B}^d\subseteq \K$.
\begin{algorithm}[t]
\caption{D-BBCG}
\label{DBBCG-SC}
\begin{algorithmic}[1]
\STATE \textbf{Input:} feasible set $\mathcal{K}$, $\delta$, $\mathbf{x}_{\ii}\in\mathcal{K}_\delta$, $\alpha$, $h$, $L$, and $K$
\STATE \textbf{Initialization:} choose $\mathbf{x}_1(1)=\dots=\mathbf{x}_n(1)=\mathbf{x}_{\ii}$ and set $\z_{1}(1)=\dots=\z_{n}(1)=\ze$
\FOR{$m=1,\dots,T/K$}
\FOR{each local learner $i\in V$}
\STATE define $F_{m,i}(\mathbf{x})=\z_{i}(m)^{\top}\mathbf{x}+\frac{(m-1)\alpha K}{2}\|\mathbf{x}\|_2^2+h\|\x-\mathbf{x}_{\ii}\|_2^2$
\STATE $\widehat{\g}_{i}(m)=\ze$
\FOR{$t=(m-1)K+1,\dots,mK$}
\STATE play $\y_i(t)=\x_{i}(m)+\delta\uu_i(t)$ where $\uu_i(t)\sim\SS^d$
\STATE observe $f_{t,i}(\y_{i}(t))$ and compute $\g_{i}(t)=\frac{d}{\delta}f_{t,i}(\y_{i}(t))\uu_{i}(t)$
\STATE $\widehat{\g}_{i}(m)=\widehat{\g}_{i}(m)+\g_{i}(t)$
\ENDFOR
\STATE $\mathbf{x}_{i}(m+1)=\text{CG}(\mathcal{K}_\delta, L, F_{m,i}(\x), \x_i(m))$ //This step can be executed \emph{in parallel} to the above \emph{for} loop.
\STATE $\z_{i}(m+1)=\sum_{j\in N_i}P_{ij}\z_{j}(m)+\widehat{\g}_{i}(m){-\alpha K\x_i(m)}$
\ENDFOR
\ENDFOR
\end{algorithmic}
\end{algorithm}
\subsection{Theoretical Guarantees}
In the following, we present theoretical guarantees of our D-BBCG. We first provide expected regret bounds of D-BBCG for convex losses and strongly convex losses, respectively.
\begin{thm}
\label{thm2-sc}
Let $\alpha=0$, $K=L=\sqrt{T}$, $h=\frac{n^{1/4}dMT^{3/4}}{\sqrt{1-\sigma_2(P)}R}$, $\delta=cT^{-1/4}$, where $c>0$ is a constant such that $\delta\leq r$. Under Assumptions \ref{assum4}, \ref{assum1}, \ref{assum5}, and \ref{assum2}, for any $i\in V$, Algorithm \ref{DBBCG-SC} ensures
\[\E\left[R_{T,i}\right]= O\left(n^{5/4}(1-\sigma_2(P))^{-1/2}T^{3/4}\right).\]
\end{thm}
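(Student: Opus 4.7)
The plan is to reduce the bandit case to the full-information analysis of D-BOCG (Theorem \ref{thm1-sc}) via the standard smoothing trick. I introduce the $\delta$-smoothed losses $\widehat{f}_{t,i,\delta}(\x)=\E_{\uu\sim\B^d}[f_{t,i}(\x+\delta\uu)]$, which inherit $G$-Lipschitzness and satisfy $|\widehat{f}_{t,i,\delta}(\x)-f_{t,i}(\x)|\leq G\delta$, and I invoke Lemma \ref{smoothed_lem2} to conclude that $\g_i(t)$ is an unbiased estimator of $\nabla\widehat{f}_{t,i,\delta}(\x_i(m))$, while the worst case $\|\g_i(t)\|_2\leq dM/\delta$ follows from Assumption \ref{assum2}.

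First I decompose the regret by smoothing and shrinkage. Because $f_t=\sum_j f_{t,j}$ is $nG$-Lipschitz and $\y_i(t)=\x_i(m)+\delta\uu_i(t)$, we have $\E[f_t(\y_i(t))]\leq f_t(\x_i(m))+nG\delta\leq\widehat{f}_{t,\delta}(\x_i(m))+2nG\delta$ with $\widehat{f}_{t,\delta}=\sum_j\widehat{f}_{t,j,\delta}$. For any $\x^\ast\in\argmin_{\x\in\K}\sum_t f_t(\x)$ the shrunk point $(1-\delta/r)\x^\ast\in\K_\delta$ lies within $\delta R/r$ of $\x^\ast$, so $\min_{\x\in\K_\delta}\sum_t\widehat{f}_{t,\delta}(\x)\leq\min_{\x\in\K}\sum_t f_t(\x)+nG\delta T(1+R/r)$. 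Combining yields $\E[R_{T,i}]\leq\E[\widehat{R}_{T,i}]+O(nG\delta T(1+R/r))$, where $\widehat{R}_{T,i}=\sum_{m=1}^{B}\sum_{t\in\mathcal{T}_m}\widehat{f}_{t,\delta}(\x_i(m))-\min_{\x\in\K_\delta}\sum_t\widehat{f}_{t,\delta}(\x)$. With $\delta=cT^{-1/4}$ this extra term is $O(nGRT^{3/4})$.

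Next I bound $\E[\widehat{R}_{T,i}]$ by replaying the proof of Theorem \ref{thm1-sc} (with $\alpha=0$) using $\widehat{f}_{t,j,\delta}$ in place of $f_{t,j}$ and the stochastic estimator $\g_j(t)$ in place of $\nabla f_{t,j}(\x_j(m))$. Since both $\bar{\x}(m)$ and $\x^\ast$ are measurable with respect to the history $\mathcal{F}_{m-1}$ up to block $m-1$ and $\E[\g_j(t)\mid\mathcal{F}_{m-1}]=\nabla\widehat{f}_{t,j,\delta}(\x_j(m))$, every identity in that proof survives under expectation. The essential new ingredient is to control $\widehat{\g}_i(m)=\sum_{t\in\mathcal{T}_m}\g_i(t)$ and $\bar{\dd}(m)=\frac{1}{n}\sum_j\widehat{\g}_j(m)$ more tightly than the worst case $K(dM/\delta)$. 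Exploiting the conditional independence of $\{\g_i(t)\}_{t\in\mathcal{T}_m}$ given $\x_i(m)$ and of $\{\g_j(\cdot)\}_j$ across learners (since the $\uu_j(t)$ are sampled independently), the mean-plus-variance decomposition gives
\[\E[\|\widehat{\g}_i(m)\|_2^2]\leq K^2G^2+K(dM/\delta)^2,\qquad \E[\|\bar{\dd}(m)\|_2^2]\leq K^2G^2+\tfrac{K(dM/\delta)^2}{n},\]
together with $\E[\|\z_i(m)-\bar{\z}(m)\|_2]\leq\frac{\sqrt{n}KG+\sqrt{K}(dM/\delta)}{1-\sigma_2(P)}$ as the expected counterpart of Lemma \ref{graph_lem_zhang}. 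Substituting these into the FTRL stability step (\ref{thm_sc-eq11}) and using the CG approximation error $\epsilon_m\leq 16hR^2/(L+2)$ from Lemma \ref{lem_ILO} yields a bound of the same shape as Theorem \ref{thm1-sc-2}, but with each ``$G^2$'' term split into a Lipschitz piece carrying $G$ and a variance piece carrying an extra $1/\sqrt{K}$ or $1/\sqrt{n}$ relative to the worst case.

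Plugging in $K=L=\sqrt{T}$, $\delta=cT^{-1/4}$, and $h=n^{1/4}dMT^{3/4}/(R\sqrt{1-\sigma_2(P)})$, every surviving term — the $4nhR^2$ residual, the variance contribution $(dM/\delta)^2T/h$, and the cross term $nG\,dM\,T^{3/2}/(ch(1-\sigma_2(P)))$ — evaluates to $O(n^{5/4}(1-\sigma_2(P))^{-1/2}T^{3/4})$, dominating the smoothing/shrinkage error $O(nGRT^{3/4})$ and matching the claimed bound. The main obstacle is precisely the variance step: a black-box application of Theorem \ref{thm1-sc-2} with $G$ replaced by the worst-case estimator bound $dM/\delta$ would produce a dominant term of order $K(dM/\delta)^2B/h$ that, after optimizing $h$, yields regret $O(T)$ rather than $O(T^{3/4})$. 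Recovering the claimed rate requires carefully extracting the $1/\sqrt{K}$ and $1/\sqrt{n}$ improvements from conditional independence of the gradient estimators within a block and across local learners — a subtlety that has no analog in the full-information proof.
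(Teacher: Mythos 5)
Your proposal is correct and follows essentially the same route as the paper's own proof: the smoothing-plus-shrinkage reduction to $\widehat{f}_{t,j,\delta}$ over $\K_\delta$, the replay of the D-BOCG analysis under expectation using measurability of the test points and unbiasedness of the one-point estimator, and—crucially—the within-block conditional-independence bound $\E[\|\widehat{\g}_i(m)\|_2^2]\leq K(dM/\delta)^2+K^2G^2$, which is exactly the paper's Lemma \ref{bound_d} (after \citet{Garber19}), combined with the expected gossip bound of Lemma \ref{graph_lem2_exp} and the second-moment treatment of the FTRL stability terms. The only deviations are two sharpenings you assert but do not need (the $1/n$ reduction in $\E[\|\bar{\dd}(m)\|_2^2]$ and the dropped $\sqrt{n}$ on the variance piece of the gossip bound, the latter not following from the triangle-inequality argument of Lemma \ref{graph_lem2_exp} as stated); replacing them with the paper's weaker $\sqrt{n}$-carrying versions still yields the claimed $O\left(n^{5/4}(1-\sigma_2(P))^{-1/2}T^{3/4}\right)$ bound.
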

\begin{thm}
\label{thm2-sc-c}
Let $\alpha>0$, $K=T^{2/3}(\ln T)^{-2/3}$, $\delta=cT^{-1/3}(\ln T)^{1/3}$, and $h=\alpha K$, where $c>0$ is a constant such that $\delta\leq r$. Under Assumptions \ref{assum4}, \ref{assum1}, \ref{assum5}, \ref{assum_sc}, and \ref{assum2}, for any $i\in V$, Algorithm \ref{DBBCG-SC} ensures
\[\E\left[R_{T,i}\right]=O\left(n^{3/2}(1-\sigma_2(P))^{-1}T^{2/3}(\log T)^{1/3}\right).\]
\end{thm}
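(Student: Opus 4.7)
The plan is to extend the analysis underlying Theorem \ref{thm1-sc-2} and Corollary \ref{cor-sc} from the full-information setting to the bandit setting, by substituting the actual losses with their $\delta$-smoothed surrogates and leveraging the unbiasedness of the one-point gradient estimator. I would introduce $\widehat{f}_{t,j,\delta}(\x) = \E_{\uu\sim\B^d}[f_{t,j}(\x+\delta\uu)]$, which inherits $G$-Lipschitzness and (because convolution preserves strong convexity) $\alpha$-strong convexity from $f_{t,j}$, satisfies $|\widehat{f}_{t,j,\delta}(\x)-f_{t,j}(\x)|\le G\delta$ on $\K_\delta$, and—by Lemma \ref{smoothed_lem2} and the fact that $\x_j(m)$ is measurable with respect to the history before $\uu_j(t)$ is drawn—obeys $\E[\g_j(t)\mid\mathcal{F}_t] = \nabla\widehat{f}_{t,j,\delta}(\x_j(m))$.

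Next, I would fix $\x^\ast\in\argmin_{\x\in\K}\sum_t f_t(\x)$ and use the shifted comparator $\tilde{\x}^\ast = (1-\delta/r)\x^\ast\in\K_\delta$. The expected regret then splits into four ``approximation'' pieces plus a ``smoothed regret'' piece: (i) the perturbation error $\E[\sum_t(f_t(\y_i(t))-f_t(\x_i(m_t)))]$, (ii) the smoothing bias between $f_{t,j}$ and $\widehat{f}_{t,j,\delta}$ evaluated at $\x_i(m_t)$ and $\tilde{\x}^\ast$, (iii) the shrinkage error $\sum_t(f_t(\tilde{\x}^\ast)-f_t(\x^\ast))$, and (iv) the smoothed regret $\E[\sum_{t,j}(\widehat{f}_{t,j,\delta}(\x_i(m_t))-\widehat{f}_{t,j,\delta}(\tilde{\x}^\ast))]$. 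Terms (i)--(iii) are each $O(nGTR\delta/r)$ by $G$-Lipschitzness together with $\|\x^\ast-\tilde{\x}^\ast\|_2\le R\delta/r$.

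For (iv), I would replay the proof of Theorem \ref{thm1-sc} verbatim with $\widehat{f}_{t,j,\delta}$ playing the role of $f_{t,j}$; taking conditional expectation of $\widehat{\g}_j(m)$ against $\mathcal{F}_m$ recovers $\sum_{t\in\mathcal{T}_m}\nabla\widehat{f}_{t,j,\delta}(\x_j(m))$, so the inequality chain and the FTRL step (Lemma \ref{ftrl1}) go through in expectation. The only delicate change is the consensus drift governing $u_m$: I would split $\dd_j(k)=\widehat{\g}_j(k)-\alpha K\x_j(k)$ into a predictable part of pathwise norm $\le K(G+\alpha R)$ and a martingale-difference part $\xi_j(k)$ with $\E[\|\xi_j(k)\|^2]\le K(dM/\delta)^2$. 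Expanding $\z_i(m)-\bar{\z}(m)=\sum_{k<m}\sum_j(P_{ij}^{m-1-k}-1/n)\dd_j(k)$ and applying an $L^2$ refinement of Lemma \ref{graph_lem_zhang} yields $\E[\|\z_i(m)-\bar{\z}(m)\|]=O(\sqrt{n}[K(G+\alpha R)/(1-\sigma_2(P))+\sqrt{K}(dM/\delta)/\sqrt{1-\sigma_2(P)^2}])$. Substituting into the analogue of Theorem \ref{thm1-sc-2} and invoking Lemma \ref{lem_ILO} gives an expected smoothed regret bounded by $\frac{12nGRT}{\sqrt{L+2}}$ plus consensus/FTRL sums where the effective ``$K(G+\alpha R)$'' is replaced by $K(G+\alpha R)+\sqrt{K}(dM/\delta)$.

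Finally, with $L=K=T^{2/3}(\ln T)^{-2/3}$, $h=\alpha K$ and $\delta=cT^{-1/3}(\ln T)^{1/3}$, the denominators collapse to $m\alpha K$ so the sums over $m$ contribute a factor $\ln B=O(\ln T)$; one checks $\sqrt{K}(dM/\delta)=O(T^{2/3}(\ln T)^{-2/3})$, of the same order as $K$, so the consensus/FTRL terms multiplied by $\ln B$ each evaluate to $O(n^{3/2}(1-\sigma_2(P))^{-1}T^{2/3}(\ln T)^{1/3})$; the CG term $12nGRT/\sqrt{L+2}$, the regularization term $4nhR^2=4n\alpha R^2T^{2/3}(\ln T)^{-2/3}$, and the smoothing terms $nGRT\delta/r$ are all $O(T^{2/3}(\ln T)^{1/3})$, yielding the claimed rate. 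The main obstacle is the $L^2$ refinement described above: a naive pathwise bound $\|\dd_j(m)\|\le K(dM/\delta+\alpha R)$ inserted into Lemma \ref{graph_lem_zhang} gives $K(dM/\delta)=\Theta(T/\ln T)$ in the consensus term, producing an $\Omega(T)$ expected regret that swamps the target rate; only exploiting the mean-zero martingale structure of $\xi_j(k)$ in conjunction with the mixing decay of $P$ recovers the sublinear bound.
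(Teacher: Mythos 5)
Your proposal is correct and follows essentially the same route as the paper's proof: smoothing plus the shrunk comparator $(1-\delta/r)\x^\ast$, reduction to an FTRL analysis in expectation via unbiasedness of the one-point estimator, the CG error bound of Lemma \ref{lem_ILO}, and---crucially---the same $L^2$ refinement $\E[\|\widehat{\g}_i(m)\|_2^2]\leq K(dM/\delta)^2+K^2G^2$ that the paper isolates as Lemma \ref{bound_d} (via within-block conditional independence), which is exactly the pitfall you flag. The only organizational difference is that you obtain the consensus-drift bound through a predictable-plus-martingale split with cross-block orthogonality, whereas the paper feeds the per-block $L^2$ bound of Lemma \ref{bound_d} directly into the expectation version of the mixing lemma (Lemma \ref{graph_lem2_exp}); both yield the same $O\left(n^{3/2}(1-\sigma_2(P))^{-1}T^{2/3}(\log T)^{1/3}\right)$ rate.
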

\begin{remark}
\emph{
Theorems \ref{thm2-sc} and \ref{thm2-sc-c} show that D-BBCG can attain an expected regret bound of $O(T^{3/4})$ with $O(\sqrt{T})$ communication rounds for convex losses, and attain an expected regret bound of $O(T^{2/3}(\log T)^{1/3})$ with $O(T^{1/3}(\log T)^{2/3})$ communication rounds, which is similar to D-BOCG in the full information setting.
}
\end{remark}
Moreover, we show that D-BBCG enjoys a high-probability regret bound of $O(T^{3/4}(\log T)^{1/2})$ with $O(\sqrt{T})$ communication rounds for convex losses.
\begin{thm}
\label{pro_thm2}
Let $\alpha=0$, $h=\frac{n^{1/4}\xi_TdMT^{3/4}}{\sqrt{1-\sigma_2(P)}R}$, $\delta=cT^{-1/4}$, $K=L=\sqrt{T}$, where $c>0$ is a constant such that $\delta\leq r$, and $\xi_T=1+\sqrt{8\ln\frac{n\sqrt{T}}{\gamma}}$, where $0.5>\gamma>0$ is a constant. Under Assumptions \ref{assum4}, \ref{assum1}, \ref{assum5}, and \ref{assum2}, for any $i\in V$, with probability at least $1-2\gamma$, Algorithm \ref{DBBCG-SC} has
\[
R_{T,i}= O\left(n^{5/4}(1-\sigma_2(P))^{-1/2}T^{3/4}\xi_T\right).
\]
\end{thm}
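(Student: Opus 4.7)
The plan is to lift the expected-regret analysis of Theorem \ref{thm2-sc} to a high-probability statement by isolating the stochastic terms that come from the one-point gradient estimator and bounding them with Azuma--Hoeffding. The additional $\xi_T$ factor in Theorem \ref{pro_thm2} (and hence in the choice of $h$) is exactly what one needs to balance the concentration error against the deterministic pieces of the bound.

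\textbf{Step 1: Decompose the regret into a smoothed regret plus controllable errors.} For each round $t$ in block $m$, I would write
\[
R_{T,i}=\sum_{t=1}^{T}\sum_{j=1}^{n}\bigl(f_{t,j}(\y_i(t))-f_{t,j}(\x^\ast)\bigr)
\]
and split the summand into four parts: (i) $f_{t,j}(\y_i(t))-f_{t,j}(\x_i(m))$, controlled by $G\delta$ via the Lipschitz assumption; (ii) $f_{t,j}(\x_i(m))-\widehat{f}_{t,j,\delta}(\x_i(m))$, also at most $G\delta$; (iii) the smoothed regret $\widehat{f}_{t,j,\delta}(\x_i(m))-\widehat{f}_{t,j,\delta}(\x_\delta^{\ast})$ with $\x_\delta^{\ast}=(1-\delta/r)\x^{\ast}\in\K_\delta$; and (iv) the shrinkage error $\widehat{f}_{t,j,\delta}(\x_\delta^{\ast})-f_{t,j}(\x^{\ast})$, bounded by $G\delta(1+R/r)$. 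Pieces (i), (ii), (iv) contribute $O(nGT\delta)=O(nGT^{3/4})$ under $\delta=cT^{-1/4}$.

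\textbf{Step 2: Reduce (iii) to a D-BOCG-style bound on smoothed losses.} Here I would repeat the derivation of Theorem \ref{thm1-sc}, but viewing $\widehat{\g}_i(m)$ as an unbiased estimator of $\sum_{t\in\mathcal{T}_m}\nabla\widehat{f}_{t,i,\delta}(\x_i(m))$ rather than as the true cumulative gradient. Using $\|\g_i(t)\|_2\leq dM/\delta$, one has the deterministic bound $\|\dd_i(m)\|_2=\|\widehat{\g}_i(m)\|_2\leq KdM/\delta$, which plays the role of $KG$ in the full-information analysis. All the steps of Theorem \ref{thm1-sc} go through up to the point where one invokes convexity of $f_{t,j}$ at $\x_j(m)$; invoking convexity of $\widehat{f}_{t,j,\delta}$ instead yields the identical telescoping structure, and running CG on $\K_\delta$ together with Lemma \ref{lem_ILO} provides the corresponding CG-approximation term $O(nGRT/\sqrt{L+2})$.

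\textbf{Step 3: Apply Azuma--Hoeffding to the stochastic residuals.} The crucial place where randomness enters is the FTRL-type term $\sum_{m=1}^{B}\bar{\dd}(m)^\top(\bar{\x}(m)-\x_\delta^{\ast})$. I would set $\mathcal{F}_{m-1}=\sigma(\{\uu_j(s):j\in V,\,s\leq (m-1)K\})$ so that $\bar{\x}(m)$ and $\z_j(m)$ are $\mathcal{F}_{m-1}$-measurable, and decompose
\[
\bar{\dd}(m)^\top(\bar{\x}(m)-\x_\delta^{\ast})=\mathbb{E}[\bar{\dd}(m)\mid\mathcal{F}_{m-1}]^\top(\bar{\x}(m)-\x_\delta^{\ast})+X_m,
\]
where $X_m$ is a bounded martingale difference with $|X_m|\leq 4KRdM/\delta$. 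Azuma--Hoeffding then gives $|\sum_m X_m|=O(RdM\sqrt{KT\ln(n\sqrt{T}/\gamma)}/\delta)$ with probability at least $1-\gamma/n$. A second Azuma--Hoeffding applied to $\sum_t\sum_j(f_{t,j}(\y_i(t))-\mathbb{E}_{\uu_i(t)}[f_{t,j}(\y_i(t))])$ converts the observed losses into their conditional expectations at cost $O(nM\sqrt{T\ln(n\sqrt{T}/\gamma)})$. A union bound over the $n$ learners supplies the $\ln(n\sqrt{T}/\gamma)$ inside $\xi_T^{2}$, and the two concentration events together fail with probability at most $2\gamma$.

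\textbf{Step 4: Balance the parameters.} Combining Steps 1--3 yields a high-probability bound schematically of the form
\[
R_{T,i}=O\!\left(\frac{nGRT}{\sqrt{L+2}}+\frac{n^{3/2}BK^{2}(dM/\delta)^{2}}{h(1-\sigma_2(P))}+nhR^{2}+nGT\delta+\frac{RdM\xi_T\sqrt{KT}}{\delta}\right).
\]
Substituting $K=L=\sqrt{T}$, $\delta=cT^{-1/4}$ and $h=\xi_T\,n^{1/4}dMT^{3/4}/(R\sqrt{1-\sigma_2(P)})$ balances the second and third summands and dominates all others, giving the announced $O(n^{5/4}(1-\sigma_2(P))^{-1/2}T^{3/4}\xi_T)$. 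The hardest part is not the concentration itself but verifying the filtration condition: one must check that $\bar{\x}(m)$ truly depends only on $\z_j(m)$ for $j\in V$, which in turn is determined by $\widehat{\g}_j(m')$ for $m'<m$, so the $K$-round delayed update mechanism is exactly what makes the martingale structure clean. After that, the union-bound bookkeeping that produces the $\sqrt{\ln(n\sqrt{T}/\gamma)}$ factor inside $\xi_T$ and the final parameter tuning are routine.
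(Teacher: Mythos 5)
Your overall architecture (reduce to smoothed regret, rerun the D-BOCG analysis on the smoothed losses, control stochastic residuals by Azuma, tune parameters) matches the paper's, but there is a genuine gap at the single decisive point: how $\|\widehat{\g}_i(m)\|_2$ is bounded. In Step 2 you invoke the deterministic bound $\|\widehat{\g}_i(m)\|_2\le K dM/\delta$, and your Step 3 martingale constants and Step 4 schematic inherit this scale (or its square). With $K=\sqrt{T}$, $\delta=cT^{-1/4}$, $h=\Theta(\xi_T n^{1/4}dMT^{3/4}/(R\sqrt{1-\sigma_2(P)}))$ this is fatal: the network-disagreement contribution $3nGK\sum_{m}\widehat{G}\sqrt{n}/(2h(1-\sigma_2(P)))$ with $\widehat{G}=KdM/\delta$ evaluates to $\Theta\bigl(n^{5/4}GRT/(\xi_T\sqrt{1-\sigma_2(P)})\bigr)$, which is $\Omega(T/\sqrt{\log T})$; your Step 3 Azuma term with $|X_m|\le 4KRdM/\delta$ gives $O(\sqrt{B}\,KRdM\sqrt{\ln(\cdot)}/\delta)=O(RdMT\sqrt{\ln(\cdot)})$, again linear in $T$; and the second summand of your own Step 4 formula, $n^{3/2}BK^{2}(dM/\delta)^{2}/(h(1-\sigma_2(P)))$, evaluates to $\Theta\bigl(n^{5/4}dMRT^{5/4}/(\xi_T\sqrt{1-\sigma_2(P)})\bigr)$. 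None of these is $O(T^{3/4}\xi_T)$, so the announced bound does not follow from your estimates, and no re-tuning of $h$ rescues it: balancing $nhR^2$ against a consensus term of size $n^{3/2}GTKdM/(h\delta)$ yields at best $O(T^{7/8})$.

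The missing idea is precisely the paper's Lemma \ref{lem_gradient}: with probability at least $1-\gamma$, simultaneously for all $i\in V$ and $m\in[B]$, $\|\widehat{\g}_i(m)\|_2\le\bigl(1+\sqrt{8\ln(nB/\gamma)}\bigr)dM\sqrt{K}/\delta+KG$, i.e.\ the block-cumulative estimator concentrates at scale $\sqrt{K}\,dM/\delta$ rather than $KdM/\delta$, because within a block the $K$ estimates are conditionally independent given $\x_i(m)$ and each has conditional mean $\nabla\widehat{f}_{t,i,\delta}(\x_i(m))$ of norm at most $G$. This is proved by applying Azuma to the Doob martingale $\E[N\mid\x_i(m),\g_i(t_1),\dots,\g_i(t_j)]$ of the norm $N=\|\widehat{\g}_i(m)\|_2$, combined with the conditional second-moment bound $\E[N^2\mid\x_i(m)]\le K(dM/\delta)^2+K^2G^2$ (the high-probability analogue of Lemma \ref{bound_d}); the union bound is over all $n$ learners \emph{and} all $B=\sqrt{T}$ blocks, which is where $\xi_T=1+\sqrt{8\ln(n\sqrt{T}/\gamma)}$ comes from. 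The paper then conditions the entire remaining analysis on this event $\mathcal{A}$ — the consensus error $\widehat{G}\sqrt{n}/(2h(1-\sigma_2(P)))$, the FTRL-type term $2B\widehat{G}^2/h$, and the Azuma constants $2nR(KG+\widehat{G})$ in Lemma \ref{lem2_azuma} all use the refined $\widehat{G}$ — and pays one extra $\gamma$ for $\Pr(\mathcal{A})$, giving the stated $1-2\gamma$. Without this per-block concentration, every stochastic term in your bound is too large by at least a factor of order $\sqrt{K}=T^{1/4}$. (Separately, your second Azuma application recentering the observed losses $f_{t,j}(\y_i(t))$ is unnecessary: $f_{t,j}(\y_i(t))\le f_{t,j}(\x_i(m))+G\delta$ holds pointwise by Lipschitzness, as you yourself use in Step 1.)
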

\begin{remark}
\label{rem8}
\emph{
While the above theorem presents a high-probability regret bound for convex losses, it is hard to extend it for strongly convex losses. We note that according to the proof of Theorem \ref{pro_thm2}, the high-probability regret bound of D-BBCG has a term $O(K\sqrt{B\log(1/\gamma)})$, where $K$ is incurred by the delayed update mechanism and $\sqrt{B\log(1/\gamma)}$ is incurred by the application of the classical Azuma's concentration inequality \citep{Azuma67}. If we consider strongly convex losses, we would like to set $K=T^{2/3}(\ln T)^{-2/3}$ to control the communication complexity, but in this case the term $O(K\sqrt{B\log(1/\gamma)})$ is worse than the expected regret bound in Theorem \ref{thm2-sc-c}. Therefore, to extend the above theorem for for strongly convex losses, we may need some novel techniques to improve the term $O(K\sqrt{B\log(1/\gamma)})$, which will be investigated in the future.
}
\end{remark}
\subsection{Analysis}
In the following, we only provide the proof of Theorems \ref{thm2-sc} and \ref{thm2-sc-c}. The proof of Theorem \ref{pro_thm2} can be found in the Appendix.
\subsubsection{Proof of Theorems \ref{thm2-sc} and \ref{thm2-sc-c}}
Similar to the proof of Theorem \ref{thm1-sc}, we first define several auxiliary variables. Let $\bar{\z}(m)=\frac{1}{n}\sum_{i=1}^n\z_i(m)$ for $m\in[B+1]$, and let $\dd_i(m)=\widehat{\g}_i(m)-\alpha K\x_i(m)$ and $\bar{\dd}(m)=\frac{1}{n}\sum_{i=1}^n\dd_i(m)$ for $m\in[B]$. Then, we define $\bar{\x}(1)=\x_{\ii}$ and $\bar{\x}(m+1)=\argmin_{\x\in\K_\delta}\bar{F}_{m}(\x)$ for any $m\in[B+1]$, where \[\bar{F}_{m}(\mathbf{x})=\bar{\z}(m)^{\top}\mathbf{x}+\frac{(m-1)\alpha K}{2}\|\mathbf{x}\|_2^2+h\|\x-\mathbf{x}_{\ii}\|_2^2.\]Similarly, we define $\widehat{\x}_i(m+1)=\argmin_{\x\in\K_\delta}F_{m,i}(\x)$ for any $m\in[B+1]$, where \[F_{m,i}(\x)=\z_{i}(m)^{\top}\mathbf{x}+\frac{(m-1)\alpha K}{2}\|\mathbf{x}\|_2^2+h\|\x-\mathbf{x}_{\ii}\|_2^2.\] is defined in Algorithm \ref{DBBCG-SC}.

Moreover, we need to introduce the following lemmas.
\begin{lem}
\label{bound_d}
Let $\dd_i(m)=\widehat{\g}_i(m)-\alpha K\x_i(m)$ for $m\in[B]$. Under Assumptions \ref{assum4}, \ref{assum1}, and \ref{assum2}, for any $i\in V$ and $m\in[B]$, Algorithm \ref{DBBCG-SC} ensures that
\[\E[\|\dd_i(m)\|_2]^2\leq\E[\|\dd_i(m)\|_2^2]\leq2K\left(\frac{dM}{\delta}\right)^2+2K^2G^2+2(\alpha KR)^2.\]
\end{lem}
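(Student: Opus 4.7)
The plan is to establish the two inequalities separately. The first, $\E[\|\dd_i(m)\|_2]^2 \leq \E[\|\dd_i(m)\|_2^2]$, is an immediate consequence of Jensen's inequality applied to the convex function $x \mapsto x^2$ on non-negative reals. The entire effort will go into the second inequality.

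For the second inequality, the first step is to peel off the $\alpha K \x_i(m)$ term. Using $\|a-b\|_2^2 \leq 2\|a\|_2^2 + 2\|b\|_2^2$ with $a = \widehat{\g}_i(m)$ and $b = \alpha K \x_i(m)$, I get $\|\dd_i(m)\|_2^2 \leq 2\|\widehat{\g}_i(m)\|_2^2 + 2(\alpha K)^2 \|\x_i(m)\|_2^2$. Since $\x_i(m) \in \K_\delta \subseteq \K \subseteq R\B^d$ by Assumption \ref{assum1}, the second term contributes exactly $2(\alpha K R)^2$. It remains to show $\E[\|\widehat{\g}_i(m)\|_2^2] \leq K(dM/\delta)^2 + K^2 G^2$.

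The heart of the argument is the bound on $\E[\|\widehat{\g}_i(m)\|_2^2]$, and this is where the oblivious adversary assumption and the blockwise structure both matter. Since $\x_i(m)$ is fixed throughout the block $\mathcal{T}_m$ and the probe directions $\uu_i(t)$ for $t \in \mathcal{T}_m$ are drawn i.i.d.\ from $\SS^d$, and the losses $f_{t,i}$ are determined in advance by Assumption \ref{assum2}, the estimators $\g_i(t) = (d/\delta) f_{t,i}(\y_i(t)) \uu_i(t)$ are conditionally independent given $\x_i(m)$. Hence I can use the variance/mean decomposition
\begin{equation*}
\E\!\left[\|\widehat{\g}_i(m)\|_2^2 \,\Big|\, \x_i(m)\right] = \sum_{t\in\mathcal{T}_m} \mathrm{Var}\!\left[\g_i(t)\,\big|\,\x_i(m)\right] + \Bigl\|\sum_{t\in\mathcal{T}_m} \E[\g_i(t)\mid\x_i(m)]\Bigr\|_2^2.
\end{equation*}
For the variance term, I would use the deterministic bound $\|\g_i(t)\|_2 \leq dM/\delta$ (coming from Assumption \ref{assum2} and $\|\uu_i(t)\|_2 = 1$), yielding a per-summand bound of $(dM/\delta)^2$ and a total of $K(dM/\delta)^2$. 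For the mean term, Lemma \ref{smoothed_lem2} identifies $\E[\g_i(t)\mid\x_i(m)] = \nabla \widehat{f}_{t,i,\delta}(\x_i(m))$, and the $G$-Lipschitz continuity of $f_{t,i}$ (Assumption \ref{assum4}) is inherited by the smoothed function $\widehat{f}_{t,i,\delta}$, so $\|\nabla \widehat{f}_{t,i,\delta}(\x_i(m))\|_2 \leq G$; Cauchy-Schwarz then gives $\|\sum_t \E[\g_i(t)\mid\x_i(m)]\|_2^2 \leq K \sum_t G^2 = K^2 G^2$. Combining and taking total expectation yields the desired bound on $\E[\|\widehat{\g}_i(m)\|_2^2]$, and adding back the $2(\alpha K R)^2$ term completes the proof.

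The main (mild) subtlety to watch is the conditional independence step: one must be careful that conditioning on $\x_i(m)$ is enough to decouple the $\g_i(t)$'s across $t \in \mathcal{T}_m$, which hinges on the delayed-update mechanism (so that $\x_i(m)$ does not change mid-block) together with the oblivious adversary assumption. No martingale or Azuma-type machinery is needed here; a straightforward $\mathrm{Var}+\text{mean}^2$ split inside the block suffices and, importantly, gives exactly the constants claimed in the lemma rather than the looser factors one would obtain from a blunt application of $\|a+b\|^2 \leq 2\|a\|^2 + 2\|b\|^2$ to separate mean and noise first.
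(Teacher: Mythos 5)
Your proposal is correct and follows essentially the same route as the paper: peel off the $\alpha K\x_i(m)$ term via $\|a-b\|_2^2\leq 2\|a\|_2^2+2\|b\|_2^2$ and Assumption \ref{assum1}, then bound $\E[\|\widehat{\g}_i(m)\|_2^2]$ by conditioning on $\x_i(m)$, exploiting the conditional independence of the within-block estimators (delayed updates plus oblivious adversary), using the deterministic bound $dM/\delta$ for the second-moment terms and Lemmas \ref{smoothed_lem2} and \ref{smoothed_lem1} for the conditional means, and finish with Jensen. Your variance-plus-squared-mean split is algebraically identical to the paper's expansion into diagonal and cross terms (the paper gets $(K^2-K)G^2$ for the cross terms before loosening to $K^2G^2$), so the difference is purely one of bookkeeping.
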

\begin{lem}
\label{graph_lem2_exp}
(Derived from the Proof of Lemma 6 in \citet{wenpeng17}) For any $i\in[n]$, let $\dd_i(1),\dots,\dd_i(m)\in\mathbb{R}^d$ be a sequence of vectors. Let $\z_i(1)=\ze$, $\z_{i}(m+1)=\sum_{j\in N_i}P_{ij}\z_{j}(m)+\dd_{i}(m)$, and $\bar{\z}(m)=\frac{1}{n}\sum_{i=1}^n\z_i(m)$ for $m\in[B]$, where $P$ satisfies Assumption \ref{assum5}. For any $i\in V$ and $m\in[B]$, assuming $\E[\|\dd_i(m)\|_2]\leq \widehat{G}$ where $\widehat{G}>0$ is a constant, we have \[\E[\|\z_i(m)-\bar{\z}(m)\|_2]\leq\frac{\widehat{G}\sqrt{n}}{1-\sigma_2(P)}.\]
\end{lem}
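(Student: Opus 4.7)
The plan is to mirror the deterministic argument behind Lemma \ref{graph_lem_zhang}, bringing the expectation inside at the earliest point where it can be used via the hypothesis $\E[\|\dd_j(s)\|_2]\leq\widehat{G}$. The global structure is: unroll the gossip recursion explicitly, exploit the fact that $P$ is symmetric and doubly stochastic so that averaging over $i$ removes the $P$-iterates, express $\z_i(m)-\bar{\z}(m)$ as a weighted sum of past $\dd_j(s)$'s with coefficients governed by $P^{m-1-s}-\frac{1}{n}\mathbf{1}\mathbf{1}^\top$, and finally convert the row-$\ell_2$ operator bound $\sigma_2(P)^{m-1-s}$ into a row-$\ell_1$ bound with an extra $\sqrt{n}$ factor.

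Concretely, since $\z_i(1)=\ze$, unrolling the update $\z_i(m+1)=\sum_{j\in N_i}P_{ij}\z_j(m)+\dd_i(m)$ (with $P_{ij}=0$ outside $N_i$) gives
\[
\z_i(m)=\sum_{s=1}^{m-1}\sum_{j=1}^{n}[P^{m-1-s}]_{ij}\,\dd_j(s).
\]
Averaging over $i$ and using that $P$ is doubly stochastic (so $\frac{1}{n}\mathbf{1}^{\top}P^{k}=\frac{1}{n}\mathbf{1}^{\top}$ for all $k\geq 0$) yields $\bar{\z}(m)=\frac{1}{n}\sum_{s=1}^{m-1}\sum_{j=1}^{n}\dd_j(s)$. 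Subtracting and applying the triangle inequality gives
\[
\|\z_i(m)-\bar{\z}(m)\|_2\leq\sum_{s=1}^{m-1}\sum_{j=1}^{n}\Bigl|[P^{m-1-s}]_{ij}-\tfrac{1}{n}\Bigr|\,\|\dd_j(s)\|_2.
\]
Now I would take expectations of both sides, move $\E[\cdot]$ inside the double sum (linearity), and use the hypothesis $\E[\|\dd_j(s)\|_2]\leq\widehat{G}$ to obtain a bound in terms of $\sum_j|[P^{m-1-s}]_{ij}-1/n|$.

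The remaining step is to control that row-$\ell_1$ quantity. Because $P$ is symmetric doubly stochastic, $P^{k}-\frac{1}{n}\mathbf{1}\mathbf{1}^{\top}$ shares the eigenvectors of $P$ except for the all-ones direction (whose eigenvalue is killed), so its spectral norm equals $\max(|\lambda_2(P)|^{k},\ldots,|\lambda_n(P)|^{k})=\sigma_2(P)^{k}$. In particular, the $\ell_2$ norm of its $i$-th row is at most $\sigma_2(P)^{k}$, and by Cauchy--Schwarz
\[
\sum_{j=1}^{n}\Bigl|[P^{k}]_{ij}-\tfrac{1}{n}\Bigr|\leq\sqrt{n}\,\sqrt{\sum_{j=1}^{n}\Bigl([P^{k}]_{ij}-\tfrac{1}{n}\Bigr)^{2}}\leq\sqrt{n}\,\sigma_2(P)^{k}.
\]
Plugging $k=m-1-s$ into the earlier inequality and summing the geometric series $\sum_{s=1}^{m-1}\sigma_2(P)^{m-1-s}\leq\frac{1}{1-\sigma_2(P)}$ produces the claimed bound $\frac{\widehat{G}\sqrt{n}}{1-\sigma_2(P)}$.

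The only subtle step is the spectral bound on $P^{k}-\frac{1}{n}\mathbf{1}\mathbf{1}^{\top}$; all the rest is bookkeeping. Since this is the direct analogue of the argument of Lemma \ref{graph_lem_zhang} (whose deterministic hypothesis $\|\dd_j(s)\|_2\leq\widehat{G}$ is simply replaced by its expectation), the expectation version follows without any additional randomness-specific tools (no concentration inequality is needed), because the key step pulls $\E[\|\dd_j(s)\|_2]$ out against deterministic coefficients $|[P^{m-1-s}]_{ij}-1/n|$.
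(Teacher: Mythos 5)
Your proof is correct and follows essentially the same route as the paper: unroll the gossip recursion to obtain $\|\z_i(m)-\bar{\z}(m)\|_2\leq\sum_{\tau=1}^{m-1}\sum_{j=1}^n\bigl|[P^{m-1-\tau}]_{ij}-\tfrac{1}{n}\bigr|\,\|\dd_j(\tau)\|_2$, push the expectation through by linearity against the deterministic coefficients, bound each row deviation by $\sqrt{n}\,\sigma_2(P)^{m-1-\tau}$, and sum the geometric series. The only difference is cosmetic: the paper imports the decomposition from \citet{wenpeng17} and the $\ell_1$ mixing bound $\|P^s\mathbf{e}_i-\mathbf{1}/n\|_1\leq\sigma_2(P)^s\sqrt{n}$ from \citet{DADO2011} (treating the power $s=0$ by a separate direct computation), whereas you re-derive both inline via the spectral norm of $P^k-\tfrac{1}{n}\mathbf{1}\mathbf{1}^\top$ plus Cauchy--Schwarz, which handles $k=0$ uniformly.
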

\begin{lem}
\label{smoothed_lem1}
(Lemma 2.6 in \citet{Hazan2016} and Lemma 6 in \citet{Wan-DOGD21})
Let $f(\x):\mathbb{R}^d\to\mathbb{R}$ be $\alpha$-strongly convex and $G$-Lipschitz over a convex and compact set $\K\subset\mathbb{R}^d$. Then, its $\delta$-smoothed version $\widehat{f}_\delta(\x)=\mathbb{E}_{\uu\sim\B^d}[f(\x+\delta\uu)]$ has the following properties.
\begin{itemize}
\item $\widehat{f}_\delta(\x)$ is $\alpha$-strongly convex over $\K_\delta$;
\item $|\widehat{f}_\delta(\x)-f(\x)|\leq\delta G$ for any $\x\in\K_\delta$;
\item $\widehat{f}_\delta(\x)$ is $G$-Lipschitz over $\K_\delta$.
\end{itemize}
\end{lem}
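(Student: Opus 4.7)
The plan is to derive each of the three properties as a direct consequence of the fact that $\widehat{f}_\delta(\x) = \mathbb{E}_{\uu \sim \B^d}[f(\x + \delta\uu)]$ is an average of shifts of $f$, together with monotonicity and linearity of expectation. As a preliminary step I would verify the domain condition $\x + \delta\uu \in \K$ whenever $\x \in \K_\delta = (1 - \delta/r)\K$ and $\uu \in \B^d$, so that the hypotheses on $f$ (strong convexity, Lipschitzness) legitimately apply at every integration point. This membership follows from Assumption~\ref{assum1}: since $r\B^d \subseteq \K$, we have $\delta\uu \in (\delta/r)\K$, and then $\x + \delta\uu$ is a convex combination of a point in $(1 - \delta/r)\K \subseteq \K$ and a point in $(\delta/r)\K \subseteq \K$ with coefficients summing to one, hence lies in $\K$ by convexity of $\K$.

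For the first property, I would fix an arbitrary $\uu \in \B^d$ and observe that the map $\x \mapsto f(\x + \delta\uu)$ inherits $\alpha$-strong convexity on $\K_\delta$ from $f$, since a translation of a strongly convex function is strongly convex with the same modulus. Applying the defining inequality of strong convexity at any pair $\x, \y \in \K_\delta$ with convex coefficient $\lambda \in [0,1]$ and then integrating over $\uu$ yields the same inequality for $\widehat{f}_\delta$, because the quadratic penalty $\frac{\alpha}{2}\lambda(1-\lambda)\|\x - \y\|_2^2$ does not depend on $\uu$ and passes unchanged through the expectation.

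For the second property, I would write $\widehat{f}_\delta(\x) - f(\x) = \mathbb{E}_{\uu}[f(\x + \delta\uu) - f(\x)]$, push the absolute value inside via Jensen's inequality, and invoke $G$-Lipschitzness pointwise to obtain $|f(\x + \delta\uu) - f(\x)| \leq G\delta\|\uu\|_2 \leq G\delta$ for $\uu \in \B^d$. For the third property, the same Jensen trick applied to any $\x, \y \in \K_\delta$ reduces the claim to bounding $\mathbb{E}_\uu[|f(\x + \delta\uu) - f(\y + \delta\uu)|]$; the common shift $\delta\uu$ cancels inside the Euclidean distance, and $G$-Lipschitzness of $f$ at the two shifted points (which lie in $\K$ by the preliminary check) gives the bound $G\|\x - \y\|_2$.

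None of the three parts presents a substantive obstacle; the only piece that requires care is the domain check at the very beginning, which ensures that the defining inequalities for $f$ may be invoked under the integral sign. Once that is in place, each claim reduces to an application of a pointwise inequality for $f$ followed by integration against the uniform measure on $\B^d$.
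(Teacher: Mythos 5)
Your proof is correct, and it matches the standard argument: the paper itself does not reprove this lemma but imports it from \citet{Hazan2016} (the Lipschitz and approximation properties) and \citet{Wan-DOGD21} (the strong convexity property), where the proofs proceed exactly as you do, by integrating the pointwise inequalities for $f$ against the uniform measure on $\B^d$. You also correctly isolate the one step that needs care — verifying via Assumption \ref{assum1} that $\x+\delta\uu\in\K$ whenever $\x\in\K_\delta=(1-\delta/r)\K$ and $\uu\in\B^d$ (writing $\x+\delta\uu$ as a convex combination of a point of $(1-\delta/r)\K$ and a point of $(\delta/r)\K$), which is precisely what licenses applying the hypotheses on $f$ at every integration point.
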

Now, we derive an upper bound of $\E[\|\x_i(m)-\bar{\x}(m)\|_2]$ for any $m\in[B]$. If $m=1$, according to the definition and Algorithm \ref{DBBCG-SC}, it is easy to verify that
\begin{equation}
\label{thm_bsc-eq2}
\E[\|\x_i(m)-\bar{\x}(m)\|_2]=\E[0]=0.
\end{equation}
For any $B\geq m\geq2$, we note that $F_{m-1,i}(\x)$ is $((m-2)\alpha K+2h)$-smooth, and Algorithm \ref{DBBCG-SC} ensures \[\mathbf{x}_{i}(m)=\text{CG}(\mathcal{K}_\delta, L, F_{m-1,i}(\x), \x_i(m-1)).\] According to Lemma \ref{lem_ILO} and Assumption \ref{assum1}, for $B\geq m\geq2$, it is easy to verify that
\[F_{m-1,i}(\x_i(m))-F_{m-1,i}(\widehat{\x}_i(m))\leq\frac{8((m-2)\alpha K+2h)R^2}{L+2}.\]
Then, for any $B\geq m\geq2$, it is easy to verify that \begin{equation}
\label{thm_bsc-eq3}
\begin{split}
\|\x_i(m)-\bar{\x}(m)\|_2\leq&\|\x_i(m)-\widehat{\x}_i(m)\|_2+\|\widehat{\x}_i(m)-\bar{\x}(m)\|_2\\
\leq&\sqrt{\frac{2F_{m-1,i}(\x_i(m))-2F_{m-1,i}(\widehat{\x}_i(m))}{(m-2)\alpha K+2h}}+\|\widehat{\x}_i(m)-\bar{\x}(m)\|_2\\
\leq&\frac{4R}{\sqrt{L+2}}+\|\widehat{\x}_i(m)-\bar{\x}(m)\|_2
\end{split}
\end{equation}
where the second inequality is due to the fact that $F_{m-1,i}(\x)$ is also $((m-2)\alpha K+2h)$-strongly convex and (\ref{cor_scvx}).

Moreover, for any $B\geq m\geq2$, similar to (\ref{reuse_eq1}) and (\ref{reuse_eq2}), we have
\begin{equation*}
\begin{split}
\widehat{\x}_i(m)=&\argmin_{\x\in\K_\delta}\z_{i}(m-1)^{\top}\mathbf{x}+\frac{(m-2)\alpha K}{2}\|\mathbf{x}\|_2^2+h\|\x-\mathbf{x}_{\ii}\|_2^2\\
=&\argmin\limits_{\x\in\K_\delta}\frac{2}{(m-2)\alpha K+2h}(\z_{i}(m-1)-2h\mathbf{x}_{\ii})^{\top}\mathbf{x}+\|\mathbf{x}\|_2^2
\end{split}
\end{equation*}
and
\begin{equation*}
\begin{split}
\bar{\x}(m)
=&\argmin_{\x\in\K_\delta}\bar{\z}(m-1)^{\top}\mathbf{x}+\frac{(m-2)\alpha K}{2}\|\mathbf{x}\|_2^2+h\|\x-\mathbf{x}_{\ii}\|_2^2\\
=&\argmin\limits_{\x\in\K_\delta}\frac{2}{(m-2)\alpha K+2h}(\bar{\z}(m-1)-2h\mathbf{x}_{\ii})^{\top}\mathbf{x}+\|\mathbf{x}\|_2^2.
\end{split}
\end{equation*}
Therefore, for any $B\geq m\geq2$, by applying Lemma \ref{dual_lem1}, we have
\begin{align*}\|\widehat{\x}_i(m)-\bar{\x}(m)\|_2&\leq\frac{\|\z_i(m-1)-2h\mathbf{x}_{\ii}-\bar{\z}(m-1)+2h\mathbf{x}_{\ii}\|_2}{(m-2)\alpha K+2h}\\
&=\frac{\|\z_i(m-1)-\bar{\z}(m-1)\|_2}{(m-2)\alpha K+2h}
\end{align*}
By further combining with (\ref{thm_bsc-eq3}), for any $B\geq m\geq2$, we have
\begin{equation}
\label{thm_bsc-eq4}
\begin{split}
&\E[\|\x_i(m)-\bar{\x}(m)\|_2]\\
\leq&\frac{4R}{\sqrt{L+2}}+\E[\|\widehat{\x}_i(m)-\bar{\x}(m)\|_2]\\
\leq&\frac{4R}{\sqrt{L+2}}+\frac{\E[\|\z_i(m-1)-\bar{\z}(m-1)\|_2]}{(m-2)\alpha K+2h}\\
\leq&\frac{4R}{\sqrt{L+2}}+\sqrt{2K\left(\frac{dM}{\delta}\right)^2+2K^2G^2+2(\alpha KR)^2}\frac{\sqrt{n}}{((m-2)\alpha K+2h)(1-\sigma_2(P))}
\end{split}
\end{equation}
where the last inequality is due to
\[
\E[\|\z_i(m-1)-\bar{\z}(m-1)\|_2]\leq\sqrt{2K\left(\frac{dM}{\delta}\right)^2+2K^2G^2+2(\alpha KR)^2}\frac{\sqrt{n}}{1-\sigma_2(P)}.
\]
which is derived by combining Lemma \ref{bound_d} with Lemma \ref{graph_lem2_exp}.

Let $u_1=0$ and \[u_m=\frac{4R}{\sqrt{L+2}}+\sqrt{2K\left(\frac{dM}{\delta}\right)^2+2K^2G^2+2(\alpha KR)^2}\frac{\sqrt{n}}{((m-2)\alpha K+2h)(1-\sigma_2(P))}\] for any $B\geq m\geq2$. From (\ref{thm_bsc-eq2}) and (\ref{thm_bsc-eq4}), for any $m\in[B]$, it holds that
\begin{equation*}
\E[\|\x_i(m)-\bar{\x}(m)\|_2]\leq u_m.
\end{equation*}
Next, let $\x^\ast\in\argmin_{\x\in\K}\sum_{t=1}^Tf_{t}(\x)$, $\widetilde{\x}^\ast=(1-\delta/r)\x^\ast$, and $\widehat{f}_{t,j,\delta}(\x)$ denote the $\delta$-smoothed version of $f_{t,j}(\x)$.
For any $i,j\in V$, $m\in[B]$, and $t\in\mathcal{T}_m$, by applying Lemma \ref{smoothed_lem1}, we have
\begin{equation}
\label{thm_bsc-eq1}
\begin{split}
&\E[\widehat{f}_{t,j,\delta}(\x_{i}(m))-\widehat{f}_{t,j,\delta}(\widetilde{\x}^\ast)]\\
\leq&\E[\widehat{f}_{t,j,\delta}(\bar{\x}(m))-\widehat{f}_{t,j,\delta}(\widetilde{\x}^\ast)+G\|\bar{\x}(m)-\x_i(m)\|_2]\\
\leq&\E[\widehat{f}_{t,j,\delta}(\x_{j}(m))-\widehat{f}_{t,j,\delta}(\widetilde{\x}^\ast)+G\|\bar{\x}(m)-\x_j(m)\|_2]+Gu_m\\
\leq&\E\left[\nabla\widehat{f}_{t,j,\delta}(\x_{j}(m))^\top(\x_{j}(m)-\widetilde{\x}^\ast)-\frac{\alpha}{2}\|\x_{j}(m)-\widetilde{\x}^\ast\|_2^2\right]+2Gu_m\\
\leq&\E\left[\nabla\widehat{f}_{t,j,\delta}(\x_{j}(m))^\top(\bar{\x}(m+1)-\widetilde{\x}^\ast)-\frac{\alpha}{2}\|\x_{j}(m)-\widetilde{\x}^\ast\|_2^2\right]+2Gu_m\\
&+\E[\nabla\widehat{f}_{t,j,\delta}(\x_{j}(m))^\top(\x_{j}(m)-\bar{\x}(m+1))]\\
\leq&\E\left[\nabla\widehat{f}_{t,j,\delta}(\x_{j}(m))^\top(\bar{\x}(m+1)-\widetilde{\x}^\ast)-\frac{\alpha}{2}\|\x_{j}(m)-\widetilde{\x}^\ast\|_2^2\right]+2Gu_m\\
&+\E[\|\nabla\widehat{f}_{t,j,\delta}(\x_{j}(m))\|_2\|\x_{j}(m)-\bar{\x}(m+1)\|_2]\\
\leq&\E\left[\nabla\widehat{f}_{t,j,\delta}(\x_{j}(m))^\top(\bar{\x}(m+1)-\widetilde{\x}^\ast)-\frac{\alpha}{2}\|\x_{j}(m)-\widetilde{\x}^\ast\|_2^2\right]+2Gu_m\\
&+\E[G(\|\x_{j}(m)-\bar{\x}(m)\|_2+\|\bar{\x}(m)-\bar{\x}(m+1)\|_2)]\\
\leq&\E\left[(\nabla\widehat{f}_{t,j,\delta}(\x_{j}(m))-\alpha\x_{j}(m))^\top(\bar{\x}(m+1)-\widetilde{\x}^\ast)-\frac{\alpha}{2}(\|\widetilde{\x}^\ast\|_2^2-\|\bar{\x}(m+1)\|_2^2)\right]\\
&+\E\left[G\|\bar{\x}(m)-\bar{\x}(m+1)\|_2\right]+3Gu_{m}
\end{split}
\end{equation}
where the first two inequalities are due to the fact that $\widehat{f}_{t,j,\delta}(\x)$ is $G$-Lipschitz over $\K_\delta$, the third inequality is due to the strong convexity of $\widehat{f}_{t,j,\delta}(\x)$, and the last inequality is due to
\begin{align*}
\|\x_{j}(m)-\widetilde{\x}^\ast\|_2^2
=&\|\x_{j}(m)-\bar{\x}(m+1)\|_2^2+2\x_{j}(m)^\top(\bar{\x}(m+1)-\widetilde{\x}^\ast)+\|\widetilde{\x}^\ast\|_2^2-\|\bar{\x}(m+1)\|_2^2\\
\geq&2\x_{j}(m)^\top(\bar{\x}(m+1)-\widetilde{\x}^\ast)+\|\widetilde{\x}^\ast\|_2^2-\|\bar{\x}(m+1)\|_2^2.
\end{align*}
Moreover, it is not hard to verify that
\begin{equation}
\label{thm_bsc_new_eq1}
\begin{split}
R_{T,i}=&\sum_{m=1}^B\sum_{t\in\mathcal{T}_m}\sum_{j=1}^n(f_{t,j}(\x_{i}(m)+\delta \uu_{i}(t))-f_{t,j}(\x^\ast))\\
\leq&\sum_{m=1}^B\sum_{t\in\mathcal{T}_m}\sum_{j=1}^n((f_{t,j}(\x_{i}(m))+G\|\delta \uu_{i}(t)\|_2)-(f_{t,j}(\widetilde{\x}^\ast)-G\|\widetilde{\x}^\ast-\x^\ast\|_2))\\
\leq&\sum_{m=1}^B\sum_{t\in\mathcal{T}_m}\sum_{j=1}^n\left(f_{t,j}(\x_{i}(m))-f_{t,j}(\widetilde{\x}^\ast)+G\|\delta \uu_{i}(t)\|_2+\frac{\delta GR}{r}\right)\\
\leq&\sum_{m=1}^B\sum_{t\in\mathcal{T}_m}\sum_{j=1}^n((\widehat{f}_{t,j,\delta}(\x_{i}(m))+\delta G)-(\widehat{f}_{t,j,\delta}(\widetilde{\x}^\ast)-\delta G))+\delta nGT+\frac{\delta nGRT}{r}\\
=&\sum_{m=1}^B\sum_{t\in\mathcal{T}_m}\sum_{j=1}^n(\widehat{f}_{t,j,\delta}(\x_{i}(m))-\widehat{f}_{t,j,\delta}(\widetilde{\x}^\ast))+3\delta nGT+\frac{\delta nGRT}{r}
\end{split}
\end{equation}
where the first inequality is due to Assumption \ref{assum4}, the second inequality is due to $\x^\ast\in\K$ and Assumption \ref{assum1}, and the third inequality is due to Lemma \ref{smoothed_lem1}.

By combining (\ref{thm_bsc-eq1}) with (\ref{thm_bsc_new_eq1}), we have
\begin{equation}
\label{thm_bsc-pre0-eq6}
\begin{split}
&\E\left[R_{T,i}\right]\\
\leq&\sum_{m=1}^B\sum_{t\in\mathcal{T}_m}\sum_{j=1}^n\E\left[(\nabla\widehat{f}_{t,j,\delta}(\x_{j}(m))-\alpha\x_{j}(m))^\top(\bar{\x}(m+1)-\widetilde{\x}^\ast)-\frac{\alpha}{2}(\|\widetilde{\x}^\ast\|_2^2-\|\bar{\x}(m+1)\|_2^2)\right]\\
&+nKG\sum_{m=1}^B\E\left[\|\bar{\x}(m)-\bar{\x}(m+1)\|_2\right]+3nKG\sum_{m=1}^B{u_m}+3\delta nGT+\frac{\delta nGRT}{r}.
\end{split}
\end{equation}
Let $\widetilde{f}_m(\x)=\bar{\dd}(m)^\top\x+\frac{\alpha K}{2}\|\x\|_2^2.$ Due to Lemma \ref{smoothed_lem2}, we have
\begin{equation}
\label{thm_bsc-pre-eq6}
\begin{split}
&\sum_{m=1}^B\sum_{t\in\mathcal{T}_m}\sum_{j=1}^n\E\left[(\nabla\widehat{f}_{t,j,\delta}(\x_{j}(m))-\alpha\x_{j}(m))^\top(\bar{\x}(m+1)-\widetilde{\x}^\ast)-\frac{\alpha}{2}(\|\widetilde{\x}^\ast\|_2^2-\|\bar{\x}(m+1)\|_2^2)\right]\\
=&\sum_{m=1}^B\sum_{j=1}^n\E\left[(\widehat{\g}_j(m)-\alpha K\x_{j}(m))^\top(\bar{\x}(m+1)-\widetilde{\x}^\ast)-\frac{\alpha K}{2}(\|\widetilde{\x}^\ast\|_2^2-\|\bar{\x}(m+1)\|_2^2)\right]\\
=&n\sum_{m=1}^B\E\left[\bar{\dd}(m)^\top(\bar{\x}(m+1)-\widetilde{\x}^\ast)-\frac{\alpha K}{2}(\|\widetilde{\x}^\ast\|_2^2-\|\bar{\x}(m+1)\|_2^2)\right]\\
=&n\sum_{m=1}^B\E\left[\widetilde{f}_m(\bar{\x}(m+1))-\widetilde{f}_m(\widetilde{\x}^\ast)\right]
\end{split}
\end{equation}
According to the definition and (\ref{thm_sc-eq1}), we have \[\bar{\x}(m+1)=\argmin\limits_{\x\in\K_\delta}\bar{\z}(m)^{\top}\mathbf{x}+\frac{(m-1)\alpha K}{2}\|\mathbf{x}\|_2^2+h\|\x-\x_{\ii}\|_2^2=\argmin\limits_{\x\in\K_\delta}\sum_{\tau=1}^{m-1}\widetilde{f}_\tau(\x)+h\|\x-\x_{\ii}\|_2^2.\]
By applying Lemma \ref{ftrl1} with the loss functions $\{\widetilde{f}_m(\x)\}_{m=1}^B$, the decision set $\K_\delta$, and the regularizer $\mathcal{R}(\x)=h\|\mathbf{x}-\mathbf{x}_{\ii}\|_2^2$, we have
\begin{equation}
\label{thm_bsc-eq6}
\begin{split}
&\sum_{m=1}^B\left(\widetilde{f}_m(\bar{\x}(m+1))-\widetilde{f}_m(\widetilde{\x}^\ast)\right)\\
\leq& h\|\widetilde{\x}^\ast-\mathbf{x}_{\ii}\|_2^2+h\|\bar{\x}(2)-\mathbf{x}_{\ii}\|_2^2+\sum_{m=1}^B\left(\widetilde{f}_m(\bar{\x}(m+1))-\widetilde{f}_m(\bar{\x}(m+2))\right)\\
\leq&4hR^2+\sum_{m=1}^B\nabla\widetilde{f}_m(\bar{\x}(m+1))^\top(\bar{\x}(m+1)-\bar{\x}(m+2))\\
\leq&4hR^2+\sum_{m=1}^B\|\nabla\widetilde{f}_m(\bar{\x}(m+1))\|_2\|\bar{\x}(m+1)-\bar{\x}(m+2)\|_2\\
\leq&4hR^2+\sum_{m=1}^B\|\bar{\dd}(m)+\alpha K\bar{\x}(m+1)\|_2\|\bar{\x}(m+1)-\bar{\x}(m+2)\|_2.
\end{split}
\end{equation}
Note that $\bar{F}_{m+1}(\x)$ is $(m\alpha K+2h)$-strongly convex and $\bar{\x}(m+2)=\argmin_{\x\in\K_\delta}\bar{F}_{m+1}(\x)$. For any $m\in[B]$, we have
\begin{equation*}
\begin{split}
&\frac{m\alpha K+2h}{2}\|\bar{\x}(m+1)-\bar{\x}(m+2)\|_2^2\\
\leq& \bar{F}_{m+1}(\bar{\x}(m+1))-\bar{F}_{m+1}(\bar{\x}(m+2))\\
=&\bar{F}_{m}(\bar{\x}(m+1))+\widetilde{f}_m(\bar{\x}(m+1))-\bar{F}_{m}(\bar{\x}(m+2))-\widetilde{f}_m(\bar{\x}(m+2))\\
\leq&\nabla\widetilde{f}_m(\bar{\x}(m+1))^\top(\bar{\x}(m+1)-\bar{\x}(m+2))\\
\leq&\|\bar{\dd}(m)+\alpha K\bar{\x}(m+1)\|_2\|\bar{\x}(m+1)-\bar{\x}(m+2)\|_2
\end{split}
\end{equation*}
where the first inequality is due to (\ref{cor_scvx}) and the second inequality is due to $\bar{\x}(m+1)=\argmin_{\x\in\K_\delta}\bar{F}_{m}(\x)$ and the convexity of $\widetilde{f}_m(\x)$.

Moreover, for any $m\in[B]$, the above inequality can be simplified as
\begin{equation}
\label{thm_bsc-eq7}
\|\bar{\x}(m+1)-\bar{\x}(m+2)\|_2\leq\frac{2\|\bar{\dd}(m)+\alpha K\bar{\x}(m+1)\|_2}{m\alpha K+2h}.
\end{equation}
By combining (\ref{thm_bsc-pre0-eq6}), (\ref{thm_bsc-pre-eq6}), (\ref{thm_bsc-eq6}), and (\ref{thm_bsc-eq7}), we have
\begin{equation}
\label{thm_bsc-eq8}
\begin{split}
&\E\left[R_{T,i}\right]\\
\leq&4nhR^2+n\sum_{m=1}^B\E\left[\frac{2\|\bar{\dd}(m)+\alpha K\bar{\x}(m+1)\|_2^2}{m\alpha K+2h}\right]+nKG\sum_{m=1}^B\E\left[\|\bar{\x}(m)-\bar{\x}(m+1)\|_2\right]\\
&+3nKG\sum_{m=1}^B{u_m}+3\delta nGT+\frac{\delta nGRT}{r}\\
\leq&n\sum_{m=1}^B\E\left[\frac{2\|\bar{\dd}(m)+\alpha K\bar{\x}(m+1)\|_2^2}{m\alpha K+2h}\right]+nKG\sum_{m=2}^B\E\left[\frac{2\|\bar{\dd}(m-1)+\alpha K\bar{\x}(m)\|_2}{(m-1)\alpha K+2h}\right]\\
&+3nKG\sum_{m=1}^B{u_m}+3\delta nGT+\frac{\delta nGRT}{r}+4nhR^2\\
\leq&n\sum_{m=1}^B\E\left[\frac{2\|\bar{\dd}(m)+\alpha K\bar{\x}(m+1)\|_2^2}{m\alpha K+2h}\right]+nKG\sum_{m=1}^B\E\left[\frac{2\|\bar{\dd}(m)+\alpha K\bar{\x}(m+1)\|_2}{m\alpha K+2h}\right]\\
&+3nKG\sum_{m=1}^B{u_m}+3\delta nGT+\frac{\delta nGRT}{r}+4nhR^2
\end{split}
\end{equation}
where the second inequality is derived by  bounding $\|\bar{\x}(m)-\bar{\x}(m+1)\|_2$ using (\ref{thm_bsc-eq7}) for $m>1$ and $\bar{\x}(2)=\argmin_{\x\in\K_\delta}\bar{F}_1(\x)=\mathbf{x}_{\ii}=\bar{\x}(1)$ for $m=1$.

With the above inequality, we can establish the specific regret bound for convex losses and strongly convex losses, respectively.
\paragraph{Convex Losses} We first consider the case with convex losses, in which the parameters of our Algorithm \ref{DBBCG-SC} are set to $\alpha=0$, $K=L=\sqrt{T}$, $h=\frac{n^{1/4}dMT^{3/4}}{\sqrt{1-\sigma_2(P)}R}$, $\delta=cT^{-1/4}$.

Because of $\alpha=0$, $K=\sqrt{T}$, and $\delta=cT^{-1/4}$, we have
\begin{equation*}
\begin{split}
\E[\|\bar{\dd}(m)+\alpha K\bar{\x}(m+1)\|_2^2]=&\E[\|\bar{\dd}(m)\|_2^2]\leq2K\left(\frac{dM}{\delta}\right)^2+2K^2G^2+2(\alpha KR)^2\\
=&\left(\frac{2d^2M^2}{c^2}+2G^2\right)T
\end{split}
\end{equation*}
where the first inequality is due to Lemma \ref{bound_d}.

Therefore, with $\alpha=0$, $K=\sqrt{T}$, $h=\frac{n^{1/4}dMT^{3/4}}{\sqrt{1-\sigma_2(P)}R}$, and $\delta=cT^{-1/4}$, we have
\begin{equation}
\label{thm_bsc-eq9}
\begin{split}
n\sum_{m=1}^B\E\left[\frac{2\|\bar{\dd}(m)+\alpha K\bar{\x}(m+1)\|_2^2}{m\alpha K+2h}\right]
\leq&\left(\frac{d^2M^2}{c^2}+G^2\right)\frac{2n^{3/4}\sqrt{1-\sigma_2(P)}RT^{3/4}}{dM}\\
=&O(n^{3/4}T^{3/4}).
\end{split}
\end{equation}
Similarly, with $\alpha=0$, $K=\sqrt{T}$, $h=\frac{n^{1/4}dMT^{3/4}}{\sqrt{1-\sigma_2(P)}R}$, and $\delta=cT^{-1/4}$, we have
\begin{equation}
\label{thm_bsc-eq10}
\begin{split}
nKG\sum_{m=1}^B\E\left[\frac{2\|\bar{\dd}(m)+\alpha K\bar{\x}(m+1)\|_2}{m\alpha K+2h}\right]\leq&\sqrt{\frac{2d^2M^2}{c^2}+2G^2}\frac{n^{3/4}\sqrt{1-\sigma_2(P)}GRT^{3/4}}{dM}\\
=&O(n^{3/4}T^{3/4}).
\end{split}
\end{equation}
Note that $u_1=0$ and $u_m=\frac{4R}{\sqrt{L+2}}+\sqrt{2K\left(\frac{dM}{\delta}\right)^2+2K^2G^2+2(\alpha KR)^2}\frac{\sqrt{n}}{((m-2)\alpha K+2h)(1-\sigma_2(P))}$ for any $B\geq m\geq2$. With $\alpha=0$, $K=L=\sqrt{T}$, $h=\frac{n^{1/4}dMT^{3/4}}{\sqrt{1-\sigma_2(P)}R}$, and $\delta=cT^{-1/4}$, we have
\begin{equation}
\label{thm_bsc-eq11}
\begin{split}
3nKG\sum_{m=1}^B{u_m}=&3nKG\sum_{m=2}^B\left(\frac{4R}{\sqrt{L+2}}+\sqrt{2K\left(\frac{dM}{\delta}\right)^2+2K^2G^2}\frac{\sqrt{n}}{2h(1-\sigma_2(P))}\right)\\
\leq&\frac{12nK(B-1)GR}{\sqrt{L+2}}+\sqrt{2K\left(\frac{dM}{\delta}\right)^2+2K^2G^2}\frac{3n^{3/2}K(B-1)G}{2h(1-\sigma_2(P))}\\
\leq&12nGRT^{3/4}+\sqrt{\frac{2d^2M^2}{c^2}+2G^2}\frac{3n^{5/4}GRT^{3/4}}{2dM\sqrt{1-\sigma_2(P)}}\\
=&O\left(n^{5/4}(1-\sigma_2(P))^{-1/2}T^{3/4}\right)
\end{split}
\end{equation}
Moreover, with $K=\sqrt{T}$, $h=\frac{n^{1/4}dMT^{3/4}}{\sqrt{1-\sigma_2(P)}R}$, and $\delta=cT^{-1/4}$, we have
\begin{equation}
\label{thm_bsc-eq12}
\begin{split}
3\delta nGT+\frac{\delta nGRT}{r}+4nhR^2=&3cnGT^{3/4}+\frac{cnGRT^{3/4}}{r}+\frac{4n^{5/4}dMRT^{3/4}}{\sqrt{1-\sigma_2(P)}}\\
=&O\left(n^{5/4}(1-\sigma_2(P))^{-1/2}T^{3/4}\right).
\end{split}
\end{equation}
By combining (\ref{thm_bsc-eq8}), (\ref{thm_bsc-eq9}), (\ref{thm_bsc-eq10}), (\ref{thm_bsc-eq11}), and (\ref{thm_bsc-eq12}), our Algorithm \ref{DBBCG-SC} with $\alpha=0$, $K=L=\sqrt{T}$, $h=\frac{n^{1/4}dMT^{3/4}}{\sqrt{1-\sigma_2(P)}R}$, $\delta=cT^{-1/4}$ ensures
\[\E\left[R_{T,i}\right]= O\left(n^{5/4}(1-\sigma_2(P))^{-1/2}T^{3/4}\right)\]
for convex losses, which completes the proof of Theorem \ref{thm2-sc}.

\paragraph{Strongly Convex Losses}
We continue to consider the case with the strongly convex losses, in which the parameters of our Algorithm \ref{DBBCG-SC} are set to $\alpha>0$, $K=L=T^{2/3}(\ln T)^{-2/3}$, $\delta=cT^{-1/3}(\ln T)^{1/3}$, and $h=\alpha K$.

With $K=T^{2/3}(\ln T)^{-2/3}$ and $\delta=cT^{-1/3}(\ln T)^{1/3}$, we have
\begin{equation*}
\begin{split}
\E[\|\bar{\dd}(m)+\alpha K\bar{\x}(m)\|_2]^2\leq&\E[\|\bar{\dd}(m)+\alpha K\bar{\x}(m)\|_2^2]\leq\E[2\|\bar{\dd}(m)\|_2^2]+\E[2\|\alpha K\bar{\x}(m)\|_2^2]\\
\leq&4K\left(\frac{dM}{\delta}\right)^2+4K^2G^2+6(\alpha KR)^2\\
=&\left(\frac{4d^2M^2}{c^2}+4G^2+6\alpha^2R^2\right)\left(\frac{T}{\ln T}\right)^{4/3}
\end{split}
\end{equation*}
where the third inequality is due to Lemma \ref{bound_d} and Assumption \ref{assum1}.

For brevity, let $C=\frac{4d^2M^2}{c^2}+4G^2+6\alpha^2R^2$. With $\alpha>0$, $K=T^{2/3}(\ln T)^{-2/3}$, $\delta=cT^{-1/3}(\ln T)^{1/3}$, and $h=\alpha K$, we have
\begin{equation}
\label{thm_bsc-eq9-c}
\begin{split}
&n\sum_{m=1}^B\E\left[\frac{2\|\bar{\dd}(m)+\alpha K\bar{\x}(m+1)\|_2^2}{m\alpha K+2h}\right]\\
\leq&\frac{2nC}{\alpha K}\left(\frac{T}{\ln T}\right)^{4/3}\sum_{m=1}^B\frac{1}{m+2}\leq\frac{2nC}{\alpha K}\left(\frac{T}{\ln T}\right)^{4/3}\sum_{m=1}^B\frac{1}{m}\leq\frac{2nCT^{2/3}}{\alpha(\ln T)^{2/3}}(1+\ln B)\\
\leq&\frac{2nCT^{2/3}}{\alpha(\ln T)^{2/3}}+\frac{2nCT^{2/3}(\ln T)^{1/3}}{\alpha}=O(nT^{2/3}(\log T)^{1/3})
\end{split}
\end{equation}
where the last inequality is due to $B\leq T$.

Similarly, with $\alpha>0$, $K=T^{2/3}(\ln T)^{-2/3}$, $\delta=cT^{-1/3}(\ln T)^{1/3}$, and $h=\alpha K$, we have
\begin{equation}
\label{thm_bsc-eq10-c}
\begin{split}
&nKG\sum_{m=1}^B\E\left[\frac{2\|\bar{\dd}(m)+\alpha K\bar{\x}(m+1)\|_2}{m\alpha K+2h}\right]\\
\leq&\frac{2nG\sqrt{C}}{\alpha}\left(\frac{T}{\ln T}\right)^{2/3}\sum_{m=1}^B\frac{1}{m+2}\leq\frac{2nG\sqrt{C}}{\alpha}\left(\frac{T}{\ln T}\right)^{2/3}\sum_{m=1}^B\frac{1}{m}\\
\leq&\frac{2nG\sqrt{C}}{\alpha}\left(\frac{T}{\ln T}\right)^{2/3}(1+\ln B)=O(nT^{2/3}(\log T)^{1/3}).
\end{split}
\end{equation}
Moreover, with $\alpha>0$, $K=L=T^{2/3}(\ln T)^{-2/3}$, $\delta=cT^{-1/3}(\ln T)^{1/3}$, and $h=\alpha K$, we have
\begin{align*}
u_m=&\frac{4R}{\sqrt{L+2}}+\sqrt{2K\left(\frac{dM}{\delta}\right)^2+2K^2G^2+2(\alpha KR)^2}\frac{\sqrt{n}}{m\alpha K(1-\sigma_2(P))}\\
\leq&\frac{4R(\ln T)^{1/3}}{T^{1/3}}+\frac{\sqrt{Cn}}{\sqrt{2}m\alpha(1-\sigma_2(P))}
\end{align*}
for any $B\geq m\geq2$.

Then, with $u_1=0$, $\alpha>0$, $K=T^{2/3}(\ln T)^{-2/3}$, $\delta=cT^{-1/3}(\ln T)^{1/3}$, and $h=\alpha K$, we have
\begin{equation}
\label{thm_bsc-eq11-c}
\begin{split}
3nKG\sum_{m=1}^B{u_m}\leq&\frac{12nK(B-1)GR(\ln T)^{1/3}}{T^{1/3}}+\frac{3n\sqrt{Cn}G}{\sqrt{2}\alpha(1-\sigma_2(P))}\left(\frac{T}{\ln T}\right)^{2/3}\sum_{m=1}^{B}\frac{1}{m}\\
\leq&12nGRT^{2/3}(\ln T)^{1/3}+\frac{3n\sqrt{Cn}G}{\sqrt{2}\alpha(1-\sigma_2(P))}\left(\frac{T}{\ln T}\right)^{2/3}(1+\ln T)\\
=&O\left(n^{3/2}(1-\sigma_2(P))^{-1}T^{2/3}(\log T)^{1/3}\right).
\end{split}
\end{equation}
Moreover, with $\alpha>0$, $K=T^{2/3}(\ln T)^{-2/3}$, $\delta=cT^{-1/3}(\ln T)^{1/3}$, and $h=\alpha K$, we have
\begin{equation}
\label{thm_bsc-eq12-c}
\begin{split}
&3\delta nGT+\frac{\delta nGRT}{r}+4nhR^2\\
=&\left(3cnG+\frac{cnGR}{r}\right)T^{2/3}(\ln T)^{1/3}+4n\alpha R^2T^{2/3}(\ln T)^{-2/3}=O(nT^{2/3}(\ln T)^{1/3}).
\end{split}
\end{equation}
Finally, by combining (\ref{thm_bsc-eq8}), (\ref{thm_bsc-eq9-c}), (\ref{thm_bsc-eq10-c}), (\ref{thm_bsc-eq11-c}), and (\ref{thm_bsc-eq12-c}), our Algorithm \ref{DBBCG-SC} with $\alpha>0$, $K=T^{2/3}(\ln T)^{-2/3}$, $\delta=cT^{-1/3}(\ln T)^{1/3}$, and $h=\alpha K$ ensures
\[\E\left[R_{T,i}\right]=O\left(n^{3/2}(1-\sigma_2(P))^{-1}T^{2/3}(\log T)^{1/3}\right)\]
for $\alpha$-strongly convex losses, which completes the proof of Theorem \ref{thm2-sc-c}.
\subsubsection{Proof of Lemma \ref{bound_d}}
We first notice that
\[\|\dd_i(m)\|_2^2=\|\widehat{\g}_i(m)-\alpha K\x_i(m)\|_2^2\leq2\|\widehat{\g}_i(m)\|_2^2+2\|\alpha K\x_i(m)\|_2^2\leq2\|\widehat{\g}_i(m)\|_2^2+2(\alpha KR)^2\]
where the last inequality is due to Assumption \ref{assum1}.

Moreover, it is easy to provide an upper bound of $\E[\|\widehat{\g}_i(m)\|_2^2]$ by following the proof of Lemma 5 in \citet{Garber19}. We include the detailed proof for completeness.

Let $t_j=(m-1)K+j$ for $j=1,\dots,K$. We have
\begin{equation}
\label{eq_EN}
\begin{split}
&\E\left[\|\widehat{\g}_i(m)\|_2^2|\x_i(m)\right]\\
=&\E\left.\left[\sum_{j=1}^K\g_i(t_j)^\top \g_i(t_j)\right|\x_i(m)\right]+\E\left.\left[\sum_{j=1}^K\sum_{k\in[K]\cap k\neq j}\g_i(t_j)^\top \g_i(t_k)\right|\x_i(m)\right]\\
=&\E\left.\left[\sum_{j=1}^K\|\g_i(t_j)\|_2^2\right|\x_i(m)\right]+\sum_{j=1}^K\sum_{k\in[K]\cap k\neq j}\E\left.\left[\g_i(t_j)\right|\x_i(m)\right]^\top \E\left.\left[\g_i(t_k)\right|\x_i(m)\right]\\
\leq&K\left(\frac{dM}{\delta}\right)^2+\sum_{j=1}^K\sum_{k\in[K]\cap k\neq j}\|\E\left.\left[\g_i(t_j)\right|\x_i(m)\right]\|_2\|\E\left.\left[\g_i(t_k)\right|\x_i(m)\right]\|_2\\
\leq& K\left(\frac{dM}{\delta}\right)^2+(K^2-K)G^2\\
\leq& K\left(\frac{dM}{\delta}\right)^2+K^2G^2
\end{split}
\end{equation}
where the second inequality is due to Lemmas \ref{smoothed_lem2} and \ref{smoothed_lem1}.

Therefore, we have
\begin{align*}\E[\|\dd_i(m)\|_2^2]\leq& 2\E[\|\widehat{\g}_i(m)\|_2^2]+2(\alpha KR)^2=2\E\left[\E\left[\|\widehat{\g}_i(m)\|_2^2|\x_i(m)\right]\right]+2(\alpha KR)^2\\
\leq&2K\left(\frac{dM}{\delta}\right)^2+2K^2G^2+2(\alpha KR)^2.
\end{align*}
Moreover, according to Jensen's inequality, we have
\[\E[\|\dd_i(m)\|_2]^2\leq\E[\|\dd_i(m)\|_2^2].\]
\section{Experiments}
In this section, we perform simulation experiments on the multiclass classification problem and the binary classification problem to verify the performance of our proposed algorithms.

\subsection{Datasets and Topologies of the Networks}
We conduct experiments on four publicly available datasets---aloi, news20, a9a, and ijcnn1 from the LIBSVM repository \citep{LIBSVM}, and the details of these datasets are summarized in Table \ref{datasets}. Specifically, aloi and news20 are used in the multiclass classification problem, and the other two datasets are used in the binary classification problem. For any dataset, let $T_e$ denote the number of examples. We first divide it into $n$ equally-sized parts where each part contains $\lfloor T_e/n\rfloor$ examples, and then distribute them onto the $n$ computing nodes in the network,\footnote{The remaining $T_e-n\lfloor T_e/n\rfloor$ examples are not used.} where $n=9$ for the multiclass classification problem and $n=100$ for the binary classification problem. Moreover, each part of the dataset will be reused $n$ times, which implies that the number of rounds $T$ is equal to $n\lfloor T_e/n\rfloor$.
\begin{table}[t]
\centering
\tabcaption{Summary of datasets}
\label{datasets}
\setlength{\tabcolsep}{3mm}
\begin{tabular}{|c|c|c|c|}
\hline
Dataset& $\#$~Features & $\#$~Classes & $\#$~Examples \\ \hline
a9a &123  & 2 &32561\\ \hline
ijcnn1 &22  & 2 &49990\\ \hline
aloi & 128 & 1000 &108000\\ \hline
news20 & 62061      & 20 &  15935   \\ \hline
\end{tabular}
\end{table}

To model the distributed network, we will use three types of graphs including a complete graph, a two-dimensional grid graph, and a cycle graph. The complete graph is a "well connected" network, where each node is connected to all other nodes. In contrast, the cycle graph is a "poorly connected" network, each node of which is only connected to two other nodes. Moreover, in the two-dimensional grid graph, each node not in the boundary is connected to its four nearest neighbors in axis-aligned directions. Its connectivity is between that of the complete graph and the cycle graph.

For the weight matrix $P$, we first compute $P_{ij}$ for $i\neq j$ as
\[
P_{ij}=\left\{
\begin{array}{rcl}
0,& &\text{if }j\notin N_i,\\
1/\max(|N_i|,|N_j|),& &\text{if }j\in N_i.\\
\end{array}
	\right.
\]
Then, we compute $P_{ij}=1-\sum_{q\in N_i,q\neq i}P_{iq}$ for $i=j$. In this way, we can ensure that $P$ satisfies Assumption \ref{assum5} for all three types of graphs.

\subsection{Multiclass Classification}
Following \citet{wenpeng17}, we first compare our D-BOCG against their D-OCG by conducting experiments on distributed online multiclass classification. Let $k$ be the number of features, and let $v$ be the number of classes. In the $t$-th round, after receiving a single example $\mathbf{e}_i(t)\in\R^k$, each local learner $i$ chooses a decision matrix $X_i(t)=[\x_1^{\top};\x_2^{\top};\dots;\x_v^{\top}]\in\R^{v\times k}$ from the convex set \[\K=\{X\in\R^{v\times k}|\|X\|_{\ast}\leq\tau\}\] where $\|X\|_{\ast}$ denotes the trace norm of $X$ and $\tau$ is set to be $50$. Note that $X_i(t)$ can be utilized to predict the class label of $\mathbf{e}_i(t)$ by computing $\argmax_{\ell\in[v]}\x_\ell^{\top}\mathbf{e}_i(t)$. Then, the true class label $y_{i}(t)\in\{1,\dots,v\}$ is revealed, which incurs the multivariate logistic loss
\[f_{t,i}(X_i(t))=\ln\left(1+\sum_{\ell\neq y_{i}(t)}e^{\x_\ell^{\top}\mathbf{e}_i(t)-\x_{y_{i}(t)}^{\top}\mathbf{e}_i(t)}\right).\]
The average loss of node $i$ at the $t$-th round is defined as
\begin{equation}
\label{eq_al1}
AL(t,i)=\frac{1}{tn}\sum_{q=1}^{t}\sum_{j=1}^nf_{q,j}(X_i(q)).
\end{equation}
For both methods, we simply initialize $X_i(1)=\mathbf{0}_{v\times k},\forall i\in[n]$. According to \citet{wenpeng17}, we set $s_t=1/\sqrt{t}$ and $\eta=cT^{-3/4}$ for D-OCG by tuning the constant $c$. Because the multivariate logistic loss is not strongly convex, the parameters of our D-BOCG are selected according to Corollary \ref{cor2}. Specifically, we set $\alpha=0$, $K=L=\lfloor\sqrt{T}\rfloor$, and $h=T^{3/4}/c$ by tuning the constant $c$. For both methods, the constant c is selected from $\{0.01,\dots,1e5\}$.
\begin{figure}[t]
\centering
\subfigure[aloi]{\includegraphics[width=0.46\textwidth]{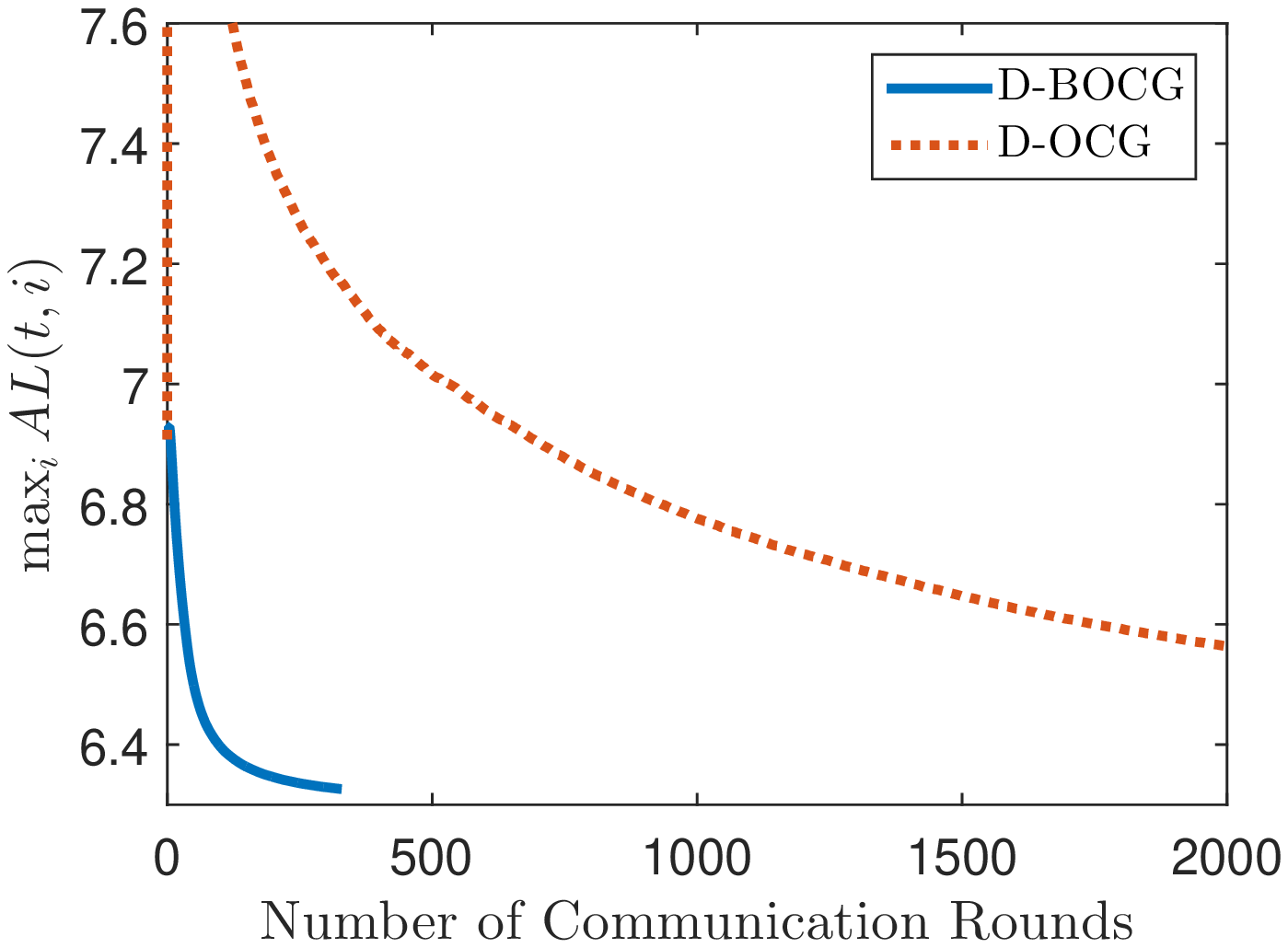}}
\centering
\subfigure[news20]{\includegraphics[width=0.46\textwidth]{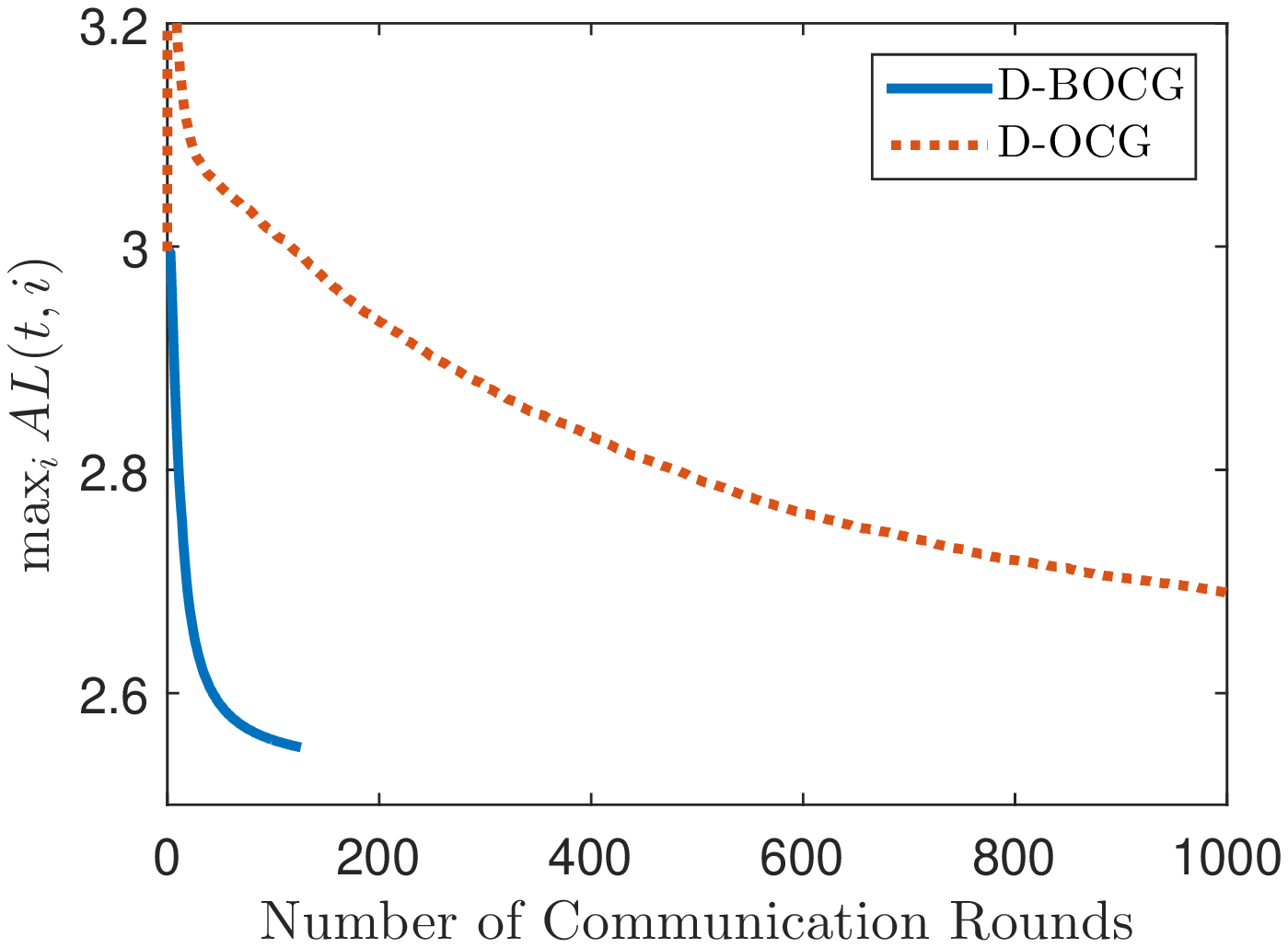}}
\caption{Comparisons of D-BOCG and D-OCG on distributed online multiclass classification over the complete graph.}
\label{OMC_fig1}
\centering
\subfigure[aloi]{\includegraphics[width=0.46\textwidth]{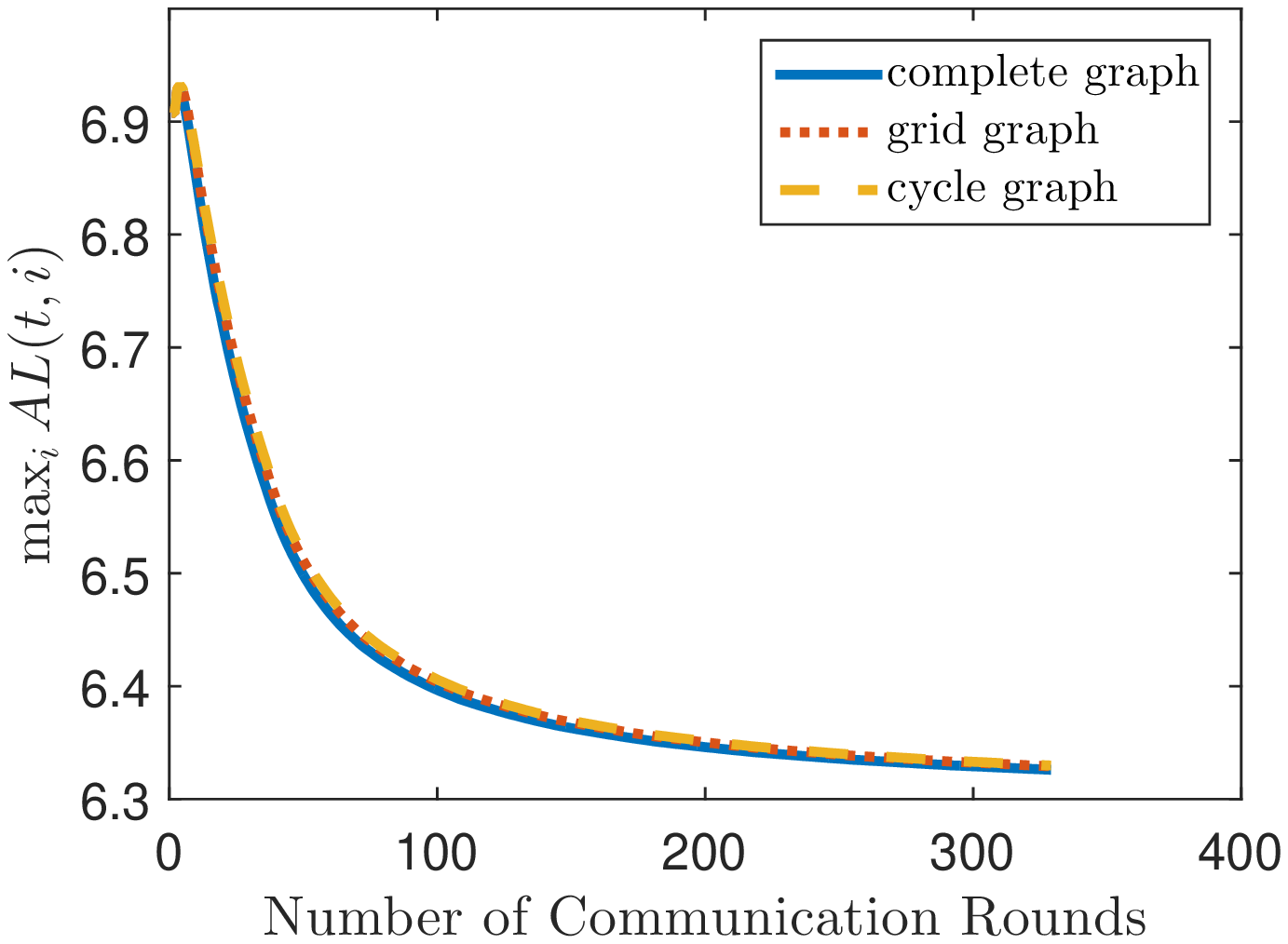}}
\centering
\subfigure[news20]{\includegraphics[width=0.46\textwidth]{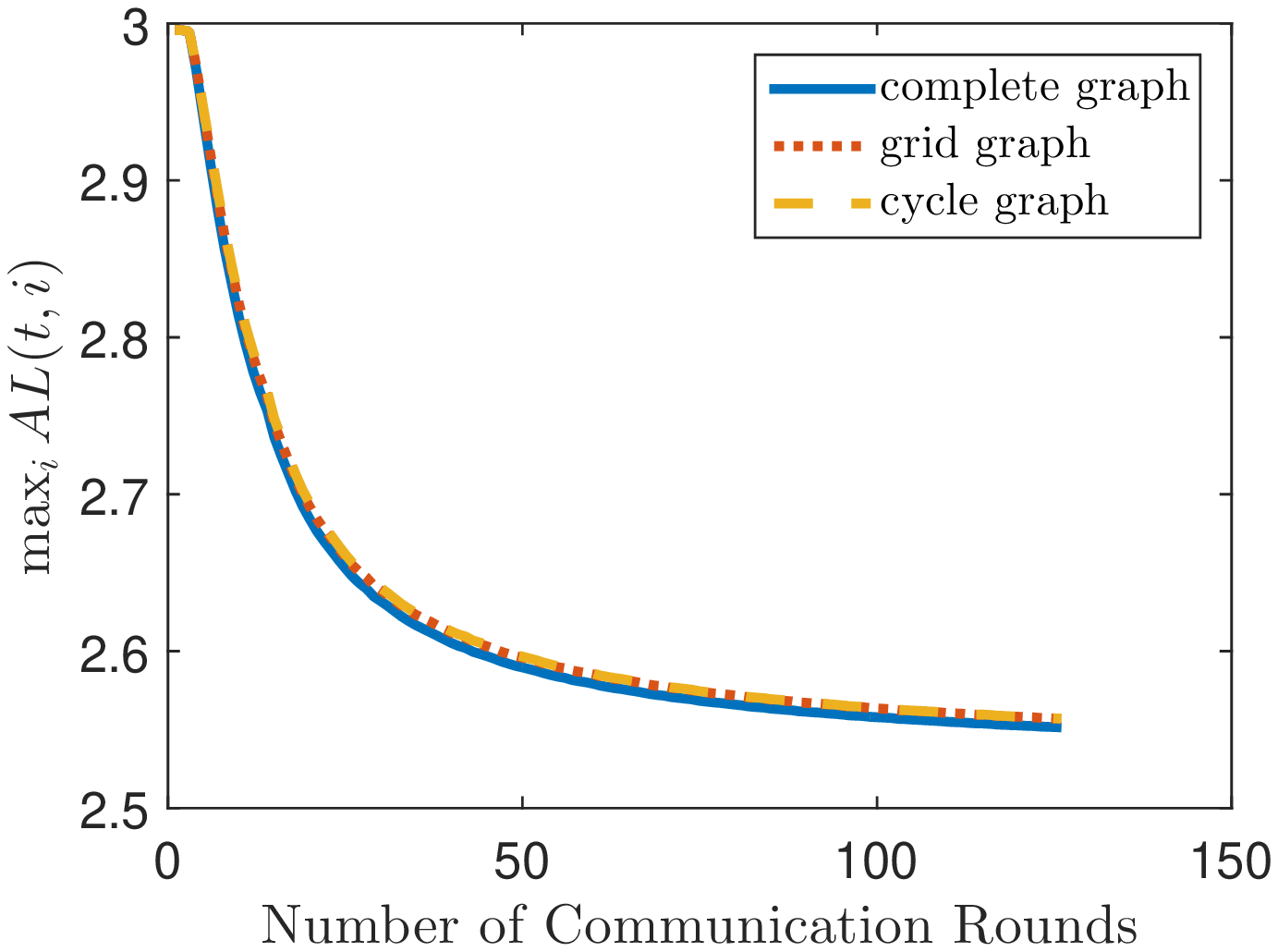}}
\caption{Comparisons of D-BOCG on distributed online multiclass classification over different graphs.}
\label{OMC_fig2}
\end{figure}

Fig.~\ref{OMC_fig1} shows the comparisons of our D-BOCG and D-OCG on distributed online multiclass classification over the complete graph. We find that the average loss of the worst local node in D-BOCG decreases faster than that of the worst local node in D-OCG with the increasing of communication rounds, which verifies our theoretical results about the regret bound and communication complexity of D-BOCG. Furthermore, Fig.~\ref{OMC_fig2} shows comparisons of D-BOCG on distributed online multiclass classification over different graphs. We find that with the improvement of the graph connectivity, the convergence of our D-BOCG is slightly improved, which is also consistent with our theoretical results about the regret bound of D-BOCG.

\subsection{Binary Classification}
We also consider the problem of binary classification in the distributed online learning setting. In the $t$-th round, each local learner $i$ receives a single example $\mathbf{e}_i(t)\in\R^d$ and chooses a decision $\x_i(t)\in\R^{d}$ from the convex set \[\K=\{\x\in\R^{d}|\|\x\|_{1}\leq\tau\}\] where $\tau$ is set to be $10$. Then, the true class label $y_{i}(t)\in\{-1,1\}$ is revealed, and it suffers the regularized hinge loss
\[f_{t,i}(\x_i(t))=\max\left\{1-y_{i}(t)\mathbf{e}_i(t)^\top \x_i(t),0\right\}+\lambda\|\x_i(t)\|_2^2\]
where $\lambda$ is set to be $0.1$. Similar to (\ref{eq_al1}), the average loss of node $i$ at the $t$-th round is defined as
\[
AL(t,i)=\frac{1}{tn}\sum_{q=1}^{t}\sum_{j=1}^nf_{q,j}(\x_i(q)).
\]
Note that the regularized hinge loss is $2\lambda$-strongly convex. To utilize the strong convexity, we can set parameters of D-BOCG according to Corollary \ref{cor-sc}. Moreover, to show the advantage of utilizing the strong convexity, we also run D-BOCG with parameters in Corollary \ref{cor2}, which only utilize the convexity condition. To distinguish between these two different instances of D-BOCG, we denote D-BOCG with parameters in Corollary \ref{cor-sc} as D-BOCG$_{\rm sc}$, and D-BOCG with parameters in Corollary \ref{cor2} as D-BOCG$_{\rm c}$.

For D-OCG, D-BOCG$_{\rm c}$, and D-BOCG$_{\rm sc}$, we simply initialize $\x_i(1)=\tau\mathbf{1}/d,\forall i\in[n]$, where $\mathbf{1}$ denotes the vector with each entry equal 1. The parameters of D-OCG are set in the same way as D-OCG in previous experiments, and the parameters of D-BOCG$_{\rm c}$ are set in the same way as D-BOCG in previous experiments. For D-BOCG$_{\rm sc}$, according to Corollary \ref{cor-sc}, we set $\alpha=2\lambda$ and $K=L=\lfloor T^{2/3}(\ln T)^{-2/3}\rfloor$. Although we use $h=\alpha K$ in Corollary \ref{cor-sc}, in the experiments, we set $h=c^\prime\alpha K$ by tuning the constant $c^\prime$ from $\{1,2,3,4,5\}$. It is easy to verify that the modified $h$ only affect the constant factor of the original regret bound in Corollary \ref{cor-sc}.

Fig.~\ref{fig_b1} shows comparisons of D-OCG, D-BOCG$_{\rm c}$, and D-BOCG$_{\rm sc}$ on distributed online binary classification over the complete graph. First, the average loss of the worst local node in D-BOCG$_{\rm c}$ and D-BOCG$_{\rm sc}$ decreases faster than that of the worst local node in D-OCG with the increasing of communication rounds, which validates our advantage in the communication complexity again. Moreover, our D-BOCG$_{\rm sc}$ outperforms D-BOCG$_{\rm c}$, which further validates the advantage of utilizing the strong convexity. Fig.~\ref{fig_b2-1} and \ref{fig_b2-2} show comparisons of D-BOCG$_{\rm c}$ and D-BOCG$_{\rm sc}$ on distributed online binary classification over different graphs. We find that the effect of the graph connectivity is similar to that presented in Fig.~\ref{OMC_fig2}, though the number of nodes increases from $9$ to $100$.
\begin{figure}[t]
\centering
\subfigure[a9a]{\includegraphics[width=0.46\textwidth]{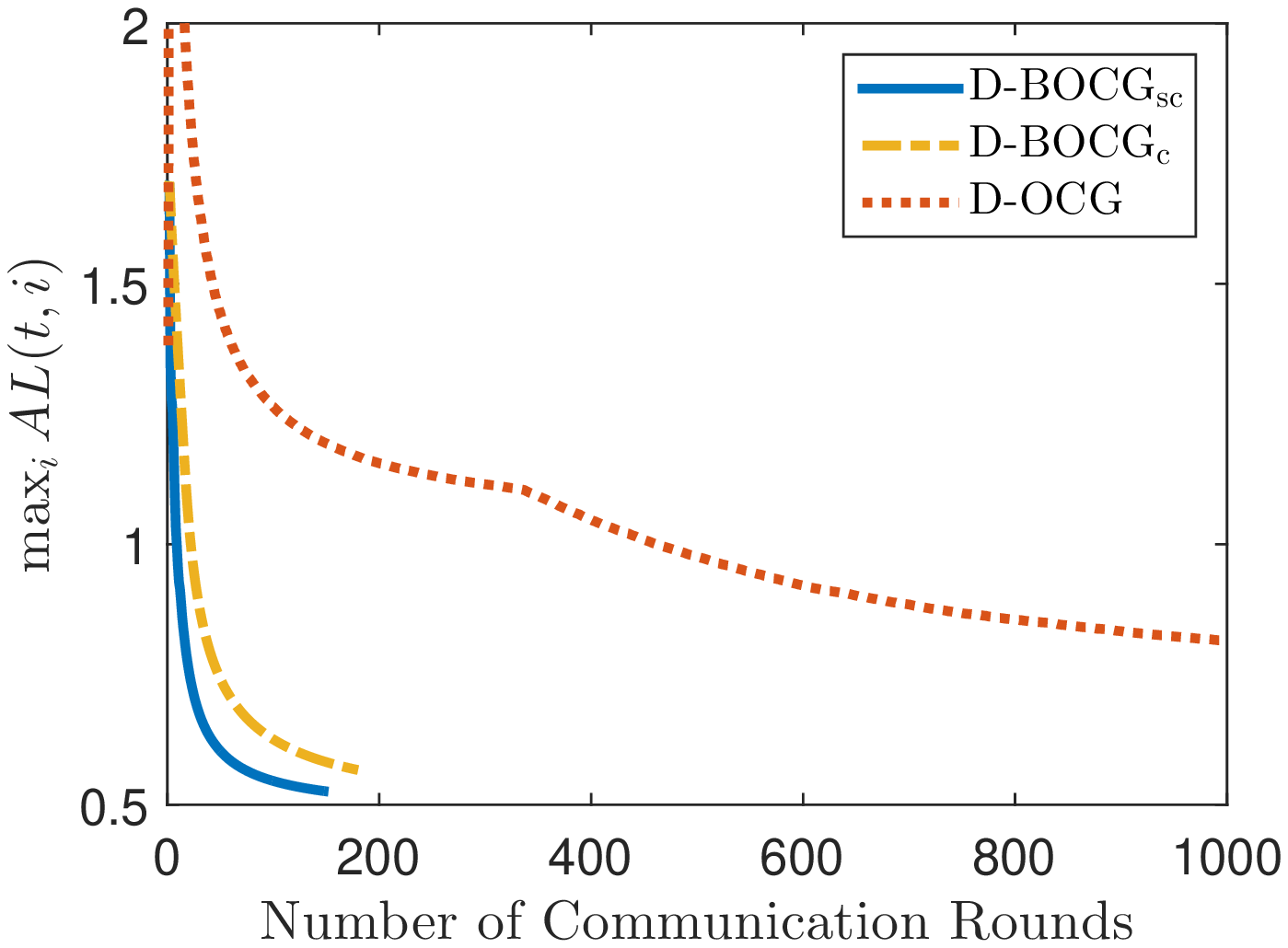}}
\centering
\subfigure[ijcnn1]{\includegraphics[width=0.46\textwidth]{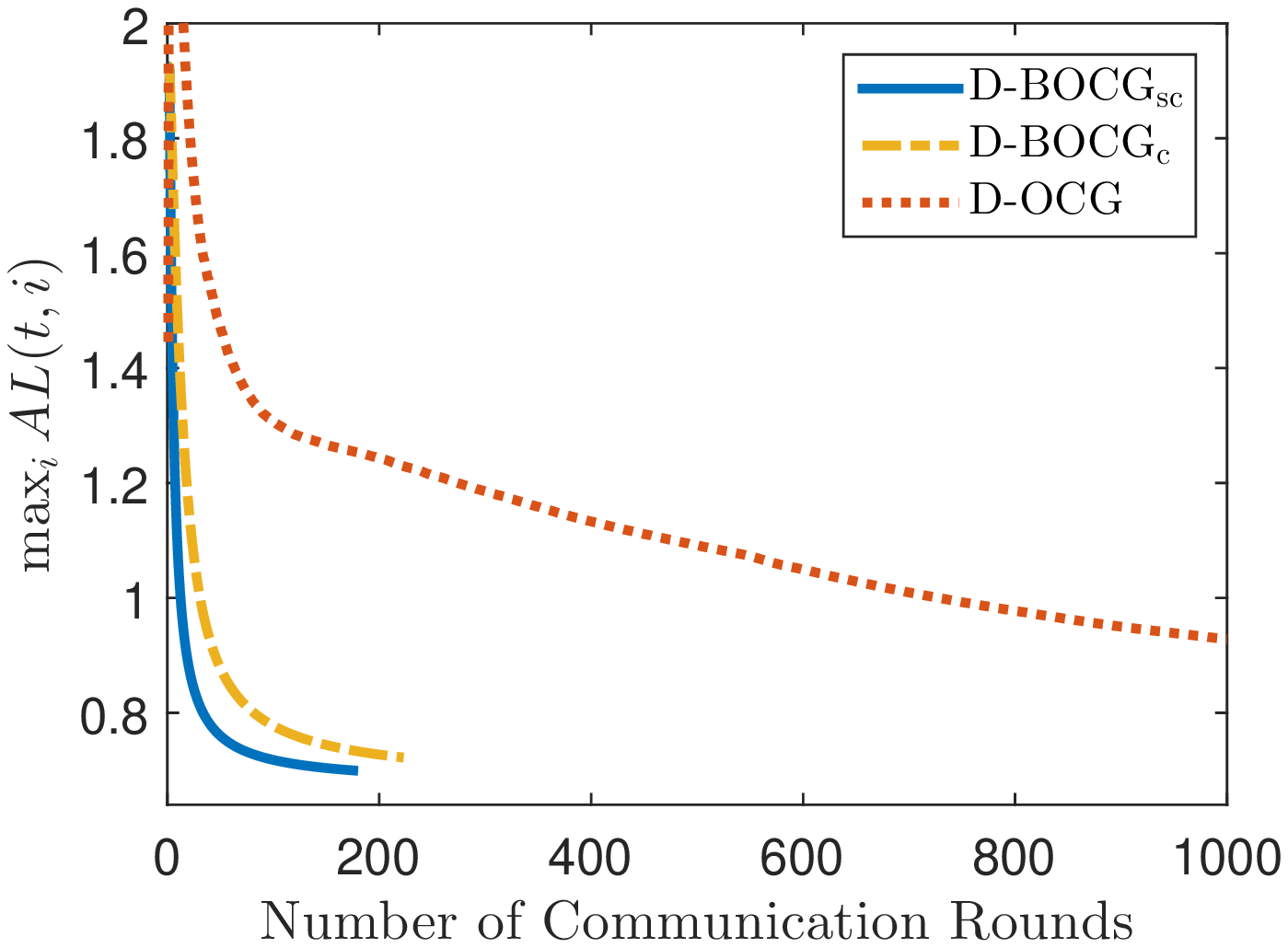}}
\caption{Comparisons of D-OCG, D-BOCG$_{\rm c}$, and D-BOCG$_{\rm sc}$ on distributed online binary classification over the complete graph.}
\label{fig_b1}
\end{figure}
\begin{figure}[t]
\centering
\subfigure[a9a]{\includegraphics[width=0.46\textwidth]{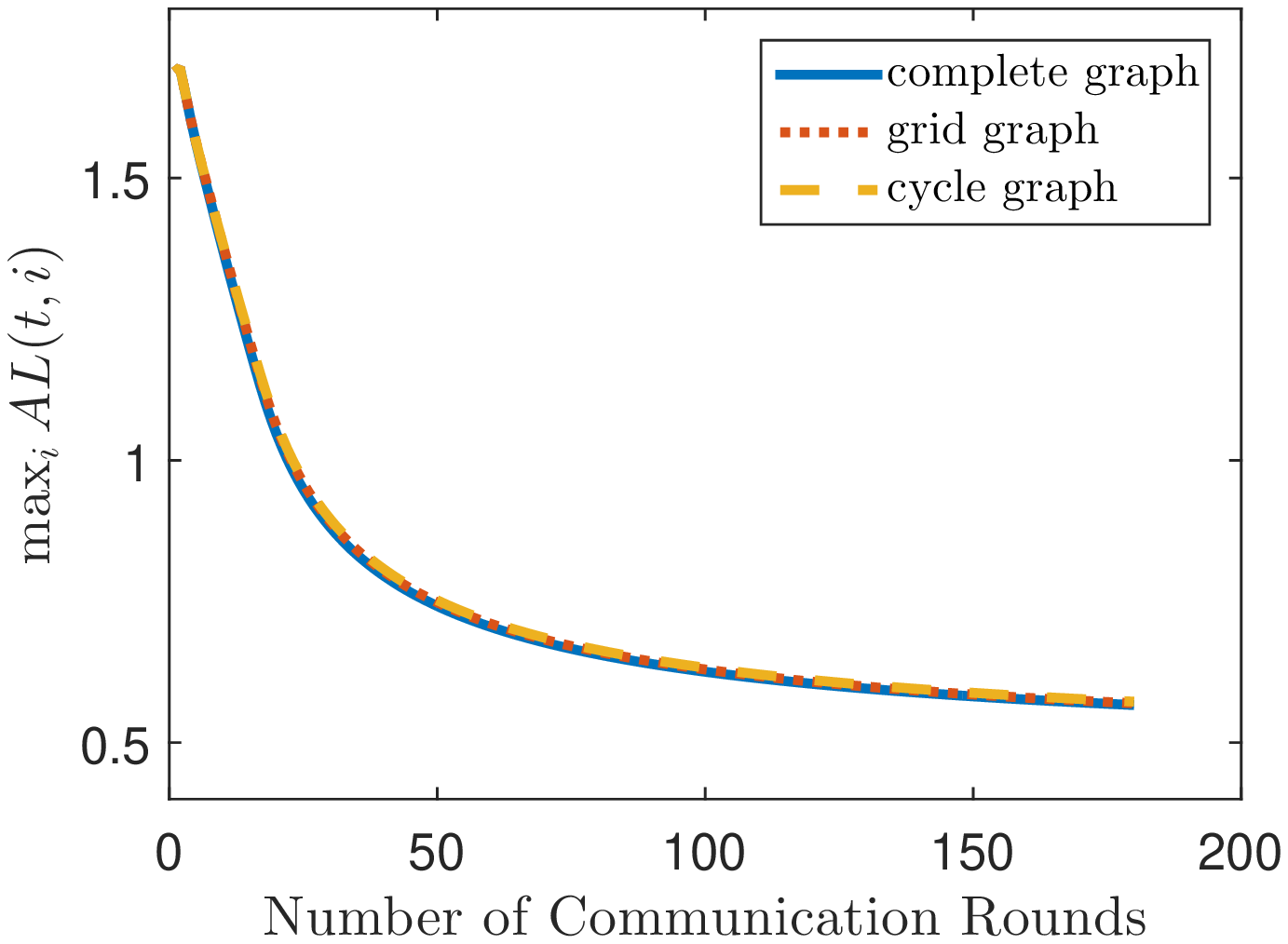}}
\centering
\subfigure[ijcnn1]{\includegraphics[width=0.46\textwidth]{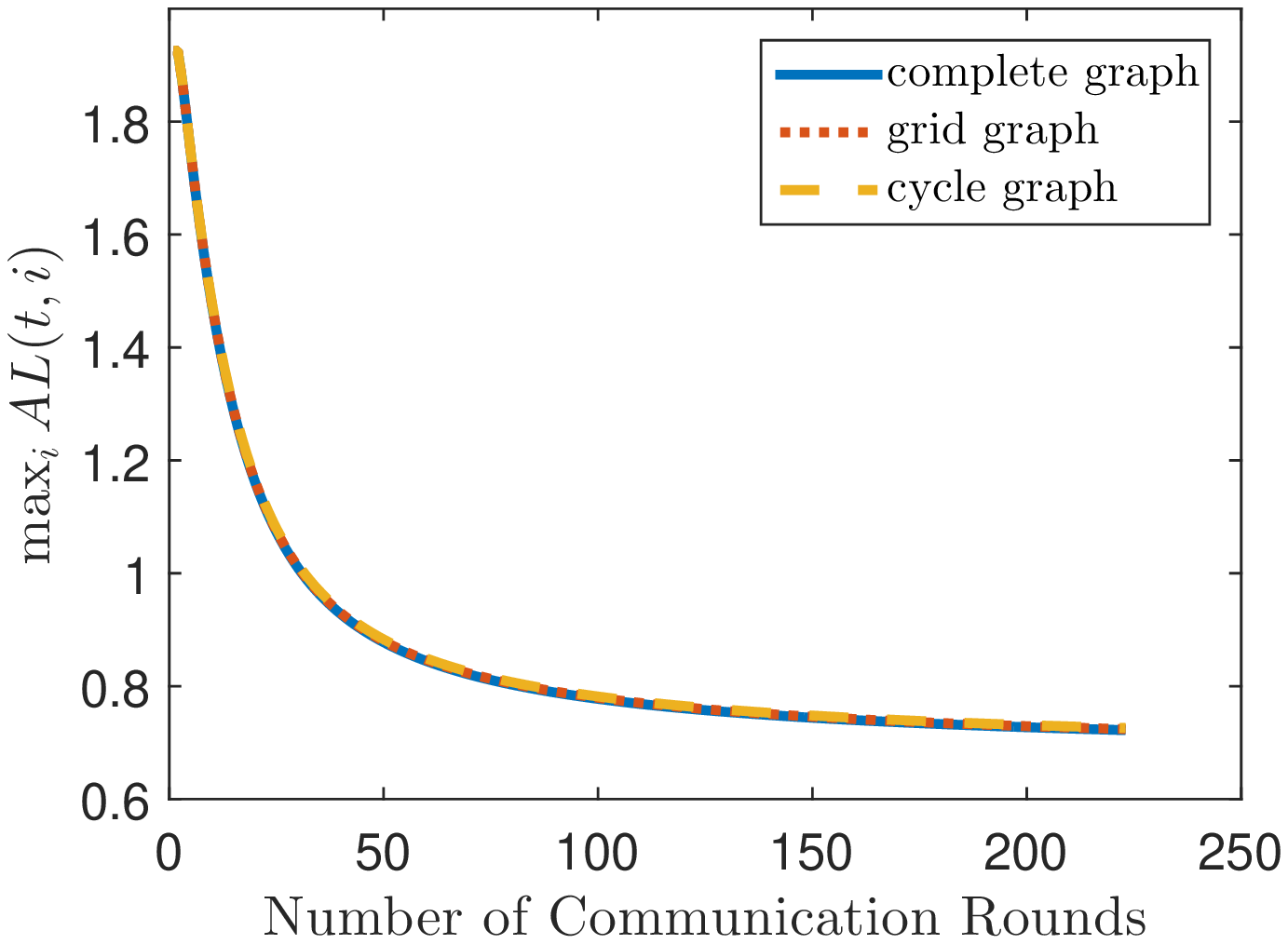}}
\caption{Comparisons of D-BOCG$_{\rm c}$ on distributed online binary classification over different graphs.}
\label{fig_b2-1}
\centering
\subfigure[a9a]{\includegraphics[width=0.46\textwidth]{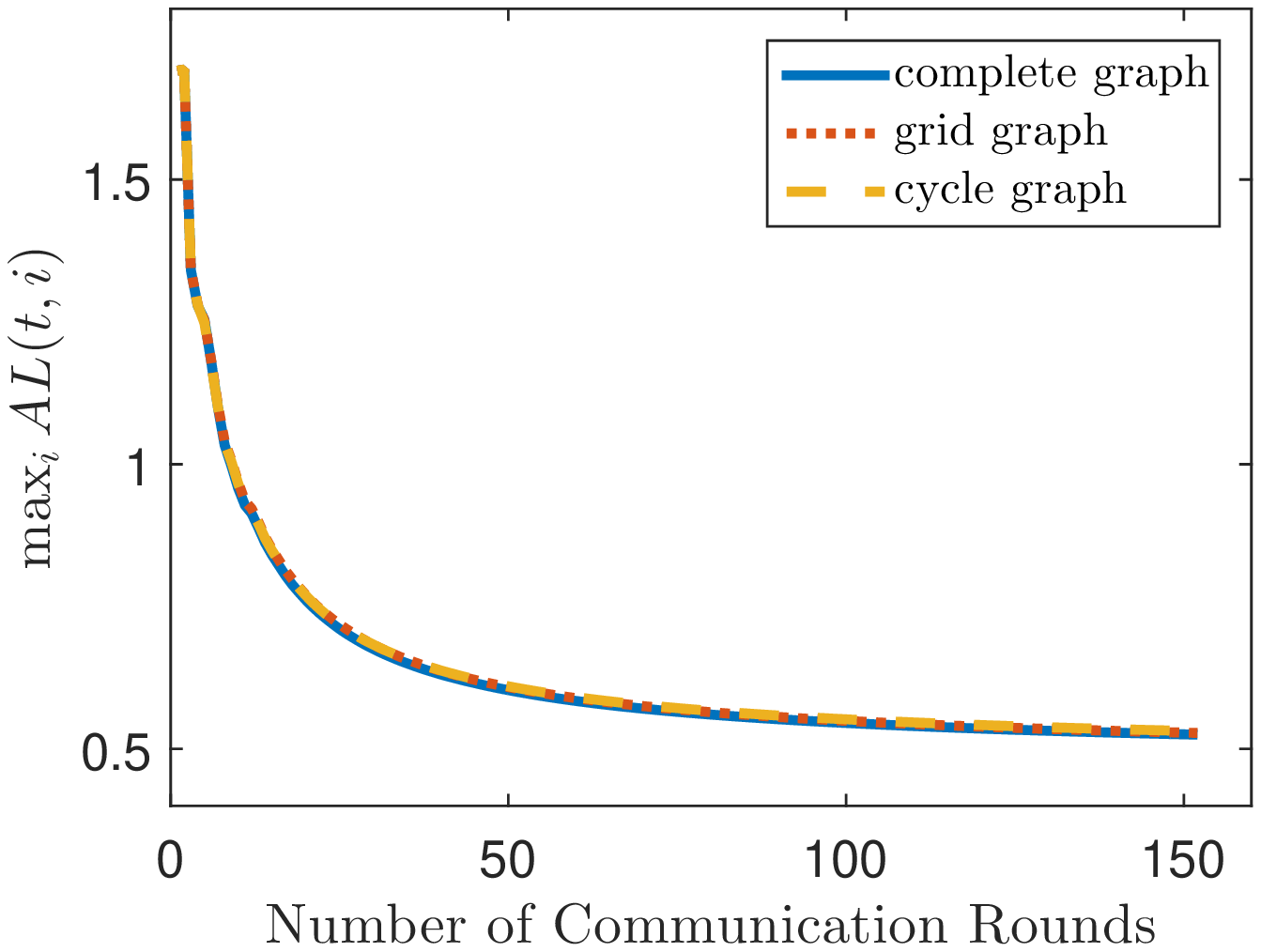}}
\centering
\subfigure[ijcnn1]{\includegraphics[width=0.46\textwidth]{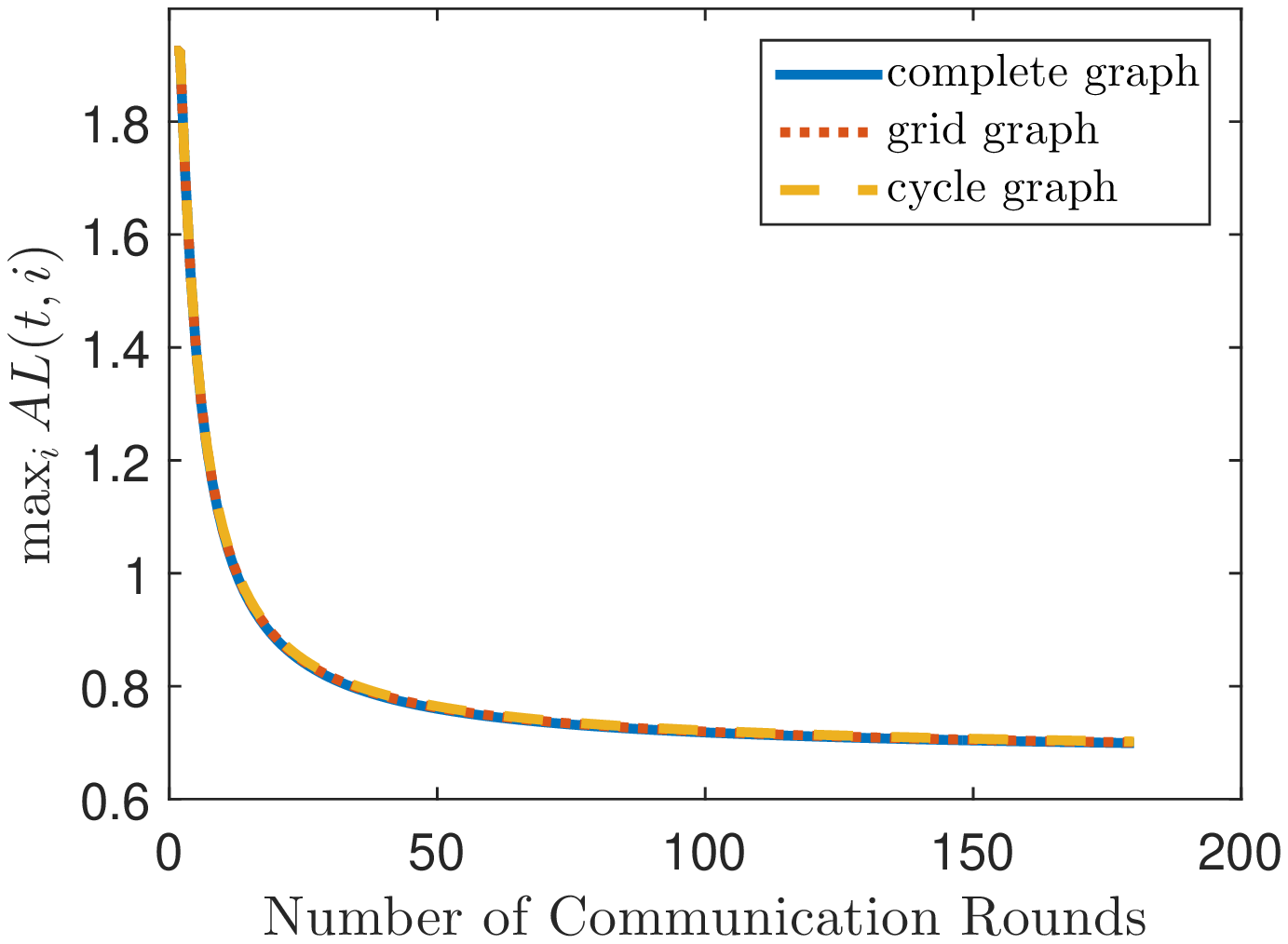}}
\caption{Comparisons of D-BOCG$_{\rm sc}$ on distributed online binary classification over different graphs.}
\label{fig_b2-2}
\end{figure}

Then, to verify the performance of our D-BBCG, we compare it with our D-BOCG. Note that D-BBCG only uses approximate gradients generated by the one-point gradient estimator, the performance of which is highly affected by the dimensionality. Therefore, to make a fair comparison, we only use ijcnn1, the dimensionality of which is relatively small. Specifically, we denote D-BBCG with parameters in Theorem \ref{thm2-sc} as D-BBCG$_{\rm c}$, and D-BBCG with parameters in Theorem \ref{thm2-sc-c} as D-BBCG$_{\rm sc}$. According to Theorems \ref{thm2-sc} and \ref{thm2-sc-c}, we set $\alpha=0$, $K=L=\lfloor\sqrt{T}\rfloor$, $\delta=10T^{-1/4}$, and $h=T^{3/4}/c$ for D-BBCG$_{\rm c}$ where the constant $c$ is tuned from $\{0.01,\dots,1e5\}$, and set $\alpha=2\lambda$, $K=L=\lfloor T^{2/3}(\ln T)^{-2/3}\rfloor$, $\delta=10T^{-1/3}(\ln T)^{1/3}$, and $h=c^\prime \alpha K$ where the constant $c^\prime$ is tuned from $\{1,2,3,4,5\}$. Moreover, we initialize $\x_i(1)=(1-\delta\sqrt{d}/\tau)\mathbf{1}/d,\forall i\in[n]$ for both D-BBCG$_{\rm c}$ and D-BBCG$_{\rm sc}$. Since D-BBCG$_{\rm c}$ and D-BBCG$_{\rm sc}$ are  randomized algorithms, we repeat them 10 times and report the average results.
\begin{figure}[t]
\centering
\subfigure[complete graph]{\includegraphics[width=0.32\textwidth]{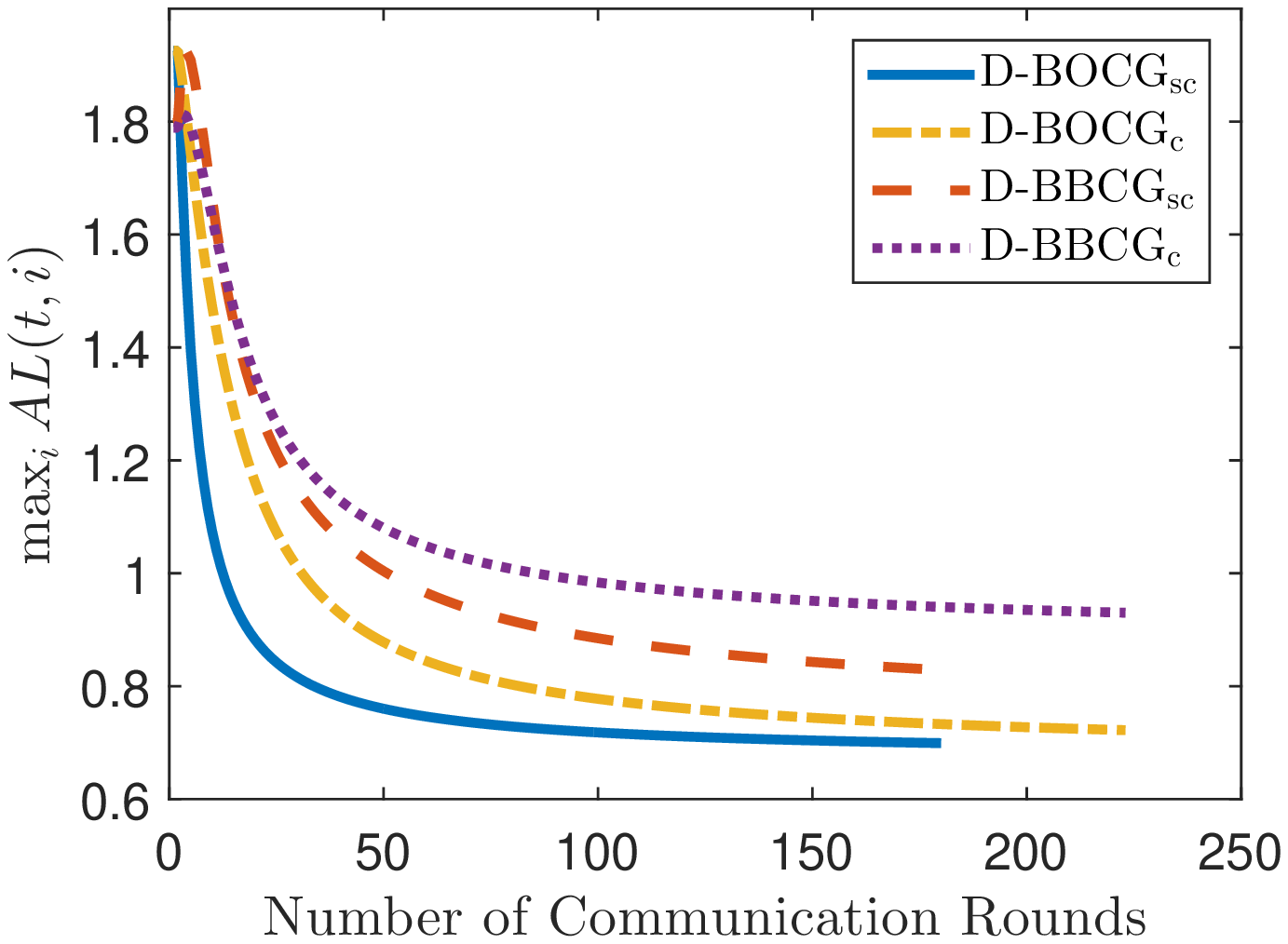}}
\centering
\subfigure[grid graph]{\includegraphics[width=0.32\textwidth]{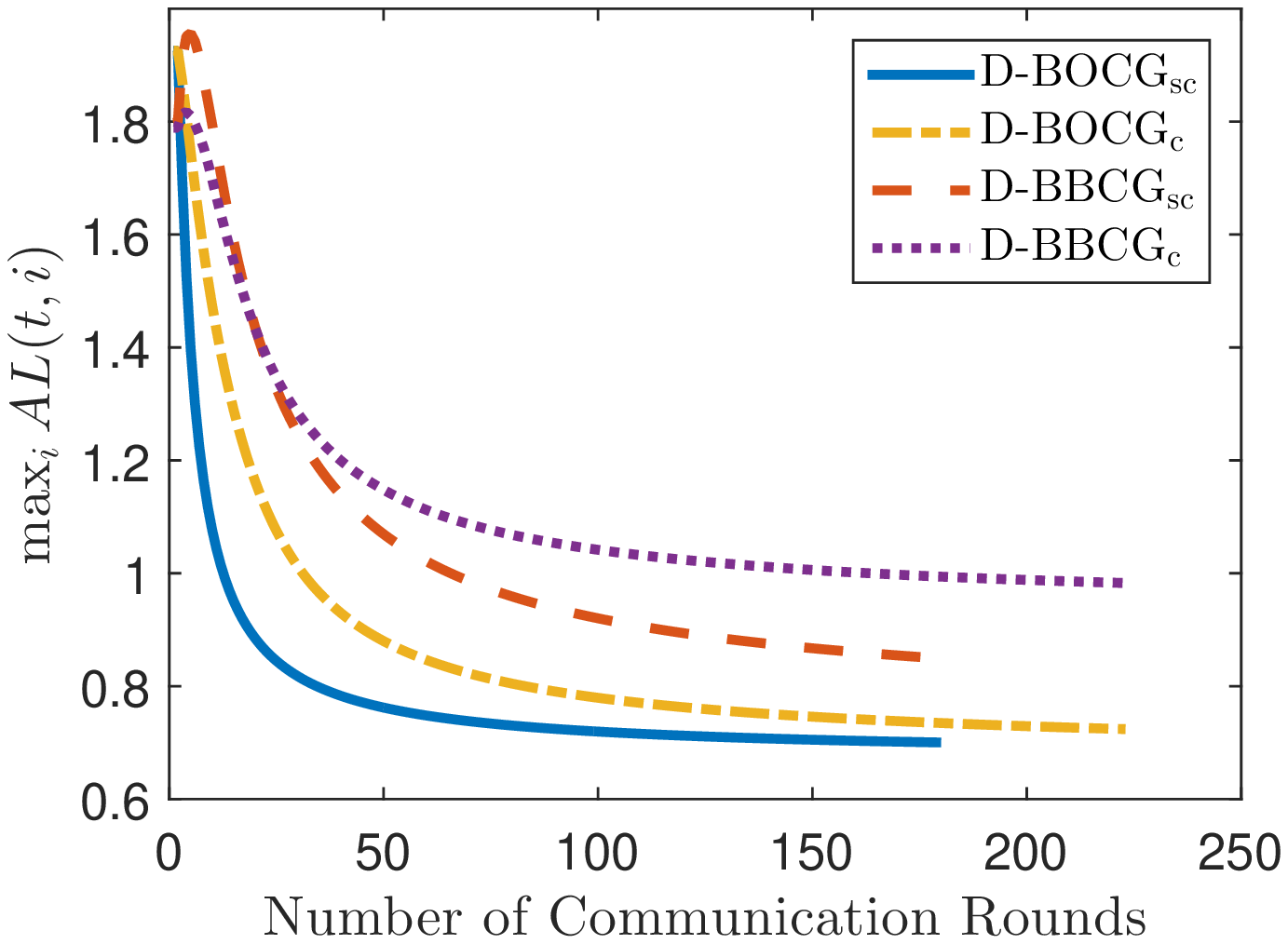}}
\centering
\subfigure[cycle graph]{\includegraphics[width=0.32\textwidth]{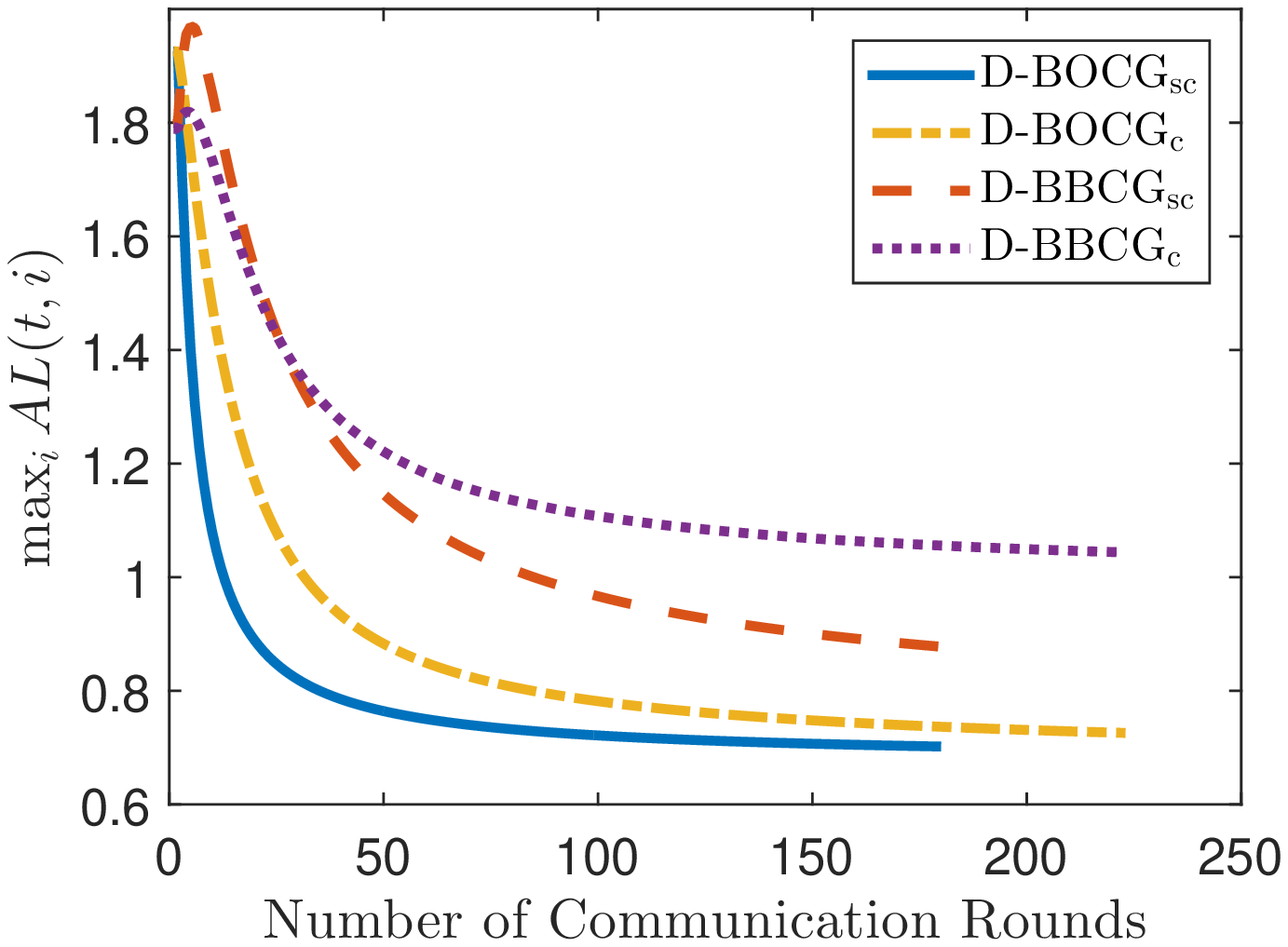}}
\caption{Comparisons of D-BOCG$_{\rm c}$, D-BOCG$_{\rm sc}$, D-BBCG$_{\rm c}$, and D-BBCG$_{\rm sc}$ on distributed online binary classification for ijcnn1.}
\label{fig_b3}
\end{figure}

Fig.~\ref{fig_b3} shows comparisons of D-BOCG$_{\rm c}$, D-BOCG$_{\rm sc}$, D-BBCG$_{\rm c}$, and D-BBCG$_{\rm sc}$ on distributed online binary classification for ijcnn1. For all three types of graphs, we find that D-BBCG$_{\rm c}$ is worse than D-BOCG$_{\rm c}$ and D-BBCG$_{\rm sc}$ is worse than D-BOCG$_{\rm cs}$, which is reasonable because D-BBCG$_{\rm c}$ and D-BBCG$_{\rm sc}$ are working with the more challenging bandit setting. Moreover, D-BBCG$_{\rm sc}$ is better than D-BBCG$_{\rm c}$, which validates the advantage of utilizing the strong convexity in the bandit setting.

\section{Conclusion and Future Work}
In this paper, we first propose a projection-free algorithm called D-BOCG for distributed online convex optimization. Our analysis shows that D-BOCG enjoys an $O(T^{3/4})$ regret bound with $O(\sqrt{T})$ communication rounds for convex losses, and a better regret bound of $O(T^{2/3}(\log T)^{1/3})$ with fewer $O(T^{1/3}(\log T)^{2/3})$ communication rounds for strongly convex losses.
In the case with convex losses, the $O(T^{3/4})$ regret bound of D-BOCG matches the best result established by the existing projection-free algorithm with $T$ communication rounds, and the $O(\sqrt{T})$ communication rounds required by D-BOCG match (in terms of $T$) the lower bound for any distributed online algorithm attaining the $O(T^{3/4})$ regret. In the case with strongly convex losses, we also provide a lower bound to show that the $O(T^{1/3}(\log T)^{2/3})$ communication rounds required by D-BOCG are nearly optimal (in terms of $T$) for obtaining the $O(T^{2/3}(\log T)^{1/3})$ regret bound up to polylogarithmic factors. Furthermore, to handle the bandit setting, we propose a bandit variant of D-BOCG, namely D-BBCG, and obtain similar theoretical guarantees.

Besides the future work discussed in Section \ref{sec5.3} and Remark \ref{rem8}, there are still several open problems to be investigated. First, in the standard OCO, \citet{Hazan20} have proposed a projection-free algorithm that obtains an expected regret bound of $O(T^{2/3})$ for convex and smooth losses. It is interesting to extend their algorithm to the distributed setting studied in this paper. However, their algorithm is not based on conditional gradient, which makes the extension non-trivial. 
Second, in this paper, the weight matrix $P$ is assumed to be symmetric and doubly stochastic. It is appealing to consider a more practical scenario, in which $P$ could be asymmetric or only column (or row) stochastic \citep{Yang19survey,Yi2020}. Finally, we will investigate whether the regret bound for the full information setting can be improved if a few projections are allowed. We note that $O(\log T)$ projections are sufficient to achieve the optimal convergence rate for stochastic optimization of smooth and strongly convex functions \citep{zhangICML13}.




\appendix

\section{Proof of Corollaries \ref{cor2} and \ref{cor-sc}}
\label{A1.3}
Corollary \ref{cor2} can be proved by substituting $\alpha=0$, $K=L=\sqrt{T}$, and $h=\frac{n^{1/4}T^{3/4}G}{\sqrt{1-\sigma_2(P)}R}$ into Theorem \ref{thm1-sc-2}, as follows
\begin{equation*}
\begin{split}
R_{T,i}
\leq&\frac{12nGRT}{\sqrt{\sqrt{T}+2}}+\sum_{m=2}^B\frac{3n^{5/4}T^{1/4}GR}{2\sqrt{1-\sigma_2(P)}}+\sum_{m=1}^B2\sqrt{1-\sigma_2(P)}n^{3/4}T^{1/4}GR+\frac{4n^{5/4}T^{3/4}GR}{\sqrt{1-\sigma_2(P)}}\\
\leq&12nGRT^{3/4}+\frac{11n^{5/4}T^{3/4}GR}{2\sqrt{1-\sigma_2(P)}}+2\sqrt{1-\sigma_2(P)}n^{3/4}T^{3/4}GR
\end{split}
\end{equation*}
where the last inequality is due to $B-1<B=T/K=\sqrt{T}$.

Corollary \ref{cor-sc} can be proved by substituting
$\alpha>0$, $K=L=T^{2/3}(\ln T)^{-2/3}$, and $h=\alpha K$ into Theorem \ref{thm1-sc-2}, as follows
\begin{equation*}
\begin{split}
R_{T,i}
\leq&\frac{12nGRT}{\sqrt{L}}+\sum_{m=2}^B\frac{3nGK(G+\alpha R)\sqrt{n}}{m\alpha (1-\sigma_2(P))}+\sum_{m=1}^B\frac{4nK(G+2\alpha R)^2}{(m+2)\alpha}+4n\alpha KR^2\\
\leq&\frac{12nGRT}{\sqrt{L}}+\left(\frac{3nG(G+\alpha R)\sqrt{n}}{\alpha (1-\sigma_2(P))}+\frac{4n(G+2\alpha R)^2}{\alpha}\right)\sum_{m=1}^{B}\frac{K}{m}+4n\alpha KR^2\\
\leq&12nGRT^{2/3}(\ln T)^{1/3}+\left(\frac{3nG(G+\alpha R)\sqrt{n}}{\alpha (1-\sigma_2(P))}+\frac{4n(G+2\alpha R)^2}{\alpha}\right)K(1+\ln B)+4n\alpha KR^2\\
\leq&\left(\frac{3n^{3/2}G(G+\alpha R)}{\alpha (1-\sigma_2(P))}+\frac{4n(G+2\alpha R)^2}{\alpha}\right)T^{2/3}((\ln T)^{-2/3}+(\ln T)^{1/3})\\
&+12nGRT^{2/3}(\ln T)^{1/3}+4nR^2\alpha T^{2/3}(\ln T)^{-2/3}
\end{split}
\end{equation*}
where the last inequality is due to $K=T^{2/3}(\ln T)^{-2/3}$ and $\ln B\leq \ln T$.

\section{Proof of Theorem \ref{pro_thm2}}
In the beginning, we define several auxiliary variables. Let $\bar{\z}(m)=\frac{1}{n}\sum_{i=1}^n\z_i(m)$ for any $m\in[B+1]$ and $\bar{\g}(m)=\frac{1}{n}\sum_{i=1}^n\widehat{\g}_i(m)$ for any $m\in[B]$. Then, we define $\bar{\x}(1)=\x_{\ii}$ and $\bar{\x}(m+1)=\argmin_{\x\in\K_\delta}
\bar{F}_{m}(\x)$ for any $m\in[B+1]$, where
\[\bar{F}_{m}(\x)=\bar{\z}(m)^{\top}\mathbf{x}+h\|\mathbf{x}-\x_{\ii}\|_2^2.\] Similarly, we define $\widehat{\x}_i(m+1)=\argmin_{\x\in\K_\delta}F_{m,i}(\x)$ for any $m\in[B+1]$, where \[F_{m,i}(\x)=\z_{i}(m)^{\top}\mathbf{x}+h\|\mathbf{x}-\x_{\ii}\|_2^2\] is defined in Algorithm \ref{DBBCG-SC} when $\alpha=0$.

Moreover, let $\x^\ast\in\argmin_{\x\in\K}\sum_{t=1}^Tf_{t}(\x)$ and $\widetilde{\x}^\ast=(1-\delta/r)\x^\ast$. For any $j\in V$ and $t\in[T]$, we denote the $\delta$-smoothed version of $f_{t,j}(\x)$ by $\widehat{f}_{t,j,\delta}(\x)$.
Note that as in (\ref{thm_bsc_new_eq1}), we have proved that Algorithm \ref{DBBCG-SC} ensures
\begin{equation}
\label{thm_bc-eq1}
\begin{split}
R_{T,i}
\leq&\sum_{m=1}^B\sum_{t\in\mathcal{T}_m}\sum_{j=1}^n(\widehat{f}_{t,j,\delta}(\x_{i}(m))-\widehat{f}_{t,j,\delta}(\widetilde{\x}^\ast))+3\delta nGT+\frac{\delta nGRT}{r}.
\end{split}
\end{equation}
To bound the term $\sum_{m=1}^B\sum_{t\in\mathcal{T}_m}\sum_{j=1}^n(\widehat{f}_{t,j,\delta}(\x_{i}(m))-\widehat{f}_{t,j,\delta}(\widetilde{\x}^\ast))$ in (\ref{thm_bc-eq1}), we assume that for all $i\in V$ and $m=1,\dots,B$, Algorithm \ref{DBBCG-SC} ensures that
\begin{equation*}
\|\widehat{\g}_i(m)\|_2\leq \widehat{G}=\xi_T\frac{dM\sqrt{K}}{\delta}+KG.
\end{equation*}
Then, we can derive an upper bound of $\|\x_i(m)-\bar{\x}(m)\|_2$. For any $B\geq m\geq2$, we note that $F_{m-1,i}(\x)$ is $2h$-smooth, and Algorithm \ref{DBBCG-SC} ensures
 \[\mathbf{x}_{i}(m)=\text{CG}(\mathcal{K}_\delta, L, F_{m-1,i}(\x), \x_i(m-1)).\]
According to Lemma \ref{lem_ILO}, Assumption \ref{assum1}, and $\mathcal{K}_\delta\subseteq\mathcal{K}$, for $B\geq m\geq2$, it is easy to verify that
\[F_{m-1,i}(\x_i(m))-F_{m-1,i}(\widehat{\x}_i(m))\leq\frac{16hR^2}{L+2}.\]
Then, for any $B\geq m\geq2$, we have
\begin{equation}
\label{thm_bc-eq2}
\begin{split}
\|\x_i(m)-\bar{\x}(m)\|_2\leq&\|\x_i(m)-\widehat{\x}_i(m)\|_2+\|\widehat{\x}_i(m)-\bar{\x}(m)\|_2\\
\leq&\sqrt{\frac{F_{m-1,i}(\x_i(m))-F_{m-1,i}(\widehat{\x}_i(m))}{h}}+\|\widehat{\x}_i(m)-\bar{\x}(m)\|_2\\
\leq&\frac{4R}{\sqrt{L+2}}+\|\widehat{\x}_i(m)-\bar{\x}(m)\|_2\\
\leq&\frac{4R}{\sqrt{L+2}}+\frac{1}{2h}\|\z_i(m)-2h\x_{\ii}-\bar{\z}(m)+2h\x_{\ii}\|_2\\
\leq&\frac{4R}{\sqrt{L+2}}+\frac{\widehat{G}\sqrt{n}}{2h(1-\sigma_2(P))}
\end{split}
\end{equation}
where the second inequality is due to the fact that $F_{m-1,i}(\x)$ is $2h$-strongly convex and (\ref{cor_scvx}), the fourth inequality is due to Lemma \ref{dual_lem1}, and the last inequality is due to Lemma \ref{graph_lem_zhang}.

For brevity, let $\epsilon=\frac{4R}{\sqrt{L+2}}+\frac{\widehat{G}\sqrt{n}}{2h(1-\sigma_2(P))}$. By combining (\ref{thm_bc-eq2}) with $\x_i(1)=\bar{\x}(m)=\x_{\ii}$, for any $m\in[B]$, we have
\begin{equation}
\label{thm_bc-eq3}
\begin{split}
\|\x_i(m)-\bar{\x}(m)\|_2\leq\epsilon.
\end{split}
\end{equation}
For any $i,j\in V$, $m\in[B]$, and $t\in\mathcal{T}_m$, according to Lemma \ref{smoothed_lem1} and Assumption \ref{assum4}, $\widehat{f}_{t,j,\delta}(\x)$ is also convex and $G$-Lipschitz. Then, by combining with (\ref{thm_bc-eq3}), we have
\begin{equation}
\label{thm_bc-eq4}
\begin{split}
&\widehat{f}_{t,j,\delta}(\x_{i}(m))-\widehat{f}_{t,j,\delta}(\widetilde{\x}^\ast)\\
\leq&\widehat{f}_{t,j,\delta}(\bar{\x}(m))-\widehat{f}_{t,j,\delta}(\widetilde{\x}^\ast)+G\|\bar{\x}(m)-\x_i(m)\|_2\\
\leq&\widehat{f}_{t,j,\delta}(\x_{j}(m))-\widehat{f}_{t,j,\delta}(\widetilde{\x}^\ast)+G\|\bar{\x}(m)-\x_j(m)\|_2+G\epsilon\\
\leq&\nabla\widehat{f}_{t,j,\delta}(\x_{j}(m))^\top(\x_{j}(m)-\widetilde{\x}^\ast)+2G\epsilon\\
\leq&\nabla\widehat{f}_{t,j,\delta}(\x_{j}(m))^\top(\x_{j}(m)-\bar{\x}(m))+\nabla\widehat{f}_{t,j,\delta}(\x_{j}(m))^\top(\bar{\x}(m)-\widetilde{\x}^\ast)+2G\epsilon\\
\leq&\|\nabla\widehat{f}_{t,j,\delta}(\x_{j}(m))\|_2\|\x_{j}(m)-\bar{\x}(m)\|_2+\nabla\widehat{f}_{t,j,\delta}(\x_{j}(m))^\top(\bar{\x}(m)-\widetilde{\x}^\ast)+2G\epsilon\\
\leq&\nabla\widehat{f}_{t,j,\delta}(\x_{j}(m))^\top(\bar{\x}(m)-\widetilde{\x}^\ast)+3G\epsilon.
\end{split}
\end{equation}
By combining (\ref{thm_bc-eq1}) with (\ref{thm_bc-eq4}), for any $i\in V$, we have
\begin{equation*}
\begin{split}
R_{T,i}\leq&\sum_{m=1}^B\sum_{t\in\mathcal{T}_m}\sum_{j=1}^n\nabla\widehat{f}_{t,j,\delta}(\x_{j}(m))^\top(\bar{\x}(m)-\widetilde{\x}^\ast)+3nGT\epsilon+3\delta nGT+\frac{\delta nGRT}{r}.
\end{split}
\end{equation*}
Then, to bound $\sum_{m=1}^B\sum_{t\in\mathcal{T}_m}\sum_{j=1}^n\nabla\widehat{f}_{t,j,\delta}(\x_{j}(m))^\top(\bar{\x}(m)-\widetilde{\x}^\ast)$, we introduce the following lemma.
\begin{lem}
\label{lem2_azuma}
Let $\bar{\z}(m)=\frac{1}{n}\sum_{i=1}^n\z_i(m)$ for any $m\in[B+1]$ and $\bar{\g}(m)=\frac{1}{n}\sum_{i=1}^n\widehat{\g}_i(m)$ for any $m\in[B]$. Define $\bar{\x}(1)=\x_{\ii}$, where $\x_{\ii}$ is an input of Algorithm \ref{DBBCG-SC}. Moreover, define $\bar{F}_{m}(\x)=\bar{\z}(m)^{\top}\mathbf{x}+h\|\mathbf{x}-\x_{\ii}\|_2^2$ and $\bar{\x}(m+1)=\argmin_{\x\in\K_\delta}
\bar{F}_{m}(\x)$ for any $m\in[B+1]$. Under Assumptions \ref{assum4}, \ref{assum1}, \ref{assum5}, and an additional assumption that $\|\widehat{\g}_i(m)\|_2\leq \widehat{G}$ for any $i\in V$ and $m\in[B]$, with probability at least $1-\gamma$, Algorithm \ref{DBBCG-SC} with $\alpha=0$ has
\begin{align*}
\sum_{m=1}^B\sum_{t\in\mathcal{T}_m}\sum_{j=1}^n\nabla\widehat{f}_{t,j,\delta}(\x_{j}(m))^\top(\bar{\x}(m)-\widetilde{\x}^\ast)\leq2nR(KG+\widehat{G})\sqrt{2B\ln\frac{1}{\gamma}}+4nhR^2+\frac{2nB\widehat{G}^2}{h}
\end{align*}
where $\widetilde{\x}^\ast=(1-\delta/r)\x^\ast$, $\x^\ast\in\argmin_{\x\in\K}\sum_{t=1}^Tf_{t}(\x)$, and $\widehat{f}_{t,j,\delta}(\x)$ denotes the $\delta$-smoothed version of $f_{t,j}(\x)$.
\end{lem}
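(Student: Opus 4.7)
\bigskip

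\noindent\textbf{Proof proposal for Lemma \ref{lem2_azuma}.}

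The plan is to split the target sum into a deterministic FTRL-like piece and a zero-mean martingale piece, and to handle them separately. Let $\mathcal{F}_{m-1}$ be the $\sigma$-algebra generated by everything available at the beginning of block $m$, and define the conditional ``expected cumulative gradient'' of block $m$,
\[
\mu_m \;=\; \sum_{t\in\mathcal{T}_m}\sum_{j=1}^n \nabla\widehat{f}_{t,j,\delta}(\x_j(m)).
\]
Because $\y_j(t)=\x_j(m)+\delta\uu_j(t)$ with $\uu_j(t)\sim\SS^d$ and $\g_j(t)=\frac{d}{\delta}f_{t,j}(\y_j(t))\uu_j(t)$, Lemma~\ref{smoothed_lem2} gives $\E[\g_j(t)\mid\mathcal{F}_{m-1}]=\nabla\widehat{f}_{t,j,\delta}(\x_j(m))$, so $\E[n\bar{\g}(m)\mid\mathcal{F}_{m-1}]=\mu_m$. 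Also $\bar{\x}(m)$ is $\mathcal{F}_{m-1}$-measurable because it depends only on $\bar{\z}(m)$. Thus, with $\xi_m:=(\mu_m-n\bar{\g}(m))^\top(\bar{\x}(m)-\widetilde{\x}^\ast)$, the sequence $\{\xi_m\}$ is a martingale difference sequence, and we write
\[
\sum_{m=1}^B\sum_{t\in\mathcal{T}_m}\sum_{j=1}^n\nabla\widehat{f}_{t,j,\delta}(\x_j(m))^\top(\bar{\x}(m)-\widetilde{\x}^\ast)
\;=\; n\sum_{m=1}^B \bar{\g}(m)^\top(\bar{\x}(m)-\widetilde{\x}^\ast) + \sum_{m=1}^B \xi_m.
\]

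For the first (deterministic, given the gradient estimates) sum, the approach mirrors the argument used in the proof of Theorem~\ref{thm1-sc} for the strongly convex case, now specialized to $\alpha=0$. By (\ref{thm_sc-eq1}) with $\alpha=0$, $\bar{\z}(m+1)=\bar{\z}(m)+\bar{\g}(m)$, so $\bar{\x}(m+1)=\argmin_{\x\in\K_\delta}\sum_{\tau=1}^{m-1}\bar{\g}(\tau)^\top\x+h\|\x-\x_{\ii}\|_2^2$. Applying Lemma~\ref{ftrl1} with losses $\widetilde{f}_m(\x)=\bar{\g}(m)^\top\x$, decision set $\K_\delta$, and regularizer $h\|\x-\x_{\ii}\|_2^2$, together with the split $\bar{\g}(m)^\top(\bar{\x}(m)-\widetilde{\x}^\ast)=\bar{\g}(m)^\top(\bar{\x}(m+1)-\widetilde{\x}^\ast)+\bar{\g}(m)^\top(\bar{\x}(m)-\bar{\x}(m+1))$, yields an FTRL regret plus two ``stability'' terms of the form $\|\bar{\g}(m)\|_2\|\bar{\x}(m)-\bar{\x}(m+1)\|_2$ and $\|\bar{\g}(m)\|_2\|\bar{\x}(m+1)-\bar{\x}(m+2)\|_2$. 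Since $\bar{F}_m$ is $2h$-strongly convex and $\bar{F}_{m+1}-\bar{F}_m=\bar{\g}(m)^\top\x$, the same strong-convexity trick used after (\ref{thm_sc-eq8}) gives $\|\bar{\x}(m+1)-\bar{\x}(m+2)\|_2\le\|\bar{\g}(m)\|_2/h\le\widehat G/h$; together with $\|\widetilde{\x}^\ast-\x_{\ii}\|_2\le 2R$ and $\bar{\x}(1)=\bar{\x}(2)=\x_{\ii}$, this produces the bound $n\sum_m\bar{\g}(m)^\top(\bar{\x}(m)-\widetilde{\x}^\ast)\le 4nhR^2+\tfrac{2nB\widehat G^2}{h}$.

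For the martingale sum, I would apply Azuma--Hoeffding. Using $G$-Lipschitzness of $\widehat{f}_{t,j,\delta}$ (Lemma~\ref{smoothed_lem1}), $\|\mu_m\|_2\le nKG$, while the hypothesis $\|\widehat{\g}_i(m)\|_2\le\widehat G$ yields $n\|\bar{\g}(m)\|_2\le n\widehat G$. Combined with $\|\bar{\x}(m)-\widetilde{\x}^\ast\|_2\le 2R$ (Assumption~\ref{assum1} and $\widetilde{\x}^\ast\in\K$), this gives the almost sure bound $|\xi_m|\le 2nR(KG+\widehat G)$. Azuma's inequality then yields $\sum_{m=1}^B\xi_m\le 2nR(KG+\widehat G)\sqrt{2B\ln(1/\gamma)}$ with probability at least $1-\gamma$. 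Summing the two pieces delivers the stated bound.

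The main obstacles I anticipate are bookkeeping rather than conceptual: first, keeping the filtration and measurability straight so that $\E[n\bar{\g}(m)\mid\mathcal{F}_{m-1}]=\mu_m$ while $\bar{\x}(m)$ remains measurable (the per-round randomness $\uu_j(t)$ is what drives the martingale); and second, correctly handling the boundary term at $m=1$ where $\bar{\x}(1)=\bar{\x}(2)=\x_{\ii}$ so that only $B-1$ stability terms need to be controlled, ensuring the final constants match $4nhR^2+2nB\widehat G^2/h$ exactly rather than a slightly worse bound.
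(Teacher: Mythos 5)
Your proposal is correct and follows essentially the same route as the paper's proof: your $\xi_m$ is exactly the paper's martingale difference $D_m$, bounded almost surely by $2nR(KG+\widehat{G})$ and handled via Azuma's inequality, while the remaining term $n\sum_m\bar{\g}(m)^\top(\bar{\x}(m)-\widetilde{\x}^\ast)$ is bounded by applying Lemma \ref{ftrl1} with the linear losses $\bar{\g}(m)^\top\x$ over $\K_\delta$ and controlling the stability terms through the $2h$-strong convexity of $\bar{F}_m$, including the boundary observation $\bar{\x}(1)=\bar{\x}(2)=\x_{\ii}$. The constants you obtain, $4nhR^2+2nB\widehat{G}^2/h$ and $2nR(KG+\widehat{G})\sqrt{2B\ln(1/\gamma)}$, match the paper's exactly.
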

According to Lemma \ref{lem2_azuma}, by assuming that $\|\widehat{\g}_i(m)\|_2\leq \widehat{G}$ for any $i\in V$ and $m\in[B]$, with probability at least $1-\gamma$, we have
\begin{equation*}
\begin{split}
R_{T,i}\leq&2nR(KG+\widehat{G})\sqrt{2B\ln\frac{1}{\gamma}}+4nhR^2+\frac{2nB\widehat{G}^2}{h}+3nGT\epsilon+3\delta nGT+\frac{\delta nGRT}{r}.
\end{split}
\end{equation*}
By substituting $\epsilon=\frac{4R}{\sqrt{L+2}}+\frac{\widehat{G}\sqrt{n}}{2h(1-\sigma_2(P))}$, $h=\frac{n^{1/4}\xi_TdMT^{3/4}}{\sqrt{1-\sigma_2(P)}R}$, $\delta=cT^{-1/4}$, $K=L=\sqrt{T}$, and $\widehat{G}=\xi_T\frac{dM\sqrt{K}}{\delta}+KG$ into the above inequality, we have
\begin{equation*}
\begin{split}
R_{T,i}\leq &2nR\left(2G+\frac{\xi_TdM}{c}\right)\sqrt{2\ln\frac{1}{\gamma}}T^{3/4}+\frac{4\xi_Tn^{5/4}dMR}{\sqrt{1-\sigma_2(P)}}T^{3/4}\\
&+2n^{3/4}\sqrt{1-\sigma_2(P)}\left(\frac{R}{c}+\frac{RG}{\xi_TdM}\right)\left(\frac{\xi_T dM}{c}+G\right)T^{3/4}\\
&+12n GRT^{3/4}+\frac{3n^{5/4}G}{2\sqrt{1-\sigma_2(P)}}\left(\frac{R}{c}+\frac{RG}{\xi_T dM}\right)T^{3/4}\\
&+3cnGT^{3/4}+\frac{c nGR}{r}T^{3/4}\\
= &O\left(n^{5/4}(1-\sigma_2(P))^{-1/2}T^{3/4}\xi_T\right).
\end{split}
\end{equation*}
Let $\mathcal{A}$ denote the event of $\|\widehat{\g}_i(m)\|_2\leq \widehat{G},\forall i\in V,m\in[B]$. Because we have used the event $\mathcal{A}$ as a fact, the above result should be formulated as
\begin{equation}
\label{eq_last}
\prob\left.\left(R_{T,i}= O\left(n^{5/4}(1-\sigma_2(P))^{-1/2}T^{3/4}\xi_T\right)\right|\mathcal{A}\right)\geq1-\gamma.
\end{equation}
Furthermore, we introduce the following lemma with respect to the probability of the event $\mathcal{A}$.
\begin{lem}
\label{lem_gradient}
Under Assumptions \ref{assum4} and \ref{assum2}, for all $i\in V$ and $m\in[B]$, Algorithm \ref{DBBCG-SC} has
\[
\|\widehat{\g}_i(m)\|_2\leq\left(1+\sqrt{8\ln\frac{nB}{\gamma}}\right)\frac{dM\sqrt{K}}{\delta}+KG
\]
with probability at least $1-\gamma$.
\end{lem}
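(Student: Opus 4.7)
\noindent\textbf{Proof plan for Lemma \ref{lem_gradient}.} The plan is to view $\widehat{\g}_i(m)=\sum_{t\in\mathcal{T}_m}\g_i(t)$ as the sum of a ``predictable mean'' part plus a zero-mean martingale-difference part, and then apply a vector-valued Azuma inequality together with a union bound over the $nB$ pairs $(i,m)$. First I would condition on the history up through the start of block $m$, so that $\x_i(m)$ is a deterministic vector and the only randomness in $\g_i(t)=\frac{d}{\delta}f_{t,i}(\x_i(m)+\delta\uu_i(t))\uu_i(t)$ comes from the i.i.d.\ draws $\uu_i(t)\sim\SS^d$. Set $\mu_i(t):=\E[\g_i(t)\mid\x_i(m)]$ and $\xi_i(t):=\g_i(t)-\mu_i(t)$.

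By Lemma \ref{smoothed_lem2}, $\mu_i(t)=\nabla\widehat{f}_{t,i,\delta}(\x_i(m))$, and by Lemma \ref{smoothed_lem1} the smoothed function $\widehat{f}_{t,i,\delta}$ inherits the $G$-Lipschitz property, so $\|\mu_i(t)\|_2\leq G$. This immediately yields the ``bias'' bound
\[
\Bigl\|\sum_{t\in\mathcal{T}_m}\mu_i(t)\Bigr\|_2\leq KG.
\]
For the fluctuation $\sum_t\xi_i(t)$, note that conditioned on $\x_i(m)$ the $\xi_i(t)$ are independent, mean-zero, and satisfy $\|\xi_i(t)\|_2\leq\|\g_i(t)\|_2+\|\mu_i(t)\|_2$; using Assumption \ref{assum2} we have $\|\g_i(t)\|_2=\frac{d}{\delta}|f_{t,i}(\y_i(t))|\leq\frac{dM}{\delta}$, so $\|\xi_i(t)\|_2$ is controlled by $dM/\delta$ up to lower-order terms.

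Next I would invoke the standard vector-valued Azuma bound for a martingale difference sequence $\{\xi_i(t)\}_{t\in\mathcal{T}_m}$ in $\R^d$ with increments bounded (in norm) by $c$: with probability at least $1-\gamma'$,
\[
\Bigl\|\sum_{t\in\mathcal{T}_m}\xi_i(t)\Bigr\|_2\leq c\sqrt{K}\bigl(1+\sqrt{8\ln(1/\gamma')}\bigr),
\]
which is obtained by combining the $L^2$ moment bound $\E\|\sum_t\xi_i(t)\|_2\leq c\sqrt{K}$ with a scalar Azuma bound on the Doob martingale $\E_k\|\sum_t\xi_i(t)\|_2$, whose increments are bounded by $2c$. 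Taking $c=dM/\delta$ (after absorbing the $G$-term from $\mu_i(t)$ into the ``$KG$'' part of the final bound) and applying a union bound over the $nB$ choices of $(i,m)$ with $\gamma'=\gamma/(nB)$ produces
\[
\|\widehat{\g}_i(m)\|_2\leq KG+\Bigl(1+\sqrt{8\ln(nB/\gamma)}\Bigr)\frac{dM\sqrt{K}}{\delta}
\]
simultaneously for all $i\in V$ and $m\in[B]$, which is the claim.

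The main obstacle will be ensuring that the constants in the vector concentration inequality line up exactly with $(1+\sqrt{8\ln(1/\gamma')})\cdot dM\sqrt{K}/\delta$, rather than, e.g., with a factor of $2dM/\delta$ coming naively from $\|\xi_i(t)\|_2\leq\|\g_i(t)\|_2+\|\mu_i(t)\|_2$. This is handled by using the sharper moment bound $\E\|\sum_t\xi_i(t)\|_2^2=\sum_t\E\|\xi_i(t)\|_2^2\leq K(dM/\delta)^2$ (noting $\E\|\xi_i(t)\|_2^2\leq\E\|\g_i(t)\|_2^2\leq(dM/\delta)^2$) for the ``expectation'' piece, and then absorbing the Lipschitz term $G$ into the separate $KG$ summand so that only $dM/\delta$ remains as the effective per-step deviation scale.
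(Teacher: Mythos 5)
Your overall skeleton is the same as the paper's (condition on $\x_i(m)$ so the $K$ estimators in block $m$ are independent, run a scalar Azuma bound on a Doob martingale of a norm, control the expectation via Jensen and a second-moment computation, and union bound over the $nB$ pairs), but there is a genuine gap in the step you flag as the "main obstacle," and your proposed fix does not close it. Once you center the summands, writing $\g_i(t)=\mu_i(t)+\xi_i(t)$ with $\mu_i(t)=\nabla\widehat{f}_{t,i,\delta}(\x_i(m))$, the Doob-martingale increments of $\bigl\|\sum_{t\in\mathcal{T}_m}\xi_i(t)\bigr\|_2$ are only bounded by $2\sup_t\|\xi_i(t)\|_2\leq 2\left(\frac{dM}{\delta}+G\right)$; the triangle inequality here is tight in general, and nothing lets you run Azuma with the per-step scale $c=dM/\delta$. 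Consequently your argument yields
\begin{equation*}
\|\widehat{\g}_i(m)\|_2\leq KG+\frac{dM\sqrt{K}}{\delta}+\left(\frac{dM}{\delta}+G\right)\sqrt{8K\ln\frac{nB}{\gamma}},
\end{equation*}
which exceeds the stated bound by the term $G\sqrt{8K\ln\frac{nB}{\gamma}}$. "Absorbing the Lipschitz term $G$ into the separate $KG$ summand" is not a valid operation: it would require $8\ln\frac{nB}{\gamma}\leq K$, a relation between $K$, $n$, $B$, and $\gamma$ that the lemma does not assume. (The sharper moment bound you invoke only fixes the expectation piece, which was never the problem.)

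The paper sidesteps this entirely by \emph{not} centering. It applies the Doob/Azuma argument directly to $N=\|\widehat{\g}_i(m)\|_2=\bigl\|\sum_{j=1}^K X_j\bigr\|_2$ with $X_j=\g_i(t_j)$: since $\widehat{S}_j=\sum_{k\neq j}X_k$ is conditionally independent of $X_j$, the identity $\E[\|\widehat{S}_j\|_2\,|\,\mathbf{X}_j]=\E[\|\widehat{S}_j\|_2\,|\,\mathbf{X}_{j-1}]$ makes the increments bounded by $2\|X_j\|_2\leq 2dM/\delta$ — the mean of $X_j$ never enters the increment bound. The Lipschitz constant $G$ then appears only through the expectation, via $\E[N\,|\,\x_i(m)]\leq\sqrt{\E[N^2\,|\,\x_i(m)]}\leq\sqrt{K(dM/\delta)^2+K^2G^2}\leq\frac{dM\sqrt{K}}{\delta}+KG$, where the cross terms in (\ref{eq_EN}) are bounded by $G^2$ using Lemmas \ref{smoothed_lem2} and \ref{smoothed_lem1}. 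That is exactly how the constant $\left(1+\sqrt{8\ln\frac{nB}{\gamma}}\right)\frac{dM\sqrt{K}}{\delta}+KG$ is achieved. Your decomposition, as written, proves only a weaker lemma (which would still suffice asymptotically for Theorem \ref{pro_thm2} under its parameter choices, but is not the stated claim); to get the claim itself, switch to the uncentered argument.
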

Then, by applying Lemma \ref{lem_gradient} with $B=T/K=\sqrt{T}$, we have
\begin{equation}
\label{eq_last2}
\prob\left(\mathcal{A}\right)\geq1-\gamma.
\end{equation}
Finally, we complete the proof by combining (\ref{eq_last}) with (\ref{eq_last2}).

\section{Proof of Lemmas \ref{graph_lem_zhang} and \ref{graph_lem2_exp}}
These two lemmas can be derived by following the proof of Lemma 6 in \citet{wenpeng17}. For completeness, we include the detailed proof in this paper.

Let $P^{s}$ denote the $s$-th power of $P$ and $P^{s}_{ij}$ denote the $j$-th entry of the $i$-row in $P^{s}$ for any $s\geq0$. Note that $P^0$ denotes the identity matrix $I_n$.
For $m=1$, it is easy to verify that
\begin{equation}
\label{eq0_zbound}
\|\z_i(m)-\bar{\z}(m)\|_2=0\leq \frac{\sqrt{n}\widehat{G}}{1-\sigma_2(P)}.
\end{equation}
To analyze the case with $B\geq m\geq2$, we introduce two intermediate results from \citet{wenpeng17} and \citet{DADO2011}. First, as shown in the proof of Lemma 6 at \citet{wenpeng17}, for any $B\geq m\geq2$, we have
\begin{equation}
\label{eq1_zbound}
\begin{split}
\|\z_i(m)-\bar{\z}(m)\|_2
\leq\sum_{\tau=1}^{m-1}\sum_{j=1}^n\left|P_{ij}^{m-1-\tau}-\frac{1}{n}\right|\|\dd_j(\tau)\|_2
\end{split}
\end{equation}
under Assumption \ref{assum5}. Second, as shown in Appendix B of \citet{DADO2011}, when $P$ is a doubly stochastic matrix, for any positive integer $s$ and any $\x$ in the $n$-dimensional probability simplex, it holds that
\begin{equation}
\label{eq3_zbound}
\|P^0\x-\mathbf{1}/n\|_1\leq \sigma_2^s(P)\sqrt{n}
\end{equation}
where $\mathbf{1}$ is the all-ones vector in $\mathbb{R}^n$.

Let $\mathbf{e}_i$ denote the $i$-th canonical basis vector in $\mathbb{R}^n$. By substituting $\x=\mathbf{e}_i$ into (\ref{eq3_zbound}), we have
\begin{equation}
\label{eq4_zbound}
\|P^s\mathbf{e}_i-\mathbf{1}/n\|_1\leq \sigma_2^s(P)\sqrt{n}
\end{equation}
for any positive integer $s$. If $s=0$, we also have
\begin{equation}
\label{eq5_zbound}
\|P^0\mathbf{e}_i-\mathbf{1}/n\|_1=\frac{2(n-1)}{n}\leq\sqrt{n}=\sigma_2^0(P)\sqrt{n}
\end{equation}
where the inequality is due to $n\geq1$.

Then, for any $B\geq m\geq2$, by combining (\ref{eq1_zbound}) and $\|\dd_i(m)\|_2\leq \widehat{G}$, we have
\begin{equation}
\label{eq6_zbound}
\begin{split}
\|\z_i(m)-\bar{\z}(m)\|_2
\leq&\widehat{G}\sum_{\tau=1}^{m-1}\sum_{j=1}^n\left|P_{ij}^{m-1-\tau}-\frac{1}{n}\right|=\widehat{G}\sum_{\tau=1}^{m-1}\sum_{j=1}^n\left|P_{ji}^{m-1-\tau}-\frac{1}{n}\right|\\
=&\widehat{G}\sum_{\tau=1}^{m-1}\left\|P^{m-1-\tau}\mathbf{e}_i-\frac{\mathbf{1}}{n}\right\|_1
\end{split}
\end{equation}
where the first equality is due to the symmetry of $P$.

Because of (\ref{eq4_zbound}), (\ref{eq5_zbound}), and $\sigma_2(P)<1$, for any $B\geq m\geq2$, we have
\begin{equation}
\label{eq7_zbound}
\begin{split}
\|\z_i(m)-\bar{\z}(m)\|_2
\leq\widehat{G}\sum_{\tau=1}^{m-1}\sigma_2(P)^{m-1-\tau}\sqrt{n}
=\frac{(1-\sigma_2(P)^{m-1})\widehat{G}\sqrt{n}}{1-\sigma_2(P)}\leq\frac{\sqrt{n}\widehat{G}}{1-\sigma_2(P)}.
\end{split}
\end{equation}
By combining (\ref{eq0_zbound}) and (\ref{eq7_zbound}), we can complete the proof of Lemma \ref{graph_lem_zhang}.

Furthermore, by taking the expectation on the both sides of (\ref{eq1_zbound}) and combining with $\E[\|\dd_i(m)\|_2]\leq \widehat{G}$, we can prove Lemma \ref{graph_lem2_exp} in a similar way.

\section{Proof of Lemma \ref{lem2_azuma}}
We first introduce the classical Azuma's inequality \citep{Azuma67} for martingales in the following lemma.
\begin{lem}
\label{azuma}
Suppose $D_1,\dots,D_s$ is a martingale difference sequence and
\[|D_j|\leq c_j\]
almost surely. Then, we have
\[\prob\left(\sum_{j=1}^sD_j\geq\Delta\right)\leq\exp\left(\frac{-\Delta^2}{2\sum_{j=1}^sc_j^2}\right).\]
\end{lem}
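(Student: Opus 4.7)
The plan is to prove Azuma's inequality via the standard Chernoff/exponential-moment method applied conditionally along the filtration associated with the martingale. First, for any $\lambda>0$, I would apply Markov's inequality to the exponential transform of the sum:
\[
\prob\left(\sum_{j=1}^sD_j\geq\Delta\right)=\prob\left(e^{\lambda\sum_{j=1}^sD_j}\geq e^{\lambda\Delta}\right)\leq e^{-\lambda\Delta}\,\mathbb{E}\!\left[e^{\lambda\sum_{j=1}^sD_j}\right].
\]
The task then reduces to bounding the moment generating function of the sum.

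Next, let $\mathcal{F}_0\subseteq\mathcal{F}_1\subseteq\cdots\subseteq\mathcal{F}_s$ be the filtration with respect to which $D_1,\dots,D_s$ is a martingale difference sequence, so $\mathbb{E}[D_j\mid\mathcal{F}_{j-1}]=0$ for every $j$. Using the tower property, I would peel off one factor at a time: since $e^{\lambda\sum_{j=1}^{s-1}D_j}$ is $\mathcal{F}_{s-1}$-measurable,
\[
\mathbb{E}\!\left[e^{\lambda\sum_{j=1}^sD_j}\right]=\mathbb{E}\!\left[e^{\lambda\sum_{j=1}^{s-1}D_j}\,\mathbb{E}\!\left[e^{\lambda D_s}\mid\mathcal{F}_{s-1}\right]\right].
\]
Iterating this identity reduces the joint MGF to a product of pointwise upper bounds on the conditional MGFs $\mathbb{E}[e^{\lambda D_j}\mid\mathcal{F}_{j-1}]$.

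The crux is a conditional form of Hoeffding's lemma: for any $\mathcal{F}_{j-1}$-conditionally mean-zero random variable $D_j$ satisfying $|D_j|\leq c_j$ almost surely, I expect to establish
\[
\mathbb{E}\!\left[e^{\lambda D_j}\mid\mathcal{F}_{j-1}\right]\leq\exp\!\left(\frac{\lambda^2c_j^2}{2}\right).
\]
The standard proof writes $D_j$ as the convex combination $D_j=\tfrac{c_j+D_j}{2c_j}\,c_j+\tfrac{c_j-D_j}{2c_j}\,(-c_j)$, applies convexity of $x\mapsto e^{\lambda x}$ pointwise, takes conditional expectations (killing the $D_j$ term via $\mathbb{E}[D_j\mid\mathcal{F}_{j-1}]=0$), and then compares the resulting $\cosh(\lambda c_j)$ with $e^{\lambda^2c_j^2/2}$ via term-by-term inspection of their Taylor series. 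The main obstacle is executing this lemma carefully in the conditional setting, since the bound $|D_j|\leq c_j$ holds almost surely but the manipulations must all be made under $\mathbb{E}[\,\cdot\mid\mathcal{F}_{j-1}]$; once this is in place, the rest of the argument is routine.

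Combining the peeling identity with Hoeffding's lemma applied at each step yields
\[
\mathbb{E}\!\left[e^{\lambda\sum_{j=1}^sD_j}\right]\leq\exp\!\left(\frac{\lambda^2}{2}\sum_{j=1}^sc_j^2\right),
\]
so that $\prob(\sum_{j=1}^sD_j\geq\Delta)\leq\exp(-\lambda\Delta+\tfrac{\lambda^2}{2}\sum_{j=1}^sc_j^2)$. I would then optimize the free parameter $\lambda>0$ by setting $\lambda=\Delta/\sum_{j=1}^sc_j^2$, which produces the advertised bound $\exp\!\left(-\Delta^2/(2\sum_{j=1}^sc_j^2)\right)$ and completes the proof.
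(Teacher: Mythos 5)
Your proposal is correct: the Chernoff bound via Markov's inequality, the tower-property peeling of the conditional moment generating functions, the conditional Hoeffding lemma via the convex-combination trick and the comparison $\cosh(\lambda c_j)\leq e^{\lambda^2c_j^2/2}$ (using $(2k)!\geq 2^k k!$ termwise), and the choice $\lambda=\Delta/\sum_{j=1}^s c_j^2$ together give exactly the stated bound. Note, however, that the paper does not prove this lemma at all---it states it as the classical Azuma inequality with a citation to Azuma (1967) and uses it as a black box in the proofs of Lemma \ref{lem2_azuma} and Lemma \ref{lem_gradient}---so there is no in-paper argument to compare against; your derivation is the standard textbook one and would serve as a self-contained substitute for the citation.
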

To apply Lemma \ref{azuma}, with $\mathcal{T}_m=\{(m-1)K+1,\dots,mK\}$, we define
\begin{equation}
\label{lem2_azuma_eq1}
\begin{split}
D_m&=\sum_{t\in\mathcal{T}_m}\sum_{j=1}^n\left(\nabla\widehat{f}_{t,j,\delta}(\x_{j}(m))-{\g}_j(t)\right)^\top(\bar{\x}(m)-\widetilde{\x}^\ast)\\
&=\sum_{j=1}^n\left(\sum_{t\in\mathcal{T}_m}\nabla\widehat{f}_{t,j,\delta}(\x_{j}(m))-\widehat{\g}_j(m)\right)^\top(\bar{\x}(m)-\widetilde{\x}^\ast).
\end{split}
\end{equation}
According to Algorithm \ref{DBBCG-SC} and Lemma \ref{smoothed_lem2}, we have
\[\E\left.\left[D_m\right|\x_1(m),\dots,\x_n(m),\bar{\x}(m)\right]=0\]
which further implies that $D_1,\dots,D_B$ is a martingale difference sequence with
\begin{equation}
\label{R1-eq-Dm}
\begin{split}
\left|D_m\right|&=\left|\sum_{j=1}^n\left(\sum_{t\in\mathcal{T}_m}\nabla\widehat{f}_{t,j,\delta}(\x_{j}(m))-\widehat{\g}_j(m)\right)^\top(\bar{\x}(m)-\widetilde{\x}^\ast)\right|\\
&\leq\sum_{j=1}^n\left\|\sum_{t\in\mathcal{T}_m}\nabla\widehat{f}_{t,j,\delta}(\x_{j}(m))-\widehat{\g}_j(m)\right\|_2\left\|(\bar{\x}(m)-\widetilde{\x}^\ast)\right\|_2\\
&\leq2R\sum_{j=1}^n\left(\left\|\sum_{t\in\mathcal{T}_m}\nabla\widehat{f}_{t,j,\delta}(\x_{j}(m))\right\|_2+\left\|\widehat{\g}_j(m)\right\|_2\right)\\
&\leq2R\sum_{j=1}^n\sum_{t\in\mathcal{T}_m}\left\|\nabla\widehat{f}_{t,j,\delta}(\x_{j}(m))\right\|_2+2nR\widehat{G}\\
&\leq2nRKG+2nR\widehat{G}
\end{split}
\end{equation}
where the second inequality is due to Assumption \ref{assum1}, and the last inequality is due to Lemma \ref{smoothed_lem1} and $|\mathcal{T}_m|=K$.

Then, by applying Lemma \ref{azuma} with $\Delta=2nR(KG+\widehat{G})\sqrt{2B\ln\frac{1}{\gamma}}$, with probability at least $1-\gamma$, we have
\begin{equation}
\label{lem2_azuma_eq2}
\sum_{m=1}^BD_m\leq\Delta=2nR(KG+\widehat{G})\sqrt{2B\ln\frac{1}{\gamma}}.
\end{equation}
Additionally, by combining (\ref{lem2_azuma_eq1}) with $\bar{\g}(m)=\frac{1}{n}\sum_{i=1}^n\widehat{\g}_i(m)$, we further have
\begin{equation}
\label{lem2_azuma_eq3}
\sum_{m=1}^B\sum_{t\in\mathcal{T}_m}\sum_{j=1}^n\nabla\widehat{f}_{t,j,\delta}(\x_{j}(m))^\top(\bar{\x}(m)-\widetilde{\x}^\ast)=\sum_{m=1}^BD_m+n\sum_{m=1}^B\bar{\g}(m)^\top(\bar{\x}(m)-\widetilde{\x}^\ast).
\end{equation}
Therefore, we still need to bound $\sum_{m=1}^B\bar{\g}(m)^\top(\bar{\x}(m)-\widetilde{\x}^\ast)$. According to Assumption \ref{assum5}, it is easy to verify that Algorithm \ref{DBBCG-SC} with $\alpha=0$ ensures
\begin{equation*}
\begin{split}
\bar{\z}(m+1)=&\frac{1}{n}\sum_{i=1}^n\z_i(m+1)=\frac{1}{n}\sum_{i=1}^n\left(\sum_{j\in N_i}P_{ij}\z_{j}(m)+\widehat{\g}_{i}(m)\right)\\
=&\frac{1}{n}\sum_{i=1}^n\sum_{j=1}^nP_{ij}\z_{j}(m)+\bar{\g}(m)=\frac{1}{n}\sum_{j=1}^n\sum_{i=1}^nP_{ij}\z_{j}(m)+\bar{\g}(m)\\
=&\bar{\z}(m)+\bar{\g}(m)=\sum_{s=1}^{m}\bar{\g}(s).
\end{split}
\end{equation*}
Moreover, according to the definition, for any $m\in[B+1]$, we have \[\bar{\x}(m+1)=\argmin\limits_{\x\in\K_\delta}
\bar{F}_{m}(\x)=\argmin\limits_{\x\in\K_\delta}\bar{\z}(m)^{\top}\mathbf{x}+h\|\mathbf{x}-\x_{\ii}\|_2^2.\]
By applying Lemma \ref{ftrl1} with the linear loss functions $\{\bar{\g}(m)^\top\x\}_{m=1}^B$, the decision set $\K=\K_\delta$ and the regularizer $\mathcal{R}(\x)=h\|\mathbf{x}-\x_{\ii}\|_2^2$, we have
\begin{equation}
\label{lem2_azuma_eq4}
\begin{split}
\sum_{m=1}^B\bar{\g}(m)^\top(\bar{\x}(m+1)-\widetilde{\x}^\ast)&\leq h\|\widetilde{\x}^\ast-\x_{\ii}\|_2^2+\sum_{m=1}^B\bar{\g}(m)^\top(\bar{\x}(m+1)-\bar{\x}(m+2))\\
&\leq 4hR^2+\sum_{m=1}^B\|\bar{\g}(m)\|_2\|\bar{\x}(m+1)-\bar{\x}(m+2)\|_2
\end{split}
\end{equation}
where the last inequality is due to Assumption \ref{assum1}.

Note that $\bar{F}_{m+1}(\x)$ is $2h$-strongly convex and $\bar{\x}(m+2)=\argmin_{\x\in\K_\delta}\bar{F}_{m+1}(\x)$. For any $m\in[B]$, we have
\begin{equation*}
\begin{split}
&h\|\bar{\x}(m+1)-\bar{\x}(m+2)\|_2^2\\
\leq& \bar{F}_{m+1}(\bar{\x}(m+1))-\bar{F}_{m+1}(\bar{\x}(m+2))\\
=&\bar{F}_{m}(\bar{\x}(m+1))+\bar{\g}(m)^\top\bar{\x}(m+1)-\bar{F}_{m}(\bar{\x}(m+2))-\bar{\g}(m)^\top\bar{\x}(m+2)\\
\leq&\|\bar{\g}(m)\|_2\|\bar{\x}(m+1)-\bar{\x}(m+2)\|_2
\end{split}
\end{equation*}
where the first inequality is due to (\ref{cor_scvx}) and the second inequality is due to $\bar{\x}(m+1)=\argmin_{\x\in\K_\delta}\bar{F}_{m}(\x)$.

The above inequality implies that for any $m\in[B]$, it holds that
\[\|\bar{\x}(m+1)-\bar{\x}(m+2)\|_2^2\leq\frac{\|\bar{\g}(m)\|_2}{h}.\]
By combining with (\ref{lem2_azuma_eq4}), we have
\begin{equation}
\label{lem2_azuma_eq5}
\begin{split}
&\sum_{m=1}^B\bar{\g}(m)^\top(\bar{\x}(m)-\widetilde{\x}^\ast)\\
=&\sum_{m=1}^B\bar{\g}(m)^\top(\bar{\x}(m)-\bar{\x}(m+1))+\sum_{m=1}^B\bar{\g}(m)^\top(\bar{\x}(m+1)-\widetilde{\x}^\ast)\\
\leq& \sum_{m=1}^B\|\bar{\g}(m)\|_2\|\bar{\x}(m)-\bar{\x}(m+1)\|_2+4hR^2+\sum_{m=1}^B\|\bar{\g}(m)\|_2\|\bar{\x}(m+1)-\bar{\x}(m+2)\|_2\\
\leq& 4hR^2+\frac{1}{h}\sum_{m=2}^B\|\bar{\g}(m)\|_2\|\bar{\g}(m-1)\|_2+\|\bar{\g}(1)\|_2\|\bar{\x}(1)-\bar{\x}(2)\|_2+\frac{1}{h}\sum_{m=1}^B\|\bar{\g}(m)\|_2^2\\
\leq& 4hR^2+\frac{1}{h}\sum_{m=2}^B\|\bar{\g}(m)\|_2\|\bar{\g}(m-1)\|_2+\frac{1}{h}\sum_{m=1}^B\|\bar{\g}(m)\|_2^2
\end{split}
\end{equation}
where the last inequality is due to $\bar{\x}(1)=\x_{\ii}$ and $\bar{\x}(2)=\argmin_{\x\in\K_\delta}
\bar{F}_{1}(\x)=\x_{\ii}$.

Since $\|\widehat{\g}_i(m)\|_2\leq \widehat{G}$, for any $m\in[B]$, we also have
\begin{equation}
\label{thm2_eq_azuma_2}\|\bar{\g}(m)\|_2=\left\|\frac{1}{n}\sum_{i=1}^n\widehat{\g}_i(m)\right\|_2\leq\frac{1}{n}\sum_{i=1}^n\|\widehat{\g}_i(m)\|_2\leq \widehat{G}.
\end{equation}
By substituting (\ref{thm2_eq_azuma_2}) into (\ref{lem2_azuma_eq5}), we have
\begin{equation}
\label{lem2_azuma_eq6}
\begin{split}
\sum_{m=1}^B\bar{\g}(m)^\top(\bar{\x}(m)-\widetilde{\x}^\ast)\leq 4hR^2+\frac{(2B-1)\widehat{G}^2}{h}\leq 4hR^2+\frac{2B\widehat{G}^2}{h}.
\end{split}
\end{equation}
Finally, by substituting (\ref{lem2_azuma_eq2}) and (\ref{lem2_azuma_eq6}) into (\ref{lem2_azuma_eq3}), we complete the proof.
\section{Proof of Lemma \ref{lem_gradient}}
This proof is inspired by the proof of Theorem 12 in \citet{Best11}, which gave the classical Bernstein inequality for independent vector-valued random variables. However, the vector-valued random variables in this proof are only conditionally independent, and we do not need to use the Bernstein inequality to incorporate the variance information.

According to Algorithm \ref{DBBCG-SC}, for any $i\in V$ and $m=1,\dots,B$, conditioned on $\x_i(m)$,
\[\g_i((m-1)K+1),\dots,\g_i(mK)\]
are $K$ independent random vectors. For brevity, for $j=1,\dots,K$, let \[X_j=\g_i(t_j)\]
where $t_j=(m-1)K+j$, and let $N=\left\|\sum_{j=1}^KX_j\right\|_2$, $\widehat{S}_j=\sum_{k\neq j}X_k$.

To bound $N$ by using Lemma \ref{azuma}, we define $\mathbf{X}_0=\{\x_i(m)\}$, $\mathbf{X}_j=\{\x_i(m),X_1,\dots,X_j\}$ for $j\geq1$ and a sequence $D_1,\dots,D_K$ as
\[D_j=\E[N|\mathbf{X}_j]-\E[N|\mathbf{X}_{j-1}].\]
It is not hard to verify that
\[\E[D_{j}|\mathbf{X}_{j-1}]=\E[\E[N|\mathbf{X}_j]-\E[N|\mathbf{X}_{j-1}]|\mathbf{X}_{j-1}]=0\]
which implies that $D_1,\dots,D_K$ is a martingale difference sequence.

Then, using the triangle inequality, we have
\begin{equation}
\label{lem5_eq2}
N\leq\|\widehat{S}_j\|_2+\|X_j\|_2 \text{ and } N\geq\|\widehat{S}_j\|_2-\|X_j\|_2.
\end{equation}
Moreover, according to the Algorithm \ref{DBBCG-SC} and Assumption \ref{assum2}, we have
\begin{equation*}
\begin{split}
\|X_j\|_2=\left\|\frac{d}{\delta}f_{t_j,i}(\y_{i}(t_j))\uu_{i}(t_j)\right\|_2\leq\frac{dM}{\delta}.
\end{split}
\end{equation*}
Therefore, by combining with (\ref{lem5_eq2}), we have
\begin{equation}
\label{eq1-R1}
N\leq\|\widehat{S}_j\|_2+\frac{dM}{\delta} \text{ and } N\geq\|\widehat{S}_j\|_2-\frac{dM}{\delta}.
\end{equation}
Then, we have
\begin{align*}
D_j\leq\E[\|\widehat{S}_j\|_2|\mathbf{X}_j]+\frac{dM}{\delta}-\E[\|\widehat{S}_j\|_2|\mathbf{X}_{j-1}]+\frac{dM}{\delta}=\frac{2dM}{\delta}
\end{align*}
and
\begin{align*}
D_j\geq&\E[\|\widehat{S}_j\|_2|\mathbf{X}_j]-\frac{dM}{\delta}-\E[\|\widehat{S}_j\|_2|\mathbf{X}_{j-1}]-\frac{dM}{\delta}=-\frac{2dM}{\delta}
\end{align*}
where the above two equalities are due to $\E[\|\widehat{S}_j\|_2|\mathbf{X}_j]=\E[\|\widehat{S}_j\|_2|\mathbf{X}_{j-1}]$, because $\widehat{S}_j$ dose not depend on $X_j$ given $\x_i(m)$. Therefore, we have $|D_j|\leq\frac{2dM}{\delta}$.

Let $\Delta=\frac{\sqrt{K}dM}{\delta}\sqrt{8\ln\frac{nB}{\gamma}}$. Then, by applying Lemma \ref{azuma}, with probability at least $1-\frac{\gamma}{nB}$, we have
\begin{align*}
N-\E[N|\x_i(m)]=\E[N|\mathbf{X}_K]-\E[N|\mathbf{X}_0]=\sum_{j=1}^KD_j&\leq\frac{\sqrt{K}dM}{\delta}\sqrt{8\ln\frac{nB}{\gamma}}
\end{align*}
which implies that
\[
\|\widehat{\g}_i(m)\|_2=N\leq\frac{\sqrt{K}dM}{\delta}\sqrt{8\ln\frac{nB}{\gamma}}+\E[N|\x_i(m)]\leq\frac{\sqrt{K}dM}{\delta}\sqrt{8\ln\frac{nB}{\gamma}}+\sqrt{\E[N^2|\x_i(m)]}.
\]
where the last inequality is due to Jensen's inequality.

By combining the above inequality with $N^2=\|\widehat{\g}_i(m)\|_2^2$ and (\ref{eq_EN}), with probability at least $1-\frac{\gamma}{nB}$, we have
\begin{align*}
\|\widehat{\g}_i(m)\|_2\leq\left(1+\sqrt{8\ln\frac{nB}{\gamma}}\right)\frac{dM\sqrt{K}}{\delta}+KG.
\end{align*}
Finally, by using the union bound, we complete the proof for all $i\in V$ and $m=1,\dots,B$.

\vskip 0.2in
\bibliography{ref}

\end{document}